\newcommand{\A}{\ensuremath{\mathbf{A}}}
\newcommand{\C}{\ensuremath{\mathbf{C}}}
\newcommand{\I}{\ensuremath{\mathbf{I}}}
\newcommand{\M}{\ensuremath{\mathbf{M}}}
\newcommand{\N}{\ensuremath{\mathbf{N}}}
\newcommand{\Q}{\ensuremath{\mathbf{Q}}}
\newcommand{\T}{\ensuremath{\mathbf{T}}}
\newcommand{\U}{\ensuremath{\mathbf{U}}}
\newcommand{\X}{\ensuremath{\mathbf{X}}}
\newcommand{\Y}{\ensuremath{\mathbf{Y}}}
\renewcommand{\aa}{\ensuremath{\mathbf{a}}}
\renewcommand{\b}{\ensuremath{\mathbf{b}}}
\newcommand{\e}{\ensuremath{\mathbf{e}}}
\newcommand{\g}{\ensuremath{\mathbf{g}}}
\newcommand{\h}{\ensuremath{\mathbf{h}}}
\newcommand{\rr}{\ensuremath{\mathbf{r}}}
\newcommand{\uu}{\ensuremath{\mathbf{u}}}
\newcommand{\vv}{\ensuremath{\mathbf{v}}}
\newcommand{\w}{\ensuremath{\mathbf{w}}}
\newcommand{\x}{\ensuremath{\mathbf{x}}}
\newcommand{\y}{\ensuremath{\mathbf{y}}}
\newcommand{\z}{\ensuremath{\mathbf{z}}}
\newcommand{\0}{\ensuremath{\mathbf{0}}}
\newcommand{\bphi}{\ensuremath{\boldsymbol{\phi}}}
\newcommand{\bpsi}{\ensuremath{\boldsymbol{\psi}}}
\newcommand{\bSigma}{\ensuremath{\boldsymbol{\Sigma}}}
\newcommand{\bbE}{\ensuremath{\mathbb{E}}}
\newcommand{\bbR}{\ensuremath{\mathbb{R}}}
\newcommand{\calO}{\ensuremath{\mathcal{O}}}
\newcommand{\abs}[1]{\left\lvert#1\right\rvert}
\newcommand{\norm}[1]{\left\lVert#1\right\rVert}
\newcommand{\caja}[4][1]{{%
    \renewcommand{\arraystretch}{#1}%
    \begin{tabular}[#2]{@{}#3@{}}%
      #4%
    \end{tabular}%
    }}
\theoremstyle{plain}
\newtheorem*{prop*}{Proposition}
\theoremstyle{definition}
\newtheorem*{defn*}{Definition}
\newtheorem*{exmp*}{Example}
\newtheorem*{conj*}{Conjecture}
\theoremstyle{remark}
\newtheorem*{rmk*}{Remark}
\newcommand{\ie}{i.e.\@}
\newcommand{\eg}{e.g.\@}
\newcommand{\Sxy}{\bSigma_{xy}}
\newcommand{\Sxx}{\bSigma_{xx}}
\newcommand{\Syy}{\bSigma_{yy}}
\newcommand{\tu}{\tilde{\uu}}
\newcommand{\tv}{\tilde{\vv}}
\newcommand{\bu}{\bar{\uu}}
\newcommand{\bv}{\bar{\vv}}
\newcommand{\hu}{\hat{\uu}}
\newcommand{\hv}{\hat{\vv}}
\newcommand{\tphi}{\tilde{\bphi}}
\newcommand{\tpsi}{\tilde{\bpsi}}
\newcommand{\te}{\tilde{\epsilon}}
\newtheorem{theorem}{\textbf{Theorem}}
\newtheorem{lemma}[theorem]{\textbf{Lemma}}
\DeclarePairedDelimiter{\ceil}{\lceil}{\rceil}
\newcommand{\AppGrad}{\texttt{AppGrad}}
\newcommand{\sAppGrad}{\texttt{s-AppGrad}}
\newcommand{\CCALin}{\texttt{CCALin}}
\newcommand{\ALSVR}{\texttt{ALS-VR}}
\newcommand{\ALSAVR}{\texttt{ALS-AVR}}
\newcommand{\ShiftVR}{\texttt{SI-VR}}
\newcommand{\ShiftAVR}{\texttt{SI-AVR}}
\title{Efficient Globally Convergent Stochastic Optimization for Canonical Correlation Analysis}
\author{  
  Weiran Wang$^{1*}$ \hspace{2em} Jialei Wang$^2$\thanks{The first two authors contributed equally.} \hspace{2em} Dan Garber$^1$ \hspace{2em} Nathan Srebro$^1$ \\
  $^1$Toyota Technological Institute at Chicago \hspace{4em} $^2$University of Chicago  \\  
  \texttt{\{weiranwang,dgarber,nati\}@ttic.edu} \hspace{3em} \texttt{jialei@uchicago.edu}
}
\begin{document}

\maketitle

\begin{abstract}
  We study the stochastic optimization of canonical correlation analysis (CCA), whose objective is nonconvex and does not decouple over training samples. 
  Although several stochastic gradient based optimization algorithms have been recently proposed to solve this problem, no global convergence guarantee was provided by any of them.
  Inspired by the alternating least squares/power iterations formulation of CCA, and the shift-and-invert preconditioning method for PCA, we propose two globally convergent meta-algorithms for CCA, both of which transform the original problem into sequences of least squares problems that need only be solved approximately. We instantiate the meta-algorithms with state-of-the-art SGD methods and obtain time complexities that significantly improve upon that of previous work. 
  Experimental results demonstrate their superior performance.
\end{abstract}

\section{Introduction}
\label{sec:intro}
\vspace*{-2ex}

Canonical correlation analysis (CCA,~\cite{Hotell36a}) and its extensions are ubiquitous techniques in scientific research areas for revealing the common sources of variability in multiple views of the same phenomenon. In CCA, the training set consists of paired observations from two views, denoted ${(\x_1,\y_1),\dots,(\x_N,\y_N)}$, where $N$ is the training set size, $\x_i\in \bbR^{d_x}$ and $\y_i\in \bbR^{d_y}$ for $i=1,\dots,N$. We also denote the data matrices for each view\footnote{We assume that $\X$ and $\Y$ are centered at the origin for notational simplicity; if they are not, we can center them as a pre-processing operation.} by $\X=[\x_1,\dots,\x_N] \in \bbR^{d_x \times N}$ and $\Y=[\y_1,\dots,\y_N] \in \bbR^{d_y \times N}$, and $d:=d_x + d_y$. 
The objective of CCA is to find linear projections of each view such that the correlation between the projections is maximized:
\begin{gather} \label{e:cca}
  \max_{\uu,\vv} \quad \uu^\top \Sxy \vv  \qquad\quad  \text{s.t.} \quad \uu^\top \Sxx \uu = \vv^\top \Syy \vv = 1 
\end{gather}
where $\Sxy= \frac{1}{N} \X\Y^\top$ is the cross-covariance matrix, $\Sxx= \frac{1}{N} \X\X^\top + \gamma_x \I$ and $\Syy= \frac{1}{N} \Y\Y^\top + \gamma_y \I$ are the auto-covariance matrices, and $(\gamma_x, \gamma_y)\ge 0$ are regularization parameters~\cite{Vinod76a}. 

We denote by $(\uu^*,\vv^*)$ the global optimum of~\eqref{e:cca}, which can be computed in closed-form. Define 
\begin{align}\label{e:def-T}
  \T:=\Sxx^{-\frac{1}{2}} \Sxy \Syy^{-\frac{1}{2}} \ \in \bbR^{d_x \times d_y},
\end{align}
and let $(\bphi,\bpsi)$ be the (unit-length) left and right singular vector pair associated with $\T$'s largest singular value $\rho_1$. Then the optimal objective value, \ie, the canonical correlation between the views, is $\rho_1$, achieved by $(\uu^*,\,\vv^*)=(\Sxx^{-\frac{1}{2}} \bphi,\,\Syy^{-\frac{1}{2}} \bpsi )$. Note that 
\begin{align*}
  \rho_1=\norm{\T}\le \norm{\Sxx^{-\frac{1}{2}} \X} \norm{\Syy^{-\frac{1}{2}} \Y}\le 1 .
\end{align*}
Furthermore, we are guaranteed to have $\rho_1 < 1$ if $(\gamma_x, \gamma_y) > 0$.

For large and high dimensional datasets, it is time and memory consuming to first explicitly form the matrix $\T$ (which requires eigen-decomposition of the covariance matrices) and then compute its singular value decomposition (SVD). For such datasets, it is desirable to develop stochastic algorithms that have efficient updates, converges fast, and takes advantage of the input sparsity. There have been recent attempts to solve~\eqref{e:cca} based on stochastic gradient descent (SGD) methods~\cite{Ma_15b,Wang_15c,Xie_15b}, but none of these work provides rigorous convergence analysis for their stochastic CCA algorithms.

The main contribution of this paper is the proposal of two globally convergent meta-algorithms for solving~\eqref{e:cca}, namely, alternating least squares (ALS, Algorithm~\ref{alg:meta-alsvr}) and shift-and-invert preconditioning (SI, Algorithm~\ref{alg:meta-shift-and-invert}), both of which transform the original problem~\eqref{e:cca} into sequences of least squares problems that need only be solved approximately. We instantiate the meta algorithms with state-of-the-art SGD methods and obtain efficient stochastic optimization algorithms for CCA.

In order to measure the alignments between an approximate solution $(\uu,\vv)$ and the optimum $(\uu^*,\vv^*)$, we assume that $\T$ has a positive singular value gap $\Delta := \rho_1 - \rho_2 \in (0,1]$ so its top left and right singular vector pair is unique (up to a change of sign). 

Table~\ref{t:final} summarizes the time complexities of several algorithms for achieving $\eta$-suboptimal alignments, where $\tilde{\kappa} = \frac{\max\limits_i\, \max \left(\norm{\x_i}^2,\, \norm{\y_i}^2 \right)}{\min \left( \sigma_{\min}(\Sxx),\, \sigma_{\min}(\Syy) \right) }$ is the upper bound of condition numbers of least squares problems solved in all cases.\footnote{For the ALS meta-algorithm, its enough to consider a per-view conditioning. And when using AGD as the least squares solver, the time complexities dependends on $\sigma_{\max} (\Sxx)$ instead, which is less than $\max_i \norm{x_i}^2$.} 
We use the notation $\tilde{\calO}(\cdot)$ to hide poly-logarithmic dependencies (see Sec.~\ref{sec:svrg} and Sec.~\ref{sec:meta-shift-and-invert-time} for the hidden factors). 
Each time complexity may be preferrable in certain regime depending on the parameters of the problem.

\begin{table}[t]
  \caption{Time complexities of different algorithms for achieving $\eta$-suboptimal solution $(\uu,\vv)$ to CCA, \ie, $\min\left( (\uu^\top \Sxx \uu^*)^2,\ (\vv^\top \Syy \vv^*)^2 \right) \ge 1 - \eta$. GD=gradient descent, AGD=accelerated GD, SVRG=stochastic variance reduced gradient, ASVRG=accelerated SVRG. Note ASVRG provides speedup over SVRG only when $\tilde{\kappa} > N$, and we show the dominant term in its complexity.}
  \label{t:final}
  \centering
  \begin{tabular}{|c|c|l|} \hline
    Algorithm  & Least squares solver & \hspace*{4em} Time complexity \\ \hline \hline
    \AppGrad~\cite{Ma_15b} & GD & $\tilde{\calO} \left( d N \tilde{\kappa} \frac{\rho_1^2}{\rho_1^2 - \rho_2^2} \cdot \log \left(\frac{1}{\eta}\right) \right)  $ \ \ \ \ (local)\\ \hline
    \CCALin~\cite{Ge_16a} & AGD & $\tilde{\calO} \left( d N \sqrt{\tilde{\kappa}} \frac{\rho_1^2}{\rho_1^2 - \rho_2^2}  \cdot \log \left(\frac{1}{\eta}\right) \right)$ \\ \hline
    \multirow{3}{*}{\caja{c}{c}{This work:\\ Alternating least\\ squares (ALS)}}    & AGD     & $\tilde{\calO} \left( d N \sqrt{\tilde{\kappa}} \left( \frac{\rho_1^2}{\rho_1^2 - \rho_2^2}\right)^2 \cdot \log^2 \left(\frac{1}{\eta}\right) \right)$ \\
    & SVRG    & $\tilde{\calO} \left( d (N + \tilde{\kappa}) \left( \frac{\rho_1^2}{\rho_1^2 - \rho_2^2} \right)^2 \cdot \log^2 \left(\frac{1}{\eta}\right) \right)$ \\
    & ASVRG   & $\tilde{\calO} \left( d \sqrt{N \tilde{\kappa}} \left( \frac{\rho_1^2}{\rho_1^2 - \rho_2^2}\right)^2 \cdot \log^2 \left(\frac{1}{\eta}\right) \right)$ \\ \hline
    \multirow{3}{*}{\caja{c}{c}{This work:\\ Shift-and-invert \\ preconditioning (SI)}}   & AGD     & $\tilde{\calO} \left( d N \sqrt{\tilde{\kappa}} \sqrt{\frac{1}{\rho_1 - \rho_2}} \cdot \log^2 \left(\frac{1}{\eta}\right)  \right)$ \\
    & SVRG    &  $\tilde{\calO} \left( d \left( N + (\tilde{\kappa} \frac{1}{\rho_1 - \rho_2})^2 \right) \cdot \log^2 \left(\frac{1}{\eta}\right)  \right)$ \\
    & ASVRG  &  $\tilde{\calO} \left( d N^{\frac{3}{4}} \sqrt{\tilde{\kappa}} \sqrt{\frac{1}{\rho_1-\rho_2}} \cdot \log^2 \left(\frac{1}{\eta}\right) \right)$ \\
    \hline
  \end{tabular}
\vspace*{-0ex}
\end{table}

\textbf{Notations\ }  We use $\sigma_i (\A)$ to denote the $i$-th largest singular value of a matrix $\A$, and use $\sigma_{\max} (\A)$ and $\sigma_{\min} (\A)$ to denote the largest and smallest singular values of $\A$ respectively.

\section{Motivation: Alternating least squares}
\label{sec:motivation}
\vspace*{-2ex}

\begin{algorithm}[t]
  \caption{Alternating least squares for CCA.}
  \label{alg:als}
  \renewcommand{\algorithmicrequire}{\textbf{Input:}}
  \renewcommand{\algorithmicensure}{\textbf{Output:}}
  \begin{algorithmic}
    \REQUIRE Data matrices $\X \in \bbR^{d_x \times N}$, $\Y \in \bbR^{d_y \times N}$, regularization parameters $(\gamma_x, \gamma_y)$.
    \STATE Initialize $\tu_0 \in \bbR^{d_x},\quad \tv_0 \in \bbR^{d_y}$. \hfill \COMMENT{\tphi_0,\ \tpsi_0}
    \STATE $\uu_0 \leftarrow \tu_0 / \sqrt{\tu_0^\top \Sxx \tu_0},\quad \vv_0 \leftarrow \tv_0 / \sqrt{\tv_0^\top \Syy \tv_0}$ \hfill \COMMENT{\bphi_0 \leftarrow \tphi_0 / \norm{\tphi_0},\ \bpsi_0 \leftarrow \tpsi_0 / \norm{\tpsi_0}}
    \FOR{$t=1,2,\dots,T$}
    \STATE $\tu_{t} \leftarrow \Sxx^{-1} \Sxy \vv_{t-1}$ \hfill \COMMENT{\tphi_{t} \leftarrow  \Sxx^{-\frac{1}{2}} \Sxy \Syy^{-\frac{1}{2}} \bpsi_{t-1}}
    \STATE $\tv_{t} \leftarrow \Syy^{-1} \Sxy^\top \uu_{t-1}$  \hfill \COMMENT{\tpsi_{t} \leftarrow  \Syy^{-\frac{1}{2}} \Sxy^\top \Sxx^{-\frac{1}{2}} \bphi_{t-1}}
    \STATE $\uu_{t} \leftarrow \tu_{t} / \sqrt{\tu_{t}^\top \Sxx \tu_{t}},\quad \vv_{t} \leftarrow \tv_{t} / \sqrt{\tv_{t}^\top \Syy \tv_{t}}$ \hfill \COMMENT{\bphi_{t} \leftarrow \tphi_{t} / \norm{\tphi_{t}},\ \bpsi_{t} \leftarrow \tpsi_{t} / \norm{\tpsi_{t}}}
    \ENDFOR  
    \ENSURE $(\uu_{T},\vv_{T}) \rightarrow (\uu^*,\vv^*)$ as $T \rightarrow \infty$. \hfill \COMMENT{\texttt{$(\bphi_{T},\bpsi_{T}) \rightarrow (\bphi,\bpsi)$}}
  \end{algorithmic}
\end{algorithm}

Our solution to~\eqref{e:cca} is inspired by the alternating least squares (ALS) formulation of CCA~\cite[Algorithm 5.2]{GolubZha95a}, as shown in Algorithm~\ref{alg:als}. 
Let the nonzero singular values of $\T$ be $1 \ge \rho_1 \ge \rho_2 \ge \dots \ge \rho_r >0$, where $r = \text{rank}(\T) \le \min(d_x,d_y)$, and the corresponding (unit-length) left and right singular vector pairs be $(\aa_1,\b_1),\dots,(\aa_r,\b_r)$, with $\aa_1$=$\bphi$ and $\b_1=\bpsi$. 
Define 
\begin{align} \label{e:C-def}
  \C = \left[
    \begin{array}{cc}
      \0 & \T \\
      \T^\top & \0 
    \end{array}
  \right] \in \bbR^{ d \times d }.
\end{align}
It is straightforward to check that the nonzero eigenvalues of $\C$ are:
\begin{align*}
  \rho_1 \ge \dots \ge \rho_r \ge -\rho_r \ge \dots \ge -\rho_1,
\end{align*}
with corresponding eigenvectors 
$
  \frac{1}{\sqrt{2}}
  \left[ \begin{array}{c} \aa_1\\ \b_1 \end{array} \right], \ \dots,\ 
  \frac{1}{\sqrt{2}} 
  \left[ \begin{array}{c} \aa_r\\ \b_r \end{array} \right], \ 
  \frac{1}{\sqrt{2}} 
  \left[ \begin{array}{c} \aa_r\\ - \b_r \end{array} \right], \ \dots,\  
  \frac{1}{\sqrt{2}} 
  \left[ \begin{array}{c} \aa_1\\ - \b_1 \end{array} \right].
$

The key observation is that Algorithm~\ref{alg:als} effectively runs a variant of power iterations on $\C$ to extract its top eigenvector. To see this, make the following change of variables 
\begin{align} \label{e:change-variable}
\bphi_{t}=\Sxx^{\frac{1}{2}} \uu_{t}, \qquad \bpsi_{t}=\Syy^{\frac{1}{2}} \vv_{t}, \qquad 
\tphi_{t}=\Sxx^{\frac{1}{2}} \tu_{t}, \qquad \tpsi_{t}=\Syy^{\frac{1}{2}} \tv_{t}.
\end{align}
Then we can equivalently rewrite the steps of Algorithm~\ref{alg:als} in the new variables as in $\{\}$ of each line.

Observe that the iterates are updated as follows from step $t-1$ to step $t$:
\begin{align}
  \left[ \begin{array}{c} \tphi_{t} \\ \tpsi_{t} \end{array} \right] \leftarrow
  \left[
    \begin{array}{cc}
      \0 & \T \\
      \T^\top & \0 
    \end{array}
  \right] \left[ \begin{array}{c} \bphi_{t-1} \\ \bpsi_{t-1} \end{array} \right], \qquad
  \left[ \begin{array}{c} \bphi_{t} \\ \bpsi_{t} \end{array} \right] \leftarrow
  \left[ \begin{array}{c} \tphi_{t}/ ||\tphi_{t}|| \\ \tpsi_{t}/ ||\tpsi_{t}|| \end{array} \right].
\end{align}
Except for the special normalization steps which rescale the two sets of variables separately, Algorithm~\ref{alg:als} is very similar to the power iterations~\cite{GolubLoan96a}.

We show the convergence rate of ALS below (see its proof in Appendix~\ref{append:proof-alternating-least-squares-exact}). The first measure of progress is the alignment of $\bphi_{t}$ to $\bphi$ and the alignment of $\bpsi_{t}$ to $\bpsi$, \ie, $(\bphi_{t}^\top \bphi)^2=(\uu_{t}^\top \Sxx \uu^*)^2$ and $(\bpsi_{t}^\top \bpsi)^2=(\vv_{t}^\top \Syy \vv^*)^2$. The maximum value for such alignments is $1$, achieved when the iterates completely align with the optimal solution. 
The second natural measure of progress is the objective of \eqref{e:cca}, \ie, $\uu_t^\top \Sxy \vv_t$, with the maximum value being $\rho_1$. 

\begin{theorem}[Convergence of Algorithm~\ref{alg:als}] \label{thm:alternating-least-squares-exact}
  Let $\mu:=\min\left( (\uu_0^\top \Sxx \uu^*)^2,\ (\vv_0^\top \Syy \vv^*)^2 \right) > 0$.\footnote{One can show that $\mu$ is bounded away from $0$ with high probability using random initialization $(\uu_0,\,\vv_0)$.} Then for $t \ge \ceil{ \frac{\rho_1^2}{\rho_1^2-\rho_2^2} \log\left( \frac{1}{\mu \eta} \right) }$, we have in Algorithm~\ref{alg:als} that $\min\left( (\uu_{t}^\top \Sxx \uu^*)^2,\ (\vv_{t}^\top \Syy \vv^*)^2 \right) \ge 1 - \eta$, and $\uu_t^\top \Sxy \vv_t \ge \rho_1 (1 - 2 \eta)$.
\end{theorem}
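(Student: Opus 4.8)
The plan is to run the whole argument in the transformed coordinates $\bphi_t=\Sxx^{\frac12}\uu_t$, $\bpsi_t=\Syy^{\frac12}\vv_t$ of~\eqref{e:change-variable}, in which, as already derived in the text, one ALS step is $\tphi_t=\T\bpsi_{t-1}$, $\tpsi_t=\T^\top\bphi_{t-1}$ followed by the \emph{separate} renormalizations $\bphi_t=\tphi_t/\norm{\tphi_t}$ and $\bpsi_t=\tpsi_t/\norm{\tpsi_t}$. Recall $\bphi=\aa_1$, $\bpsi=\b_1$, that $(\uu_t^\top\Sxx\uu^*)^2=(\bphi_t^\top\bphi)^2$ and $(\vv_t^\top\Syy\vv^*)^2=(\bpsi_t^\top\bpsi)^2$, and $\uu_t^\top\Sxy\vv_t=\bphi_t^\top\T\bpsi_t$. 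Using $\T\b_i=\rho_i\aa_i$ and $\T^\top\aa_i=\rho_i\b_i$ I would expand $\bphi_t$ in $\{\aa_i\}$ plus a $\ker\T^\top$ component and $\bpsi_t$ in $\{\b_i\}$ plus a $\ker\T$ component; the first point to record is that $\T^\top$ and $\T$ annihilate those kernel components, so for every $t\ge1$ the iterate $\bphi_t$ lies in $\spn{\aa_1,\dots,\aa_r}$, $\bpsi_t$ in $\spn{\b_1,\dots,\b_r}$, and writing $\bphi_t=\sum_i c_i^{(t)}\aa_i$, $\bpsi_t=\sum_i d_i^{(t)}\b_i$ the coefficient vectors $c^{(t)},d^{(t)}$ are \emph{unit} vectors in $\bbR^r$.

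The heart of the proof is a coupled pair of power-iteration ratio recursions. From the renormalized update, $c_i^{(t)}=\rho_i d_i^{(t-1)}/\sqrt{\sum_j\rho_j^2(d_j^{(t-1)})^2}$ and $d_i^{(t)}=\rho_i c_i^{(t-1)}/\sqrt{\sum_j\rho_j^2(c_j^{(t-1)})^2}$; the normalizing constant cancels in the ratios $\alpha_t:=\sum_{i\ge2}(c_i^{(t)})^2/(c_1^{(t)})^2$ and $\beta_t:=\sum_{i\ge2}(d_i^{(t)})^2/(d_1^{(t)})^2$ (finite because $\mu>0$ forces $c_1^{(0)},d_1^{(0)}\ne0$, a property the recursion preserves), and bounding $\rho_i\le\rho_2$ for $i\ge2$ yields $\alpha_t\le(\rho_2^2/\rho_1^2)\,\beta_{t-1}$ and $\beta_t\le(\rho_2^2/\rho_1^2)\,\alpha_{t-1}$, hence $\max(\alpha_t,\beta_t)\le(\rho_2^2/\rho_1^2)^t\max(\alpha_0,\beta_0)$. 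I would then bound the initialization: since $\norm{\bphi_0}=1$, $\alpha_0\le\bigl(1-(c_1^{(0)})^2\bigr)/(c_1^{(0)})^2\le 1/\mu-1<1/\mu$, and likewise $\beta_0<1/\mu$. For $t\ge1$, using $\norm{\bphi_t}=1$ I get $(\bphi_t^\top\bphi)^2=(c_1^{(t)})^2=1/(1+\alpha_t)\ge 1-\alpha_t\ge 1-(\rho_2^2/\rho_1^2)^t/\mu$, and symmetrically for $\bpsi_t$; this is $\ge1-\eta$ once $(\rho_1^2/\rho_2^2)^t\ge1/(\mu\eta)$, and since $\log(\rho_1^2/\rho_2^2)=-\log\!\bigl(1-(\rho_1^2-\rho_2^2)/\rho_1^2\bigr)\ge(\rho_1^2-\rho_2^2)/\rho_1^2$, the integer threshold $t\ge\ceil{\frac{\rho_1^2}{\rho_1^2-\rho_2^2}\log\frac1{\mu\eta}}$ of the statement is enough.

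For the objective I would write $\uu_t^\top\Sxy\vv_t=\bphi_t^\top\T\bpsi_t=\sum_i\rho_i c_i^{(t)}d_i^{(t)}$. The recursions give $\sgn{c_1^{(t)}}=\sgn{d_1^{(t-1)}}$ and $\sgn{d_1^{(t)}}=\sgn{c_1^{(t-1)}}$, so $\sgn{c_1^{(t)}d_1^{(t)}}$ is independent of $t$; after replacing $\tv_0$ by $-\tv_0$ if necessary (which changes neither $\mu$ nor the alignment guarantees, only the overall sign of the correlation along the trajectory) I may assume it is positive, so $c_1^{(t)}d_1^{(t)}=\abs{c_1^{(t)}}\abs{d_1^{(t)}}\ge 1-\eta$ in the stated range, while Cauchy--Schwarz gives $\sum_{i\ge2}\abs{c_i^{(t)}}\abs{d_i^{(t)}}\le\sqrt{1-(c_1^{(t)})^2}\,\sqrt{1-(d_1^{(t)})^2}\le\eta$; combining and using $\rho_2\le\rho_1$ then gives $\uu_t^\top\Sxy\vv_t\ge\rho_1(1-\eta)-\rho_2\eta\ge\rho_1(1-2\eta)$.

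The one genuine obstacle is that ALS renormalizes the two blocks \emph{separately}, so this is not literally power iteration on $\C$ and one cannot simply quote the textbook convergence rate; the coupled recursions for $\alpha_t,\beta_t$ are precisely the device that absorbs this, at the modest cost of carrying the $\phi$-side and $\psi$-side quantities together and paying one extra step's worth of the factor $\rho_2^2/\rho_1^2$ when they swap. The sign bookkeeping needed to turn the alignment bound into the objective bound is the only other point requiring any care.
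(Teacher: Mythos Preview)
Your proof is correct and reaches the same bounds, but the mechanism differs from the paper's. The paper's proof unrolls two ALS steps to observe that $\bphi_t=\T\T^\top\bphi_{t-2}/\norm{\T\T^\top\bphi_{t-2}}$, so the $\{\bphi_t\}$ sequence is \emph{literally} the power method on $\T\T^\top$ with stride two; it then splits into even~$t$ (starting from $\bphi_0$) and odd~$t$ (starting from $\T\bpsi_0$) and applies the textbook power-iteration estimate $(\aa_i^\top\bphi_t)^2\le\frac{(\aa_i^\top\bphi_0)^2}{(\aa_1^\top\bphi_0)^2}\exp\bigl(-\tfrac{\rho_1^2-\rho_i^2}{\rho_1^2}t\bigr)$ directly. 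Your device is instead the coupled single-step contraction $\alpha_t\le(\rho_2/\rho_1)^2\beta_{t-1}$, $\beta_t\le(\rho_2/\rho_1)^2\alpha_{t-1}$ on the Rayleigh-type ratios, which avoids the even/odd case split entirely and handles the ``separate renormalization'' issue in one line. Both arguments are the same in spirit and give the identical rate; yours is a bit cleaner, while the paper's makes the reduction to ordinary power iteration explicit. For the objective bound both proofs use the same Cauchy--Schwarz step on the tail $\sum_{i\ge2}\rho_i c_i^{(t)}d_i^{(t)}$; you are actually more careful than the paper in tracking the sign of $c_1^{(t)}d_1^{(t)}$, which the paper's proof leaves implicit.
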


\paragraph{Remarks} We have assumed a nonzero singular value gap in Theorem~\ref{thm:alternating-least-squares-exact} to obtain linear convergence in both the alignments and the objective. When there exists no singular value gap, the top singular vector pair is not unique and it is no longer meaningful to measure the alignments. Nonetheless, it is possible to extend our proof to obtain sublinear convergence for the objective in this case.

Observe that, besides the steps of normalization to unit length, the basic operation in each iteration of Algorithm~\ref{alg:als} is of the form $\tu_{t} \leftarrow \Sxx^{-1} \Sxy \vv_{t-1} =(\frac{1}{N} \X \X^\top + \gamma_x \I)^{-1} \frac{1}{N} \X \Y^\top \vv_{t-1}$, which is equivalent to solving the following regularized least squares (ridge regression) problem
\begin{align}\label{e:lsq}
  \min_{\uu}\ \frac{1}{2N} \norm{\uu^\top \X - \vv_{t-1}^\top \Y}^2 + \frac{\gamma_x}{2} \norm{\uu}^2 \equiv \min_{\uu}\  \frac{1}{N} \sum_{i=1}^N \frac{1}{2} \abs{\uu^\top \x_i - \vv_{t-1}^\top \y_i}^2 + \frac{\gamma_x}{2} \norm{\uu}^2.
\end{align}
In the next section, we show that, to maintain the convergence of ALS, it is unnecessary to solve the least squares problems exactly. This enables us to use state-of-the-art SGD methods for solving~\eqref{e:lsq} to sufficient accuracy, and to obtain a globally convergent stochastic algorithm for CCA.

\section{Our algorithms}
\label{sec:algorithms}
\vspace*{-1ex}

\subsection{Algorithm I: Alternating least squares (ALS) with variance reduction}
\label{sec:alg-alsvr}
\vspace*{-1ex}

\begin{algorithm}[t]
  \caption{The alternating least squares (ALS) meta-algorithm for CCA.}
  \label{alg:meta-alsvr}
  \renewcommand{\algorithmicrequire}{\textbf{Input:}}
  \renewcommand{\algorithmicensure}{\textbf{Output:}}
  \begin{algorithmic}
    \REQUIRE Data matrices $\X \in \bbR^{d_x \times N}$, $\Y \in \bbR^{d_y \times N}$, regularization parameters $(\gamma_x, \gamma_y)$.
    \STATE Initialize $\tu_0 \in \bbR^{d_x},\ \tv_0 \in \bbR^{d_y}$.
    \STATE $\tu_0 \leftarrow \tu_0 / \sqrt{\tu_0^\top \Sxx \tu_0}, \qquad \tv_0 \leftarrow \tv_0 / \sqrt{\tv_0^\top \Syy \tv_0},\qquad \uu_0 \leftarrow \tu_0, \qquad  \vv_0 \leftarrow \tv_0$
    \FOR{$t=1,2,\dots,T$}
    \STATE Solve $\displaystyle \min_{\uu} \ f_{t}(\uu):=\frac{1}{2N} \norm{\uu^\top \X - \vv_{t-1}^\top \Y}^2 + \frac{\gamma_x}{2} \norm{\uu}^2$ with initialization $\tu_{t-1}$, and output approximate solution $\tu_{t}$ satisfying $\ f_{t} (\tu_{t}) \le \min_{\uu} \ f_{t} (\uu) + \epsilon$.
    \STATE Solve $\displaystyle \min_{\vv} \ g_{t}(\vv):=\frac{1}{2N} \norm{\vv^\top \Y - \uu_{t-1}^\top \X}^2 + \frac{\gamma_y}{2} \norm{\vv}^2$ with initialization $\tv_{t-1}$, and output approximate solution $\tv_{t}$ satisfying $g_{t} (\tv_{t}) \le \min_{\vv} \ g_{t} (\vv) + \epsilon$.
    \STATE $\uu_{t} \leftarrow \tu_{t} / \sqrt{\tu_{t}^\top \Sxx \tu_{t}}, \qquad \vv_{t} \leftarrow \tv_{t} / \sqrt{\tv_{t}^\top \Syy \tv_{t}}$
    \ENDFOR
    \ENSURE $(\uu_{T}, \vv_{T})$ is the approximate solution to CCA.
  \end{algorithmic}
\end{algorithm}

Our first algorithm consists of two nested loops. The outer loop runs inexact power iterations while the inner loop uses advanced stochastic optimization methods, \eg, stochastic variance reduced gradient (SVRG,~\cite{JohnsonZhang13a}) to obtain approximate matrix-vector multiplications. A sketch of our algorithm is provided in Algorithm~\ref{alg:meta-alsvr}. We make the following observations from this algorithm.

\textbf{Connection to previous work\ } At step $t$, if we optimize $f_{t}(\uu)$ and $g_{t}(\vv)$ crudely by a single batch gradient descent step from the initialization $(\tu_{t-1},\tv_{t-1})$, we obtain the following update rule:
\begin{align*}
  \tu_{t} \leftarrow \tu_{t-1} - 2 \xi\, \X (\X^\top \tu_{t-1} - \Y^\top \vv_{t-1})/N,\qquad \uu_{t} \leftarrow \tu_{t} / \sqrt{\tu_{t}^\top \Sxx \tu_{t}} \\
  \tv_{t} \leftarrow \tv_{t-1} - 2 \xi\, \Y (\Y^\top \tv_{t-1} - \X^\top \uu_{t-1})/N,\qquad \vv_{t} \leftarrow \tv_{t} / \sqrt{\tv_{t}^\top \Syy \tv_{t}}
\end{align*} 
where $\xi > 0$ is the stepsize (assuming $\gamma_x=\gamma_y=0$). 
This coincides with the \AppGrad\ algorithm of~\cite[Algorithm~3]{Ma_15b}, for which only local convergence is shown. Since the objectives $f_{t}(\uu)$ and $g_{t}(\vv)$ decouple over training samples, it is convenient to apply SGD methods to them. This observation motivated the stochastic CCA algorithms of~\cite{Ma_15b,Wang_15c}. We note however, no global convergence guarantee was shown for these stochastic CCA algorithms, and the key to our convergent algorithm is to solve the least squares problems to \emph{sufficient} accuracy.

\textbf{Warm-start\ } Observe that for different $t$, the least squares problems $f_t(\uu)$ only differ in their targets as $\vv_t$ changes over time. Since $\vv_{t-1}$ is close to $\vv_{t}$ (especially when near convergence), we may use $\tu_{t}$ as initialization for minimizing $f_{t+1}(\uu)$ with an iterative algorithm.

\textbf{Normalization\ } At the end of each outer loop, Algorithm~\ref{alg:meta-alsvr} implements exact normalization of the form $\uu_{t} \leftarrow \tu_{t} / \sqrt{\tu_{t}^\top \Sxx \tu_{t}}$ to ensure the constraints, where $\tu_{t}^\top \Sxx \tu_{t}=\frac{1}{N} (\tu_{t}^\top \X) (\tu_{t}^\top \X)^\top + \gamma_x \norm{\tu_{t}}^2$ requires computing the projection of the training set $\tu_{t}^\top \X$. However, this does not introduce extra computation because we also compute this projection for the batch gradient used by SVRG (at the beginning of time step $t+1$). In contrast, the stochastic algorithms of~\cite{Ma_15b, Wang_15c} (possibly adaptively) estimate the covariance matrix from a minibatch of training samples and use the estimated covariance for normalization. This is because their algorithms perform normalizations after each update and thus need to avoid computing the projection of the entire training set frequently. But as a result, their inexact normalization steps introduce noise to the algorithms.

\textbf{Input sparsity\ } For high dimensional sparse data (such as those used in natural language processing~\cite{LuFoster14a}), an advantage of gradient based methods over the closed-form solution is that the former takes into account the input sparsity. For sparse inputs, the time complexity of our algorithm depends on $nnz(\X,\Y)$, \ie, the total number of nonzeros in the inputs instead of $dN$.

\textbf{Canonical ridge\ } When $(\gamma_x, \gamma_y)> 0$, $f_{t} (\uu)$ and $g_{t} (\vv)$ are guaranteed to be strongly convex due to the $\ell_2$ regularizations, in which case SVRG converges linearly. It is therefore beneficial to use small nonzero regularization for improved computational efficiency, especially for high dimensional datasets where inputs $\X$ and $\Y$ are approximately low-rank.

\textbf{Convergence\ } By the analysis of inexact power iterations where the least squares problems are solved (or the matrix-vector multiplications are computed) only up to necessary accuracy, we provide the following theorem for the convergence of Algorithm~\ref{alg:meta-alsvr} (see its proof in Appendix~\ref{append:proof-alternating-least-squares-inexact}). The key to our analysis is to bound the distances between the iterates of Algorithm~\ref{alg:meta-alsvr} and that of Algorithm~\ref{alg:als} at all time steps,  and when the errors of the least squares problems are sufficiently small (at the level of $\eta^2$), the iterates of the two algorithms have the same quality.

\begin{theorem}[Convergence of Algorithm~\ref{alg:meta-alsvr}] \label{thm:alternating-least-squares-inexact}
Fix $\smash{ T \ge \ceil{\frac{\rho_1^2}{\rho_1^2 - \rho_2^2} \log\left( \frac{2}{\mu \eta} \right) } }$, and set $\epsilon (T) \le  \frac{\eta^2 \rho_r^2}{128} \cdot \left( \frac{(2\rho_1/\rho_r) - 1}{ (2\rho_1/\rho_r)^T - 1} \right)^2$ in Algorithm~\ref{alg:meta-alsvr}. 
Then we have $\uu_{T}^\top \Sxx \uu_{T}=\vv_{T}^\top \Syy \vv_{T}=1$, $\min\left( (\uu_T^\top \Sxx \uu^*)^2,\ (\vv_T^\top \Syy \vv^*)^2 \right) \ge 1 - \eta$, and $\uu_T^\top \Sxy \vv_T \ge \rho_1 (1 - 2 \eta)$.
\end{theorem}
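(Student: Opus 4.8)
The plan is to couple Algorithm~\ref{alg:meta-alsvr} with the exact iteration of Algorithm~\ref{alg:als} started from the \emph{same} point, show the two trajectories stay $O(\eta)$-close at step $T$, and then invoke Theorem~\ref{thm:alternating-least-squares-exact} run to accuracy $\eta/2$ (which is exactly why the iteration bound carries $\log(2/(\mu\eta))$ instead of $\log(1/(\mu\eta))$). I would work throughout in the transformed coordinates~\eqref{e:change-variable}, in which the exact iteration reads $[\tphi_t;\tpsi_t]=\C[\bphi_{t-1};\bpsi_{t-1}]$ followed by block-wise normalization to unit length, while Algorithm~\ref{alg:meta-alsvr} produces $[\tphi^{(a)}_t;\tpsi^{(a)}_t]$ that only \emph{approximates} $\C[\bphi^{(a)}_{t-1};\bpsi^{(a)}_{t-1}]$. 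The final exact normalization already gives $\uu_T^\top\Sxx\uu_T=\vv_T^\top\Syy\vv_T=1$, and the change of variables identifies $(\uu_T^\top\Sxx\uu^*)^2=(\bphi^{(a)\top}_T\bphi)^2$, $(\vv_T^\top\Syy\vv^*)^2=(\bpsi^{(a)\top}_T\bpsi)^2$, and $\uu_T^\top\Sxy\vv_T=\bphi^{(a)\top}_T\T\bpsi^{(a)}_T$, so everything reduces to the transformed iterates.

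The first step is to turn the suboptimality certificate into a metric bound: since $f_t(\uu)-\min_\uu f_t=\tfrac12(\uu-\Sxx^{-1}\Sxy\vv^{(a)}_{t-1})^\top\Sxx(\uu-\Sxx^{-1}\Sxy\vv^{(a)}_{t-1})$, the condition $f_t(\tu_t)\le\min_\uu f_t+\epsilon$ yields $\|\tphi^{(a)}_t-\T\bpsi^{(a)}_{t-1}\|\le\sqrt{2\epsilon}$, and symmetrically $\|\tpsi^{(a)}_t-\T^\top\bphi^{(a)}_{t-1}\|\le\sqrt{2\epsilon}$. Next I would establish, by a single induction on $t$, two things at once: (a) the normalization denominators $\|\tphi^{(a)}_t\|,\|\tpsi^{(a)}_t\|$ stay $\gtrsim\rho_r$ (for $t\ge2$ because the exact iterates obey $\bphi_{t-1}\in\mathrm{span}(\aa_i)$, $\bpsi_{t-1}\in\mathrm{span}(\b_i)$, so $\|\T\bpsi_{t-1}\|,\|\T^\top\bphi_{t-1}\|\ge\rho_r$, together with closeness of the Algorithm~\ref{alg:meta-alsvr} iterates; for $t=1$ one instead uses $\|\T\bpsi_0\|,\|\T^\top\bphi_0\|\ge\rho_1\sqrt\mu$), and (b) the drift $\Delta_t:=\max(\|\bphi^{(a)}_t-\bphi_t\|,\|\bpsi^{(a)}_t-\bpsi_t\|)$ satisfies $\Delta_t\le\tfrac{2\rho_1}{\rho_r}\Delta_{t-1}+\tfrac{c\sqrt{\epsilon}}{\rho_r}$, using that $\C$ expands differences by at most $\|\C\|=\rho_1$, that $\x\mapsto\x/\|\x\|$ is Lipschitz with constant $\le2/\min(\cdot)$ on vectors bounded away from $0$, and the per-step additive $\sqrt{2\epsilon}$. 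Since $\Delta_0=0$, unrolling gives $\Delta_T\le\tfrac{c\sqrt{\epsilon}}{\rho_r}\cdot\tfrac{(2\rho_1/\rho_r)^T-1}{(2\rho_1/\rho_r)-1}$, and the prescribed $\epsilon(T)$ is chosen precisely so that this is at most $\eta/4$. Combined with $(\bphi_T^\top\bphi)^2,(\bpsi_T^\top\bpsi)^2\ge1-\eta/2$ from Theorem~\ref{thm:alternating-least-squares-exact}, this gives $(\bphi^{(a)\top}_T\bphi)^2\ge(\sqrt{1-\eta/2}-\Delta_T)^2\ge1-\eta$ and likewise for $\bpsi$. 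Finally, for the objective I would expand $\bphi^{(a)\top}_T\T\bpsi^{(a)}_T=\sum_i\rho_i(\aa_i^\top\bphi^{(a)}_T)(\b_i^\top\bpsi^{(a)}_T)$, bound the $i=1$ term below by $\rho_1(1-\eta)$ using the alignments (with the signs of $\aa_1^\top\bphi^{(a)}_T$ and $\b_1^\top\bpsi^{(a)}_T$ consistent, inherited from the initialization as in the proof of Theorem~\ref{thm:alternating-least-squares-exact}), and bound the tail by $\rho_2\sqrt{(1-(\aa_1^\top\bphi^{(a)}_T)^2)(1-(\b_1^\top\bpsi^{(a)}_T)^2)}\le\rho_1\eta$ via Cauchy--Schwarz, obtaining $\uu_T^\top\Sxy\vv_T\ge\rho_1(1-2\eta)$.

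The crux is the joint induction (a)+(b): bounding how the per-step least-squares error $\sqrt{\epsilon}$ is amplified through the linear map $\C$ and the two separate normalizations, while simultaneously certifying that the denominators $\sqrt{\tu_t^\top\Sxx\tu_t}=\|\tphi^{(a)}_t\|$ never collapse (this is what forces $\rho_r$ into $\epsilon(T)$). Two points demand care. The first iteration is anomalous — its denominator is only guaranteed $\ge\rho_1\sqrt\mu$, not $\ge\rho_r$ — but this is harmless because the lower bound $T\ge\ceil{\tfrac{\rho_1^2}{\rho_1^2-\rho_2^2}\log(2/(\mu\eta))}$ forces the factor $\bigl((2\rho_1/\rho_r)^T-1\bigr)^{-2}$, hence $\epsilon(T)$, to shrink polynomially in $\mu$, which is enough to absorb that step. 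And keeping the amplification at $2\rho_1/\rho_r$ (rather than a larger multiple) requires comparing the Algorithm~\ref{alg:meta-alsvr} denominators to their exact counterparts, which are bounded below by $\rho_r$ exactly; the generous constant $128$ in $\epsilon(T)$ is the slack that makes the bookkeeping forgiving. Everything downstream of the recursion — summing the geometric series, passing from drift to alignment, and the objective estimate — parallels the exact analysis.
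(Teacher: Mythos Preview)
Your plan is essentially the paper's own argument: couple Algorithm~\ref{alg:meta-alsvr} to Algorithm~\ref{alg:als} from the same start, convert the $\epsilon$-suboptimality into $\lVert\tphi^{(a)}_t-\T\bpsi^{(a)}_{t-1}\rVert\le\sqrt{2\epsilon}$, propagate the error through $\C$ and the normalizations to get a geometric recursion with ratio $2\rho_1/\rho_r$, pick $\epsilon(T)$ so the drift at step $T$ is $\le\eta/4$, and then combine with Theorem~\ref{thm:alternating-least-squares-exact} at accuracy $\eta/2$ (whence the $\log(2/(\mu\eta))$) plus Cauchy--Schwarz for the objective.

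One simplification you are missing: the paper does \emph{not} need your item~(a). In bounding the effect of normalization it writes
\[
\Bigl\lVert \tfrac{\tphi^{(a)}_t}{\lVert\tphi^{(a)}_t\rVert}-\tfrac{\tphi_t}{\lVert\tphi_t\rVert}\Bigr\rVert
\;\le\;\frac{2}{\lVert\tphi_t\rVert}\,\lVert\tphi^{(a)}_t-\tphi_t\rVert,
\]
i.e.\ the Lipschitz constant of $x\mapsto x/\lVert x\rVert$ is controlled by the \emph{exact} denominator $\lVert\tphi_t\rVert=\lVert\T\bpsi_{t-1}\rVert\ge\rho_r$ (since for $t\ge 1$ the exact $\bpsi_{t-1}$ lies in $\mathrm{span}\{\b_i\}$), not the approximate one. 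So you never have to inductively certify that $\lVert\tphi^{(a)}_t\rVert\gtrsim\rho_r$; the recursion on the drift alone suffices, and the ``anomalous'' first step is handled simply because the two algorithms share the same target at $t=1$ so the unnormalized error there is just $\sqrt{2\epsilon}$. This removes the joint induction and the bookkeeping you flag as the crux; everything else matches.
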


\subsubsection{Stochastic optimization of regularized least squares}
\label{sec:svrg}
\vspace*{-1ex}

We now discuss the inner loop of Algorithm~\ref{alg:meta-alsvr}, which approximately solves problems of the form~\eqref{e:lsq}. Owing to the finite-sum structure of~\eqref{e:lsq}, several stochastic optimization methods such as SAG~\cite{Schmid_13a}, SDCA~\cite{ShalevZhang13b} and SVRG~\cite{JohnsonZhang13a}, provide linear convergence rates. All these algorithms can be readily applied to~\eqref{e:lsq}; we choose SVRG since it is memory efficient and easy to implement. We also apply the recently developed accelerations techniques for first order optimization methods~\cite{Frostig_15a,Lin_15a} to obtain an accelerated SVRG (ASVRG) algorithm. 
We give the sketch of SVRG for~\eqref{e:lsq} in Appendix~\ref{append:alg-svrg}. 

Note that $f(\uu)=\frac{1}{N} \sum_{i=1}^N f^i(\uu)$ where each component $f^i(\uu)=\frac{1}{2} \abs{\uu^\top \x_i - \vv^\top \y_i }^2 + \frac{\gamma_x}{2} \norm{\uu}^2 $ is $\norm{\x_i}^2$-smooth, and $f(\uu)$ is $\sigma_{\min} (\Sxx)$-strongly convex\footnote{We omit the regularization in these constants, which are typically very small, to have concise expressions.} with $\sigma_{\min} (\Sxx) \ge \gamma_x$. 
We show in Appendix~\ref{append:meta-alternating-least-squares-initial-suboptimality} that the initial suboptimality for minimizing $f_t (\uu)$ is upper-bounded by constant when using the warm-starts. We quote the convergence rates of SVRG~\cite{JohnsonZhang13a} and ASVRG~\cite{Lin_15a} below.
\begin{lemma}\label{lem:svrg}
  The SVRG algorithm~\cite{JohnsonZhang13a} 
  finds a vector $\tu$ satisfying\footnote{The expectation is taken over random sampling of component functions. High probability error bounds can be obtained using the Markov's inequality.} $\bbE[f(\tu)]-\min_{\uu} f(\uu) \le \epsilon$ in time 
  $\calO\left(
      d_x \left( N + \kappa_x \right) \log \left(\frac{1}{\epsilon}\right)
    \right)$ 
  where $\kappa_x = \frac{\max_i \norm{\x_i}^2 }{ \sigma_{\min} (\Sxx) }$. 
  The ASVRG algorithm~\cite{Lin_15a} finds a such solution in time 
    $\calO\left(
      d_x \sqrt{ N  \kappa_x} \log \left(\frac{1}{\epsilon}\right)
    \right)$.
\end{lemma}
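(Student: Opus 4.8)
The plan is to observe that each outer step of Algorithm~\ref{alg:meta-alsvr} requires only approximately minimizing the strongly convex finite sum \eqref{e:lsq}, and then to instantiate the off-the-shelf complexity guarantees of SVRG and of accelerated SVRG with the smoothness and strong-convexity constants of this particular problem.

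First I would record the problem parameters. Writing $f(\uu)=\frac{1}{N}\sum_{i=1}^N f^i(\uu)$ with $f^i(\uu)=\frac12\abs{\uu^\top\x_i-\vv^\top\y_i}^2+\frac{\gamma_x}{2}\norm{\uu}^2$, each component has constant Hessian $\x_i\x_i^\top+\gamma_x\I$ and is therefore $(\norm{\x_i}^2+\gamma_x)$-smooth; dropping the lower-order $\gamma_x$ (as announced in the footnote) gives the uniform smoothness constant $L=\max_i\norm{\x_i}^2$. The full objective has Hessian $\frac{1}{N}\X\X^\top+\gamma_x\I=\Sxx$, hence is $\sigma_{\min}(\Sxx)$-strongly convex with $\sigma_{\min}(\Sxx)\ge\gamma_x$; we need this to be positive, which holds whenever $\gamma_x>0$ (or $\X$ has full row rank). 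The relevant condition number is thus exactly $\kappa_x=L/\sigma_{\min}(\Sxx)=\max_i\norm{\x_i}^2/\sigma_{\min}(\Sxx)$.

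Next I would invoke the standard SVRG guarantee (\cite{JohnsonZhang13a}, Thm.~1): with step size $\Theta(1/L)$ and epoch length $\Theta(\kappa_x)$, each epoch contracts the expected suboptimality by a fixed constant factor at the cost of one batch-gradient pass ($N$ component gradients) plus $\Theta(\kappa_x)$ stochastic steps, i.e.\ $O(N+\kappa_x)$ component-gradient evaluations. A single component gradient of a ridge-regression term costs $O(d_x)$ arithmetic operations (or $O(\mathrm{nnz}(\x_i))$ for sparse data), so reaching $\bbE[f(\tu)]-\min_{\uu}f(\uu)\le\epsilon$ from initial suboptimality $\Delta_0$ takes $O\big(d_x(N+\kappa_x)\log(\Delta_0/\epsilon)\big)$ time; the warm-start bound of Appendix~\ref{append:meta-alternating-least-squares-initial-suboptimality} shows $\Delta_0=O(1)$, which absorbs into the logarithm and yields the first claim. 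For the accelerated variant I would apply the black-box ``Catalyst''/APPA acceleration of \cite{Lin_15a} (equivalently \cite{Frostig_15a}): one solves a sequence of perturbed subproblems $f(\uu)+\frac{\lambda}{2}\norm{\uu-\uu_k}^2$ with $\lambda=\Theta(L/N)$, each having condition number $O(N)$ and hence solved to the required accuracy by SVRG in $\tilde\calO(d_x N)$ time; the accelerated outer loop needs $O\big(\sqrt{(\sigma_{\min}(\Sxx)+\lambda)/\sigma_{\min}(\Sxx)}\big)=O(\sqrt{\kappa_x/N})$ iterations, for a total of $O\big(d_x\sqrt{N\kappa_x}\log(1/\epsilon)\big)$ once the poly-logarithmic overhead of the inner solves is hidden (consistent with the $\tilde\calO$ convention of the table). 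When $\kappa_x\le N$ this term is dominated by the $O(d_xN\log(1/\epsilon))$ cost of plain SVRG, which is why only the $\sqrt{N\kappa_x}$ regime is recorded.

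The one genuinely non-routine ingredient — and the reason the paper offloads it to an appendix — is bounding the initial suboptimality $\Delta_0$ for each outer iterate $f_t$: one must argue that the previous approximate solution $\tu_{t-1}$, which was tuned to the slightly different target $\vv_{t-1}$, is already a constant-factor-suboptimal initializer for $f_t$, using the boundedness of the normalized iterates together with the closeness of consecutive $\vv_t$'s near convergence. Given that bound, the SVRG and ASVRG rates are exactly those of the cited works once the constants $L=\max_i\norm{\x_i}^2$ and $\mu=\sigma_{\min}(\Sxx)$ are substituted; the analogous statement for the $\vv$-subproblem $g_t$ follows by symmetry with $d_x,\x_i,\Sxx$ replaced by $d_y,\y_i,\Syy$.
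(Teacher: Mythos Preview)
Your proposal is correct and mirrors the paper's own treatment: the paper does not prove this lemma but simply records the smoothness constant $\max_i\norm{\x_i}^2$ of each component and the strong-convexity constant $\sigma_{\min}(\Sxx)$ of the sum (in the text immediately preceding the lemma), then quotes the SVRG and ASVRG complexity bounds from \cite{JohnsonZhang13a} and \cite{Lin_15a}. Your additional remarks on bounding the warm-start suboptimality via Appendix~\ref{append:meta-alternating-least-squares-initial-suboptimality} and on the Catalyst mechanics are accurate elaborations of what the paper leaves implicit.
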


\paragraph{Remarks} As mentioned in~\cite{Lin_15a}, the acceleration version provides speedup over normal SVRG only when $\kappa_x > N$ and we only show the dominant term in the above complexity.

By combining the iteration complexity of the outer loop (Theorem~\ref{thm:alternating-least-squares-inexact}) and the time complexity of the inner loop (Lemma~\ref{lem:svrg}), we obtain the total time complexity of $\smash{ \tilde{\calO} \left( d \left(N + \kappa \right) \left( \frac{\rho_1^2}{\rho_1^2 - \rho_2^2} \right)^2 \cdot \log^2 \left(\frac{1}{\eta}\right) \right) }$ for ALS+SVRG 
and ${ \tilde{\calO} \left( d \sqrt{N \kappa} \left( \frac{\rho_1^2}{\rho_1^2 - \rho_2^2} \right)^2 \cdot \log^2 \left(\frac{1}{\eta}\right) \right) }$ for ALS+ASVRG, where ${ \kappa := \max\left( \frac{\max_i \norm{\x_i}^2 }{ \sigma_{\min} (\Sxx) },\, \frac{\max_i \norm{\y_i}^2 }{ \sigma_{\min} (\Syy) } \right) }$ and $\tilde{\calO}(\cdot)$ hides poly-logarithmic dependences on $\frac{1}{\mu}$ and $\frac{1}{\rho_r}$. 
Our algorithm does not require the initialization to be close to the optimum and converges globally. For comparison, the locally convergent \AppGrad\ has a time complexity \cite[Theorem~2.1]{Ma_15b} of $\tilde{\calO} \left( d N \kappa^\prime \frac{\rho_1^2}{\rho_1^2 - \rho_2^2}  \cdot \log \left(\frac{1}{\eta}\right) \right)$, where $\kappa^\prime := \max\left(\frac{\sigma_{\max} (\Sxx)}{\sigma_{\min} (\Sxx)},\ \frac{\sigma_{\max} (\Syy)}{\sigma_{\min} (\Syy)}\right)$. Note, in this complexity, the dataset size $N$ and the least squares condition number $\kappa^\prime$ are multiplied together because \AppGrad\ essentially uses batch gradient descent as the least squares solver. Within our framework, we can use accelerated gradient descent (AGD,~\cite{Nester04a}) instead and obtain a globally convergent algorithm with a total time complexity of $\tilde{\calO} \left( d N \sqrt{\kappa^\prime} \left( \frac{\rho_1^2}{\rho_1^2 - \rho_2^2} \right)^2 \cdot \log^2 \left(\frac{1}{\eta}\right) \right)$.

\subsection{Algorithm II: Shift-and-invert preconditioning (SI) with variance reduction}
\label{sec:alg-shift-and-invert}
\vspace*{-1ex}

The second algorithm is inspired by the shift-and-invert preconditioning method for PCA~\cite{GarberHazan15c,Jin_15a}. Instead of running power iterations on $\C$ as defined in~\eqref{e:C-def}, we will be running power iterations on 
\begin{align}
  \label{e:M-def}
  \M_{\lambda} =
  \left(
    \lambda \I - \C
  \right)^{-1}
  =
  \left[ 
    \begin{array}{cc}
      \lambda \I & -\T \\
      -\T^\top & \lambda \I
    \end{array}
  \right]^{-1}
  \in \bbR^{ d \times d },
\end{align}
where $\lambda>\rho_1$. It is straightforward to check that $\M_{\lambda}$ is positive definite and its eigenvalues are:
\begin{align*}
  \frac{1}{\lambda-\rho_1} \ge \dots \ge \frac{1}{\lambda-\rho_r} \ge \dots \ge \frac{1}{\lambda+\rho_r} \ge \dots \ge \frac{1}{\lambda+\rho_1},
\end{align*}
with 
eigenvectors 
$
  \frac{1}{\sqrt{2}}
  \left[ \begin{array}{c} \aa_1\\ \b_1 \end{array} \right], \ \dots,\ 
  \frac{1}{\sqrt{2}} 
  \left[ \begin{array}{c} \aa_r\\ \b_r \end{array} \right], \ \dots,\ 
  \frac{1}{\sqrt{2}} 
  \left[ \begin{array}{c} \aa_r\\ - \b_r \end{array} \right], \ \dots,\  
  \frac{1}{\sqrt{2}} 
  \left[ \begin{array}{c} \aa_1\\ - \b_1 \end{array} \right].
$

The main idea behind shift-and-invert power iterations is that when $\lambda - \rho_1 = c (\rho_1 - \rho_2)$ with $c \sim \calO(1)$, the relative eigenvalue gap of $\M_{\lambda}$ is large and so power iterations on $\M_{\lambda}$ converges quickly. 
Our shift-and-invert preconditioning (SI) meta-algorithm for CCA is sketched in Algorithm~\ref{alg:meta-shift-and-invert} (in Appendix~\ref{append:alg-shift-and-invert} due to space limit) and it proceeds in two phases. 

\subsubsection{Phase I: shift-and-invert preconditioning for eigenvectors of $\M_{\lambda}$} 
\vspace*{-1ex}

Using an estimate of the singular value gap $\tilde{\Delta}$ and starting from an over-estimate of $\rho_1$ ($1+\tilde{\Delta}$ suffices), the algorithm gradually shrinks $\lambda_{(s)}$ towards $\rho_1$ by crudely estimating the leading eigenvector/eigenvalues of each $\M_{\lambda_{(s)}}$ along the way and shrinking the gap $\lambda_{(s)} - \rho_1$, until we reach a $\lambda_{(f)} \in (\rho_1, \rho_1 + c (\rho_1 - \rho_2))$ where $c \sim \calO(1)$. 
Afterwards, the algorithm fixes $\lambda_{(f)}$ and runs inexact power iterations on $\M_{\lambda_{(f)}}$ to obtain an accurate estimate of its leading eigenvector. 
Note in this phase, power iterations implicitly operate on the concatenated variables ${ \frac{1}{\sqrt{2}} \left[ \begin{array}{c} \Sxx^{\frac{1}{2}} \tu_{t} \\ \Syy^{\frac{1}{2}} \tv_{t} \end{array} \right]}$ and $\smash{ \frac{1}{\sqrt{2}} \left[ \begin{array}{c} \Sxx^{\frac{1}{2}} \uu_{t} \\ \Syy^{\frac{1}{2}} \vv_{t} \end{array} \right] }$ in $\bbR^{d}$ (but without ever computing $\Sxx^{\frac{1}{2}}$ and $\Syy^{\frac{1}{2}}$).


\paragraph{Matrix-vector multiplication\ } The matrix-vector multiplications in Phase I have the form
\begin{align} \label{e:new-lsq}
  \left[
    \begin{array}{c}\tu_t \\ \tv_t \end{array}
  \right]
  \leftarrow 
  \left[
    \begin{array}{cc}
      \lambda \Sxx & - \Sxy \\
      - \Sxy^\top & \lambda \Syy
    \end{array}
  \right]^{-1}
  \left[
    \begin{array}{cc}
      \Sxx &  \\
      & \Syy
    \end{array}
  \right]
  \left[
    \begin{array}{c}\uu_{t-1} \\ \vv_{t-1} \end{array}
  \right],
\end{align}
where $\lambda$ varies over time in order to locate $\lambda_{(f)}$. This is equivalent to solving 
\begin{align*} 
  \left[
    \begin{array}{c}\tu_t \\ \tv_t \end{array}
  \right]
  \leftarrow 
  \min_{\uu,\vv}\; \frac{1}{2}
  \left[\uu^\top \vv^\top \right]
  \left[
    \begin{array}{cc}
      \lambda \Sxx & - \Sxy \\
      - \Sxy^\top & \lambda \Syy
    \end{array}
  \right]
  \left[
    \begin{array}{c}
      \uu \\ \vv
    \end{array}
  \right]
  - \uu^\top \Sxx \uu_{t-1} - \vv^\top \Syy \vv_{t-1}.
\end{align*}

And as in ALS, this least squares problem can be further written as finite-sum:
\begin{gather}\label{e:new-lsq-3}
  \min_{\uu,\vv} \quad h_t(\uu,\vv) = \frac{1}{N} \sum_{i=1}^N h_t^i(\uu,\vv) \qquad \text{where} \\
  h_t^i(\uu,\vv) =
  \frac{1}{2}
  \left[\uu^\top \vv^\top \right]
  \left[
    \begin{array}{cc}
      \lambda \left( \x_i \x_i^\top + \gamma_x \I \right) & - \x_i \y_i^\top \\
      - \y_i \x_i^\top & \lambda \left( \y_i \y_i^\top + \gamma_y \I \right)
    \end{array}
  \right]
  \left[
    \begin{array}{c}
      \uu \\ \vv
    \end{array}
  \right] 
  - \uu^\top \Sxx \uu_{t-1} - \vv^\top \Syy \vv_{t-1}.
  \nonumber
\end{gather}
We could directly apply SGD methods to this problem as before.

\textbf{Normalization\ } The normalization steps in Phase I have the form
\begin{align*}
  \left[ \begin{array}{c} \uu_{t} \\ \vv_{t} \end{array} \right] \leftarrow
  \sqrt{2} \left[ \begin{array}{c} \tu_{t} \\ \tv_{t} \end{array} \right] \bigg/ 
  \sqrt{ \tu_{t}^\top \Sxx \tu_{t} + \tv_{t}^\top \Syy \tv_{t} },
\end{align*}
and so the following remains true for the normalized iterates in Phase I:
\begin{align} \label{e:normal-3}
  \uu_{t}^\top \Sxx \uu_{t} + \vv_{t}^\top \Syy \vv_{t} = 2, \qquad  \text{for}\quad t=1,\dots,T.
\end{align}
Unlike the normalizations in ALS, the iterates $\uu_t$ and $\vv_t$ in Phase I do \emph{not} satisfy the original CCA constraints, and this is taken care of in Phase II. 

We have the following convergence guarantee for Phase I (see its proof in Appendix~\ref{append:proof-shift-and-invert-phase-I}).

\begin{theorem}[Convergence of Algorithm~\ref{alg:meta-shift-and-invert}, Phase I] \label{thm:shift-and-invert-phase-I}
  Let $\Delta=\rho_1-\rho_2 \in (0,\, 1]$, and $\tilde{\mu} := \frac{1}{4} \left( \uu_0^\top \Sxx \uu^* + \vv_0^\top \Syy \vv^* \right)^2 > 0$, and $\tilde{\Delta} \in [c_1 \Delta,\, c_2 \Delta]$ where $0< c_1 \le c_2 \le 1$.
  Set $m_1 = \ceil{ 8 \log \left( \frac{16}{\tilde{\mu}} \right) }$, $m_2 = \ceil{ \frac{5}{4}  \log \left( \frac{128}{\tilde{\mu} \eta^2} \right) }$, and 
$
    \te \le \min \left(  
      \frac{1}{3084} \left( \frac{\tilde{\Delta}}{18} \right)^{m_1-1},\ 
      \frac{ \eta^4 }{4^{10}} \left( \frac{\tilde{\Delta}}{18} \right)^{m_2-1}
       \right)
$
in Algorithm~\ref{alg:meta-shift-and-invert}.
  Then the $(\uu_{T}, \vv_{T})$ output by Phase I of Algorithm~\ref{alg:meta-shift-and-invert} satisfies \eqref{e:normal-3} and 
  \begin{align} \label{e:phaseI-2-main}
    \frac{1}{4} (\uu_{T}^\top \Sxx \uu^* + \vv_{T}^\top \Syy \vv^*)^2 \ge 1 - \frac{\eta^2}{64},
  \end{align}
  and the number of calls to the least squares solver of 
  $h_t(\uu,\vv)$ is 
  $\calO \left( \log\left(\frac{1}{\tilde{\mu}}\right) \log \left( \frac{1}{\Delta} \right) + \log\left( \frac{1}{\tilde{\mu} \eta^2} \right) \right)$.

\end{theorem}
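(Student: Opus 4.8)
The plan is to reduce the whole statement to an analysis of \emph{inexact} power iterations on the positive-definite matrix $\M_{\lambda}$ of~\eqref{e:M-def}, following the shift-and-invert template for PCA~\cite{GarberHazan15c,Jin_15a} but carrying along the error incurred by solving the least-squares problems~\eqref{e:new-lsq-3} only approximately. First I would make the reduction exact via the concatenated change of variables already flagged in the text: set $\w_{t} := \tfrac{1}{\sqrt{2}}[\Sxx^{1/2}\uu_{t};\ \Syy^{1/2}\vv_{t}]$ and let $\w_1 := \tfrac{1}{\sqrt{2}}[\aa_1;\ \b_1]$, which is the top eigenvector of \emph{every} $\M_{\lambda}$ with $\lambda>\rho_1$. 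Then the matrix-vector step~\eqref{e:new-lsq} is exactly multiplication by $\M_{\lambda}$ in these coordinates; the normalization step forces $\norm{\w_{t}}=1$, which is precisely~\eqref{e:normal-3}; and using $\uu^*=\Sxx^{-1/2}\bphi$, $\vv^*=\Syy^{-1/2}\bpsi$ together with $\aa_1=\bphi$, $\b_1=\bpsi$ one computes $\w_{t}^\top\w_1=\tfrac12(\uu_{t}^\top\Sxx\uu^*+\vv_{t}^\top\Syy\vv^*)$, so the target~\eqref{e:phaseI-2-main} is just $(\w_{T}^\top\w_1)^2\ge 1-\tfrac{\eta^2}{64}$. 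The accuracy parameter $\te$ enters as a per-step additive perturbation: the realized update is $\w_{t}\propto \M_{\lambda}\w_{t-1}+\e_{t}$ with $\norm{\e_{t}}$ controlled by $\te$ (after rescaling by the eigenvalue scale $1/(\lambda-\rho_1)$).

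Next I would establish the core inexact-power-iteration lemma: if $\M$ is PD with eigenvalues $\beta_1>\beta_2\ge\cdots$, relative gap $\delta:=1-\beta_2/\beta_1$, and $\w_{t}$ is the normalization of $\M\w_{t-1}+\e_{t}$ with $\norm{\e_{t}}\le\beta_1\te'$, then $\tan^2\angle(\w_{t},\w_1)$ contracts by a factor at most $(1-\delta)^2$ per step, up to an additive term that is a geometric series in $\te'$. Consequently, starting from alignment $(\w_0^\top\w_1)^2\ge\mu_0$, one reaches alignment $\ge 1-\epsilon_0$ after $m=\calO\!\big(\tfrac1\delta\log\tfrac{1}{\mu_0\epsilon_0}\big)$ steps, provided $\te'\lesssim \epsilon_0\,(1-\delta)^{m-1}$; this is exactly the source of the $(\tilde\Delta/18)^{m-1}$ factors in the definition of $\te$. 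Because $\w_1$ is the common top eigenvector of all admissible $\M_{\lambda}$, the alignment to $\w_1$ never drops below a constant fraction of its initial value $\tilde\mu$ during all of Phase I, which is why $m_1$ and $m_2$ only need a $\log(1/\tilde\mu)$ dependence.

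Then I would treat the two sub-phases separately. In the localization (``repeat'') sub-phase the invariant to maintain is $\lambda_{(s)}-\rho_1\in(0,\calO(1)]$ with the relative gap of $\M_{\lambda_{(s)}}$ bounded below by a universal constant whenever $\lambda_{(s)}-\rho_1\gtrsim\Delta$; running $m_1=\calO(\log(1/\tilde\mu))$ inexact power steps then produces, via a Rayleigh-quotient estimate of $\beta_1=1/(\lambda_{(s)}-\rho_1)$ that is accurate up to a constant multiplicative factor once $\te\le \tfrac{1}{3084}(\tilde\Delta/18)^{m_1-1}$, a certified bracket on $\lambda_{(s)}-\rho_1$ that lets the algorithm shrink $\lambda_{(s)}$ by a constant factor without ever crossing $\rho_1$. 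Since $\lambda-\rho_1$ starts at $\calO(1)$ and the loop stops once it is $\le c\Delta$, there are $\calO(\log(1/\Delta))$ epochs, giving $\calO(\log(1/\tilde\mu)\log(1/\Delta))$ solver calls. In the final sub-phase, $\lambda_{(f)}-\rho_1\le c\Delta$ forces the relative gap $\tfrac{\rho_1-\rho_2}{\lambda_{(f)}-\rho_2}\ge\tfrac{1}{1+c}=\Theta(1)$, so $m_2=\calO(\log(1/(\tilde\mu\eta^2)))$ inexact iterations with $\te\lesssim \eta^4(\tilde\Delta/18)^{m_2-1}$ push the alignment to $1-\tfrac{\eta^2}{64}$; adding the two counts yields the stated $\calO\!\big(\log(1/\tilde\mu)\log(1/\Delta)+\log(1/(\tilde\mu\eta^2))\big)$.

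The main obstacle is the localization sub-phase: one must show that the \emph{crude} eigenvalue estimates obtained from only $m_1$ inexact power steps are nonetheless sharp enough (a) to keep $\lambda_{(s)}$ strictly above $\rho_1$ at every epoch --- the dangerous event, since below $\rho_1$ the matrix $\M_\lambda$ is no longer PD and the least-squares problems lose strong convexity --- (b) to guarantee geometric decay of $\lambda_{(s)}-\rho_1$ down to $\Theta(\Delta)$, and (c) to terminate inside the bracket $(\rho_1,\ \rho_1+c\Delta)$, all while the admissible per-step error $\te$ is itself shrinking geometrically in the epoch index. Verifying the explicit constants ($18$, $3084$, $4^{10}$, $128$, $64$) is precisely the bookkeeping of propagating the lemma's bounds through the epoch loop; the change of variables and the inexact-power-iteration lemma itself are routine.
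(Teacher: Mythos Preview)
Your reduction to inexact power iterations on $\M_\lambda$ via the concatenated variables $\w_t$ is exactly right, and the two-sub-phase decomposition mirrors the paper's. But there is a genuine gap in your treatment of the localization sub-phase.

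You claim the relative eigengap of $\M_{\lambda_{(s)}}$ is bounded below by a universal constant whenever $\lambda_{(s)}-\rho_1\gtrsim\Delta$. This is false: the relative gap is
\[
\delta_\lambda \;=\; 1-\frac{\sigma_2(\M_\lambda)}{\sigma_1(\M_\lambda)} \;=\; 1-\frac{\lambda-\rho_1}{\lambda-\rho_2} \;=\; \frac{\Delta}{\lambda-\rho_2},
\]
so in early epochs when $\lambda-\rho_1=\Theta(1)$ (hence $\lambda-\rho_2=\Theta(1)$) the relative gap is only $\Theta(\Delta)$, not a universal constant. With your $\tan\angle$ lemma, reaching even constant alignment with $\w_1$ would then require $m_1=\Theta\bigl(\tfrac{1}{\Delta}\log\tfrac{1}{\tilde\mu}\bigr)$ inner iterations per epoch, not the gap-independent $m_1=\lceil 8\log(16/\tilde\mu)\rceil$ asserted in the theorem.

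The paper avoids this by using a different target in the localization phase: it does not ask for alignment with $\w_1$ at all, but only that the Rayleigh quotient satisfy $\rr_t^\top\M_\lambda\rr_t\ge(1-\alpha)\,\sigma_1(\M_\lambda)$ for a fixed $\alpha$ (they take $\alpha=\tfrac14$). This ``crude regime'' bound holds after $\calO\bigl(\tfrac{1}{\alpha}\log\tfrac{1}{\mu'\alpha}\bigr)$ exact power iterations \emph{regardless of the spectral gap}, because it suffices for the iterate's mass to concentrate on eigenvectors with eigenvalue at least $(1-\alpha/2)\sigma_1$, not on $\w_1$ alone. That gap-free Rayleigh-quotient estimate is precisely what is needed to bracket $\lambda_{(s)}-\rho_1$ to within a constant factor and drive the geometric shrinkage; your eigenvector-alignment lemma cannot deliver it with only $m_1$ steps when the gap is small.

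A related confusion: you attribute the $(\tilde\Delta/18)^{m-1}$ factors in $\te$ to a $(1-\delta)^{m-1}$ term from your $\tan\angle$ recursion. In fact the paper obtains these constants by \emph{coupling} the inexact iterates to the exact ones: the distance $\norm{\rr_t-\rr_t^*}$ satisfies a recursion that amplifies by a factor $2\kappa_\lambda$ per step, and since $\kappa_\lambda=(\lambda+\rho_1)/(\lambda-\rho_1)\le 9/\tilde\Delta$ throughout Phase~I, one gets $(2\kappa_\lambda)^m\le(18/\tilde\Delta)^m$. So $18/\tilde\Delta$ is a condition-number bound, not a relative-gap bound. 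A correctly executed $\tan\angle$ analysis would in fact give a \emph{better} (saturating rather than geometrically exploding) dependence on the per-step error---but it would not reproduce the theorem's stated constants, and more importantly it still cannot furnish the gap-free crude-regime estimate the localization loop relies on.
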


\subsubsection{Phase II: final normalization}
\vspace*{-1ex}

In order to satisfy the CCA constraints, we perform a last normalization
\begin{align}\label{e:final-normal}
  \hu \leftarrow \uu_{T} / \sqrt{\uu_{T}^\top \Sxx \uu_{T}}, \qquad
  \hv \leftarrow \vv_{T} / \sqrt{\vv_{T}^\top \Syy \vv_{T}}.
\end{align}
And we output $(\hu,\hv)$ as our final approximate solution to~\eqref{e:cca}. 
We show that this step does not cause much loss in the alignments, as stated below (see it proof in Appendix~\ref{append:proof-shift-and-invert-phase-II}).

\begin{theorem}[Convergence of Algorithm~\ref{alg:meta-shift-and-invert}, Phase II] \label{thm:shift-and-invert-phase-II}
  Let Phase I of Algorithm~\ref{alg:meta-shift-and-invert} outputs $(\uu_T, \vv_T)$ that satisfy~\eqref{e:phaseI-2-main}. Then after~\eqref{e:final-normal},
we obtain an approximate solution $(\hu,\hv)$ to~\eqref{e:cca} such that $\hu^\top \Sxx \hu = \hv^\top \Syy \hv = 1$,  $\min\left( (\hu^\top \Sxx \uu^*)^2,\, (\hv^\top \Syy \vv^*)^2 \right) \ge 1 - \eta$, and $\hu^\top \Sxy \hv \ge \rho_1 (1 - 2 \eta)$.
\end{theorem}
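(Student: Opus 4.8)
The plan is to translate \eqref{e:phaseI-2-main} back into the whitened, concatenated coordinates of Phase~I and then control the effect of the per-view renormalization~\eqref{e:final-normal}. Write $\bphi_T := \Sxx^{1/2}\uu_T$ and $\bpsi_T := \Syy^{1/2}\vv_T$, and recall $\bphi = \Sxx^{1/2}\uu^*$, $\bpsi = \Syy^{1/2}\vv^*$ are unit vectors. First I would observe that, by the normalization~\eqref{e:normal-3}, both $\z_T := \tfrac{1}{\sqrt{2}}[\bphi_T;\bpsi_T]$ and $\z^* := \tfrac{1}{\sqrt{2}}[\bphi;\bpsi] \in \bbR^d$ are unit vectors, and that~\eqref{e:phaseI-2-main} reads $(\z_T^\top\z^*)^2 \ge 1-\eta^2/64$. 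Fixing the sign of the pair $(\bphi,\bpsi)$ so that $\z_T^\top\z^*\ge 0$, this gives $\|\z_T-\z^*\|^2 = 2(1-\z_T^\top\z^*) \le 2\bigl(1-\sqrt{1-\eta^2/64}\,\bigr) =: 2\delta$ with $0\le\delta\le\eta^2/64$; since $\|\z_T-\z^*\|^2 = \tfrac12\bigl(\|\bphi_T-\bphi\|^2+\|\bpsi_T-\bpsi\|^2\bigr)$, we get $\|\bphi_T-\bphi\|^2\le 4\delta$ and $\|\bpsi_T-\bpsi\|^2\le 4\delta$ simultaneously.

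Next I would push this through~\eqref{e:final-normal}. Writing $\hphi := \Sxx^{1/2}\hu = \bphi_T/\|\bphi_T\|$ and $\hpsi := \Syy^{1/2}\hv = \bpsi_T/\|\bpsi_T\|$ (unit vectors), the equality constraints $\hu^\top\Sxx\hu = \hv^\top\Syy\hv = 1$ are immediate, and $\hu^\top\Sxx\uu^* = \hphi^\top\bphi$, $\hv^\top\Syy\vv^* = \hpsi^\top\bpsi$. By the reverse triangle inequality $\bigl|\,\|\bphi_T\|-1\,\bigr| \le \|\bphi_T-\bphi\| \le 2\sqrt{\delta}$, hence $\|\hphi-\bphi_T\| = \bigl|\,1-\|\bphi_T\|\,\bigr| \le 2\sqrt{\delta}$ and so $\|\hphi-\bphi\| \le 4\sqrt{\delta}$, which yields $\hphi^\top\bphi = 1-\tfrac12\|\hphi-\bphi\|^2 \ge 1-8\delta \ge 1-\eta^2/8 > 0$, and likewise $\hpsi^\top\bpsi \ge 1-\eta^2/8 > 0$. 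Squaring, $\min\bigl((\hu^\top\Sxx\uu^*)^2,(\hv^\top\Syy\vv^*)^2\bigr) \ge (1-\eta^2/8)^2 \ge 1-\eta$. The one genuinely load-bearing point is that~\eqref{e:normal-3} pins both $\|\bphi_T\|$ and $\|\bpsi_T\|$ near $1$: without it, closeness of the \emph{concatenated} vectors need not survive separate renormalization of the two blocks. I expect this splitting step -- short though it is, via the reverse triangle inequality -- to be the main thing to get right.

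Finally, for the objective I would use $\hu^\top\Sxy\hv = \hphi^\top\T\hpsi$ and decompose $\hphi = \alpha\bphi + \sqrt{1-\alpha^2}\,\bphi^\perp$, $\hpsi = \beta\bpsi + \sqrt{1-\beta^2}\,\bpsi^\perp$, where $\bphi^\perp\perp\bphi$, $\bpsi^\perp\perp\bpsi$ are unit and $\alpha = \hphi^\top\bphi$, $\beta = \hpsi^\top\bpsi$ -- both positive, with $\alpha^2,\beta^2 \ge 1-\eta$, by the previous paragraph. Using $\T\bpsi = \rho_1\bphi$ and $\T^\top\bphi = \rho_1\bpsi$, the two mixed terms vanish ($\bphi^\top\T\bpsi^\perp = \rho_1\bpsi^\top\bpsi^\perp = 0$ and $(\bphi^\perp)^\top\T\bpsi = \rho_1(\bphi^\perp)^\top\bphi = 0$), leaving $\hphi^\top\T\hpsi = \alpha\beta\rho_1 + \sqrt{1-\alpha^2}\sqrt{1-\beta^2}\,(\bphi^\perp)^\top\T\bpsi^\perp \ge \rho_1\bigl(\alpha\beta - \sqrt{(1-\alpha^2)(1-\beta^2)}\,\bigr)$, since $\|\T\| = \rho_1$. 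With $\alpha\beta \ge 1-\eta$ and $\sqrt{(1-\alpha^2)(1-\beta^2)} \le \eta$, this gives $\hu^\top\Sxy\hv \ge \rho_1(1-2\eta)$, completing the proof.
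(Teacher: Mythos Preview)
Your proof is correct and complete, but it takes a genuinely different route from the paper's. The paper's argument exploits the specific eigenstructure of $\M_\lambda$: since $\frac{1}{\sqrt{2}}[\bphi;\bpsi]$ and $\frac{1}{\sqrt{2}}[\bphi;-\bpsi]$ are orthonormal eigenvectors, the large alignment~\eqref{e:phaseI-2-main} forces $(\uu_T^\top\Sxx\uu^* - \vv_T^\top\Syy\vv^*)^2 \le \eta^2/16$; combining this ``difference'' bound with the ``sum'' bound from~\eqref{e:phaseI-2-main} and averaging gives $|\uu_T^\top\Sxx\uu^*|,\,|\vv_T^\top\Syy\vv^*| \ge 1-\eta/4$. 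The paper then handles the renormalization via the AM--HM inequality $\tfrac{1}{a}+\tfrac{1}{b}\ge \tfrac{4}{a+b}$ applied to $a=\uu_T^\top\Sxx\uu_T$, $b=\vv_T^\top\Syy\vv_T$ together with~\eqref{e:normal-3}. Your approach is more elementary: you convert the alignment to a Euclidean distance $\|\z_T-\z^*\|$, split it across the two blocks, and control the per-block renormalization directly with the reverse triangle inequality. This avoids invoking the tailing eigenvector and the AM--HM trick, at the cost of tracking a few more constants; it also makes more transparent exactly where~\eqref{e:normal-3} is load-bearing (your remark about $\|\bphi_T\|,\|\bpsi_T\|$ being pinned near $1$ is the crux). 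For the objective bound $\hu^\top\Sxy\hv \ge \rho_1(1-2\eta)$, the paper's Appendix~\ref{append:proof-shift-and-invert-phase-II} does not spell out the argument; your decomposition along $\bphi,\bphi^\perp$ and $\bpsi,\bpsi^\perp$, with the mixed terms vanishing via $\T\bpsi=\rho_1\bphi$ and $\T^\top\bphi=\rho_1\bpsi$, matches what the paper does in the proof of Theorem~\ref{thm:alternating-least-squares-exact} and is perfectly fine here.
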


\subsubsection{Time complexity}
\label{sec:meta-shift-and-invert-time}
\vspace*{-1ex}

We have shown in Theorem~\ref{thm:shift-and-invert-phase-I} that Phase I only approximately solves a small number of instances of~\eqref{e:new-lsq-3}. The normalization steps~\eqref{e:normal-3} require computing the projections of the traning set which are reused for computing batch gradients of~\eqref{e:new-lsq-3}. The final normalization~\eqref{e:final-normal} is done only once and costs $\calO(dN)$. 
Therefore, the time complexity of our algorithm mainly comes from solving the least squares problems~\eqref{e:new-lsq-3} using SGD methods in a blackbox fashion. And the time complexity for SGD methods depends on the condition number of~\eqref{e:new-lsq-3}. 
Denote
\begin{align}
  \Q_{\lambda}=  \left[
    \begin{array}{cc}
      \lambda \Sxx & - \Sxy \\
      - \Sxy^\top & \lambda \Syy
    \end{array}
  \right]=
  \left[
    \begin{array}{cc}
      \Sxx^{\frac{1}{2}} & \\
      & \Syy^{\frac{1}{2}}
    \end{array}\right]
  \left[
    \begin{array}{cc}
      \lambda \I & - \T \\
      - \T^\top & \lambda \I
    \end{array}
  \right]
  \left[
    \begin{array}{cc}
      \Sxx ^{\frac{1}{2}}& \\
      & \Syy^{\frac{1}{2}}
    \end{array}
  \right].
\end{align}
\begin{flalign*}
\text{It is clear that} \qquad\qquad \sigma_{\max}(\Q_{\lambda}) & \le (\lambda+\rho_1)\cdot \max \left( \sigma_{\max}(\Sxx), \sigma_{\max}(\Syy) \right), &  \\
 \sigma_{\min}(\Q_{\lambda}) & \ge (\lambda-\rho_1)\cdot \min \left( \sigma_{\min}(\Sxx), \sigma_{\min}(\Syy) \right). &
\end{flalign*}
We have shown in the proof of Theorem~\ref{thm:shift-and-invert-phase-I} that $\frac{\lambda + \rho_1}{\lambda - \rho_1} \le \frac{9}{\tilde{\Delta}} \le \frac{9}{c_1 {\Delta}} $ throughout Algorithm~\ref{alg:meta-shift-and-invert} (cf. Lemma~\ref{lem:shift-and-invert-repeat-until}, Appendix~\ref{append:proof-shift-and-invert-iteration-complexity}), and thus the condtion number for AGD is $\frac{\sigma_{\max} (\Q_{\lambda})}{\sigma_{\min} (\Q_{\lambda})} \le \frac{9/c_1}{\rho_1-\rho_2} \tilde{\kappa}^\prime$, where $\tilde{\kappa}^\prime := \frac{\max \left( \sigma_{\max}(\Sxx),\, \sigma_{\max}(\Syy) \right)} {\min \left( \sigma_{\min}(\Sxx),\, \sigma_{\min}(\Syy) \right) }$. 
For SVRG/ASVRG, the relevant condition number depends on the gradient Lipschitz constant of individual components. We show in Appendix~\ref{append:cond-h} (Lemma~\ref{lem:shift-and-invert-cond-svrg}) that the relevant condition number is at most $\frac{9/c_1}{\rho_1-\rho_2} \tilde{\kappa}$, where 
$\tilde{\kappa} := \frac{\max_i\, \max \left(\norm{\x_i}^2,\, \norm{\y_i}^2 \right)}{\min \left( \sigma_{\min}(\Sxx),\, \sigma_{\min}(\Syy) \right) }$. 
An interesting issue for SVRG/ASVRG is that, depending on the value of $\lambda$, the independent components $h_t^i(\uu,\vv)$ may be nonconvex. If $\lambda\ge 1$, each component is still guaranteed to by convex; otherwise, some components might be non-convex, with the overall average $\frac{1}{N} \sum_{i=1}^N \h_t^i$ being convex. In the later case, we use the modified analysis of SVRG~\cite[Appendix~B]{GarberHazan15c} for its time complexity. We use warm-start in SI as in ALS, and the initial suboptimality for each subproblem can be bounded similarly.

The total time complexities of our SI meta-algorithm are given in Table~\ref{t:final}. Note that $\tilde{\kappa}$ (or $\tilde{\kappa}^\prime$) and $\frac{1}{\rho_1-\rho_2}$ are multiplied together, giving the effective condition number. 
When using SVRG as the least squares solver, we obtain the total time complexity of $\smash{ \tilde{\calO} \left( d (N + \tilde{\kappa} \frac{1}{\rho_1 - \rho_2} ) \cdot \log^2 \left(\frac{1}{\eta}\right) \right) }$ if all components are convex, and $\tilde{\calO} \left( d (N + (\tilde{\kappa} \frac{1}{\rho_1 - \rho_2})^2 ) \cdot \log^2 \left(\frac{1}{\eta}\right) \right)$ otherwise. When using ASVRG, we have $\tilde{\calO} \left( d  \sqrt{N}  \sqrt{\tilde{\kappa}}  \sqrt{\frac{1}{\rho_1-\rho_2}} \cdot \log^2 \left(\frac{1}{\eta}\right) \right)$ if all components are convex, and $\tilde{\calO} \left( d N^{\frac{3}{4}} \sqrt{\tilde{\kappa}} \sqrt{\frac{1}{\rho_1-\rho_2}} \cdot \log^2 \left(\frac{1}{\eta}\right) \right)$ otherwise. 
Here $\tilde{\calO}(\cdot)$ hides poly-logarithmic dependences on $\frac{1}{\tilde{\mu}}$ and $\frac{1}{\Delta}$. 
It is remarkable that the SI meta-algorithm is able to separate the dependence of dataset size $N$ from other parameters in the time complexities.

\textbf{Parallel work\ } In a parallel work~\cite{Ge_16a}, the authors independently proposed a similar ALS algorithm\footnote{Our arxiv preprint for the ALS meta-algorithm was posted before their paper got accepted by ICML~2016.}, and they solve the least squares problems using AGD. The time complexity of their algorithm for extracting the first canonical correlation is $\smash{ \tilde{\calO} \left( d N \sqrt{\kappa^\prime} \, \frac{\rho_1^2}{\rho_1^2 - \rho_2^2} \cdot \log \left(\frac{1}{\eta} \right) \right) }$, 
which has linear dependence on ${ \frac{\rho_1^2}{\rho_1^2 - \rho_2^2} \log \left(\frac{1}{\eta} \right) }$ (so their algorithm is linearly convergent, but our complexity for ALS+AGD has quadratic dependence on this factor), but typically worse dependence on $N$ and $\kappa^\prime$ (see remarks in Section~\ref{sec:svrg}). Moreover, our SI algorithm tends to significantly outperform ALS theoretically and empirically. It is future work to remove extra $\log\left(\frac{1}{\eta}\right)$ dependence in our analysis.



\textbf{Extension to multi-dimensional projections\ } To extend our algorithms 
to $L$-dimensional projections, we can extract the dimensions sequentially and remove the explained correlation from $\Sxy$ each time we extract a new dimension~\cite{Witten_09a}. For the ALS meta-algorithm, a cleaner approach is to extract the $L$ dimensions simultaneously using (inexact) orthogonal iterations~\cite{GolubLoan96a}, 
in which case the subproblems become multi-dimensional regressions and our normalization steps are of the form $\U_{t} \leftarrow \tilde{\U}_{t} ( \tilde{\U}_{t}^\top \Sxx \tilde{\U}_{t} )^{-\frac{1}{2}}$ (the same normalization is used by~\cite{Ma_15b,Wang_15c}). Such normalization involves the eigenvalue decomposition of a $L\times L$ matrix and can be solved exactly as we typically look for low dimensional projections. Our analysis for $L=1$ can be extended to this scenario and the convergence rate of ALS will depend on the gap between $\rho_{L}$ and $\rho_{L+1}$.

\vspace*{-1.5ex}
\section{Experiments}
\label{sec:expt}
\vspace*{-2ex}

We demonstrate the proposed algorithms, namely \ALSVR, \ALSAVR, \ShiftVR, and \ShiftAVR, abbreviated as ``meta-algorithm -- least squares solver'' (VR for SVRG, and AVR for ASVRG) on three real-world datasets: Mediamill~\cite{Snoek_06a} ($N=3\times 10^4$), JW11~\cite{Westbur94a} ($N=3\times 10^4$), and MNIST~\cite{Lecun_98a} ($N=6\times 10^4$). 
We compare our algorithms with batch \AppGrad\ and its stochastic version \sAppGrad~\cite{Ma_15b}, as well as the \CCALin\ algorithm in parallel work~\cite{Ge_16a}. 
For each algorithm, we compare the canonical correlation estimated by the iterates at different number of passes over the data with that of the exact solution by SVD. 
For each dataset, we vary the regularization parameters  $\gamma_x=\gamma_y$ over $\{10^{-5}, 10^{-4}, 10^{-3}, 10^{-2}\}$ to vary the least squares condition numbers, and larger regularization leads to better conditioning. We plot the suboptimality in objective vs. \# passes for each algorithm in Figure~\ref{fig:real_data}. Experimental details (e.g. SVRG parameters) are given in Appendix~\ref{append:expts}.

\begin{figure}[t]
\centering
\begin{tabular}{@{}c@{\hspace{0.005\linewidth}}c@{\hspace{0.01\linewidth}}c@{\hspace{0.005\linewidth}}c@{\hspace{0.005\linewidth}}c@{}}
& \scriptsize $\gamma_x=\gamma_y=10^{-5}$ & \scriptsize $\gamma_x=\gamma_y=10^{-4}$ & \scriptsize $\gamma_x=\gamma_y=10^{-3}$ & \scriptsize $\gamma_x=\gamma_y=10^{-2}$ \\[-.5ex]
& 
{\scriptsize $\kappa^\prime=53340,\ \delta=5.345$} & 
{\scriptsize $\kappa^\prime=5335,\ \delta=4.924$} & 
{\scriptsize $\kappa^\prime=534.4,\ \delta=4.256$} & 
{\scriptsize $\kappa^\prime=54.34,\ \delta=2.548$} \\[.5ex]
\rotatebox{90}{\hspace{1.5em}Mediamill} &
\rotatebox{90}{\hspace{1.5em}\scriptsize Suboptimality} 
\includegraphics[width=0.235\linewidth]{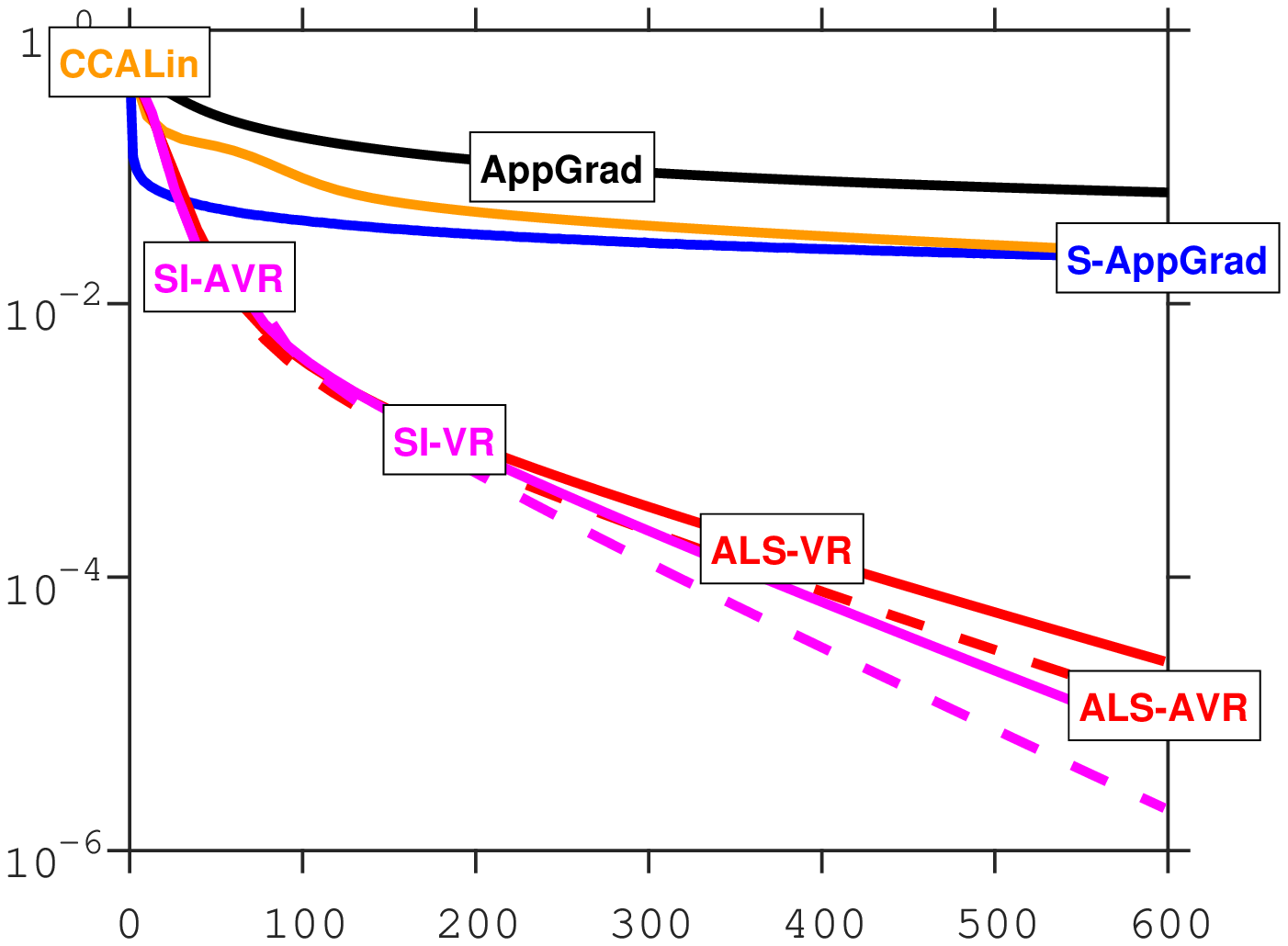} &
\includegraphics[width=0.235\linewidth]{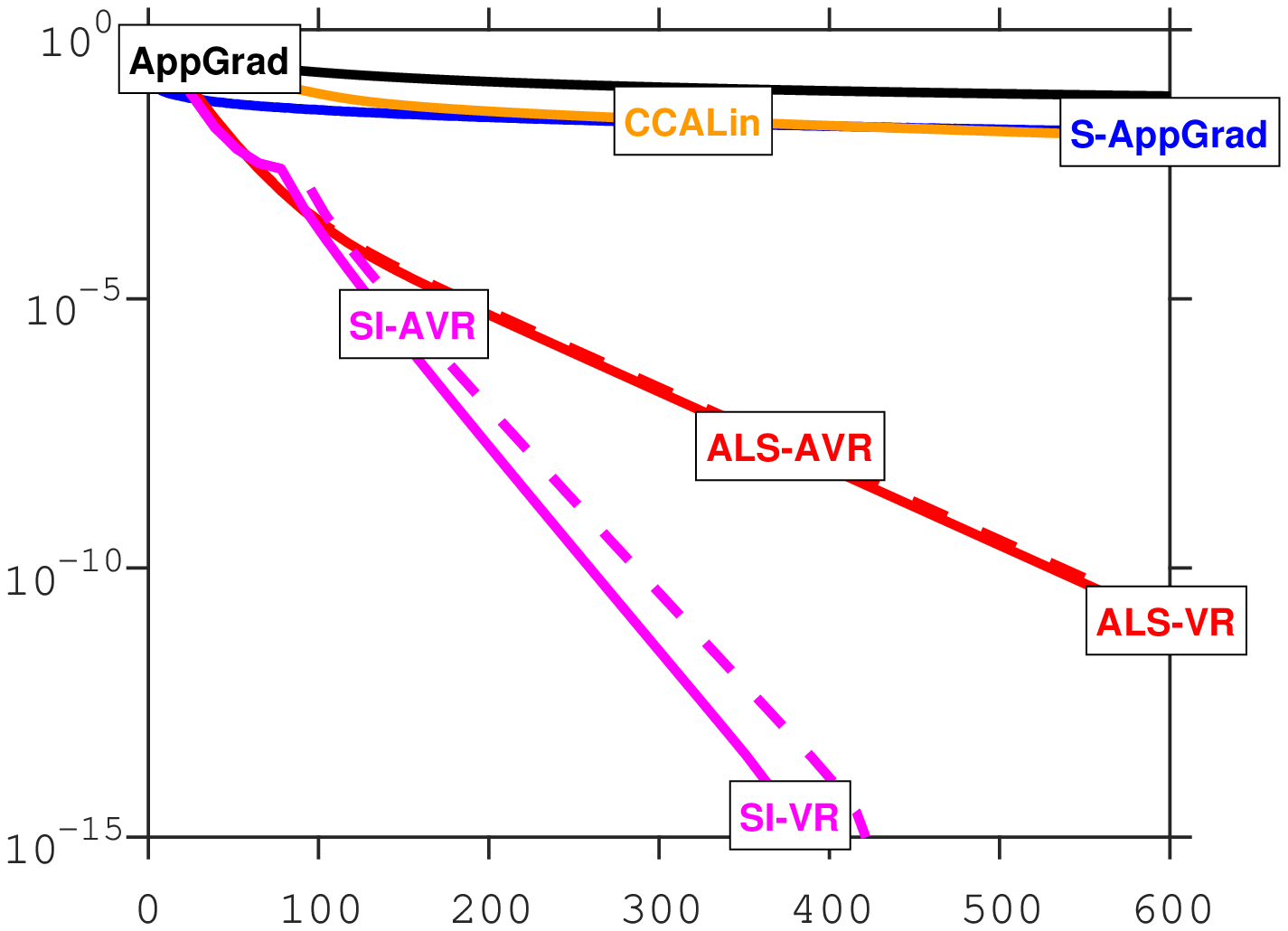} &
\includegraphics[width=0.235\linewidth]{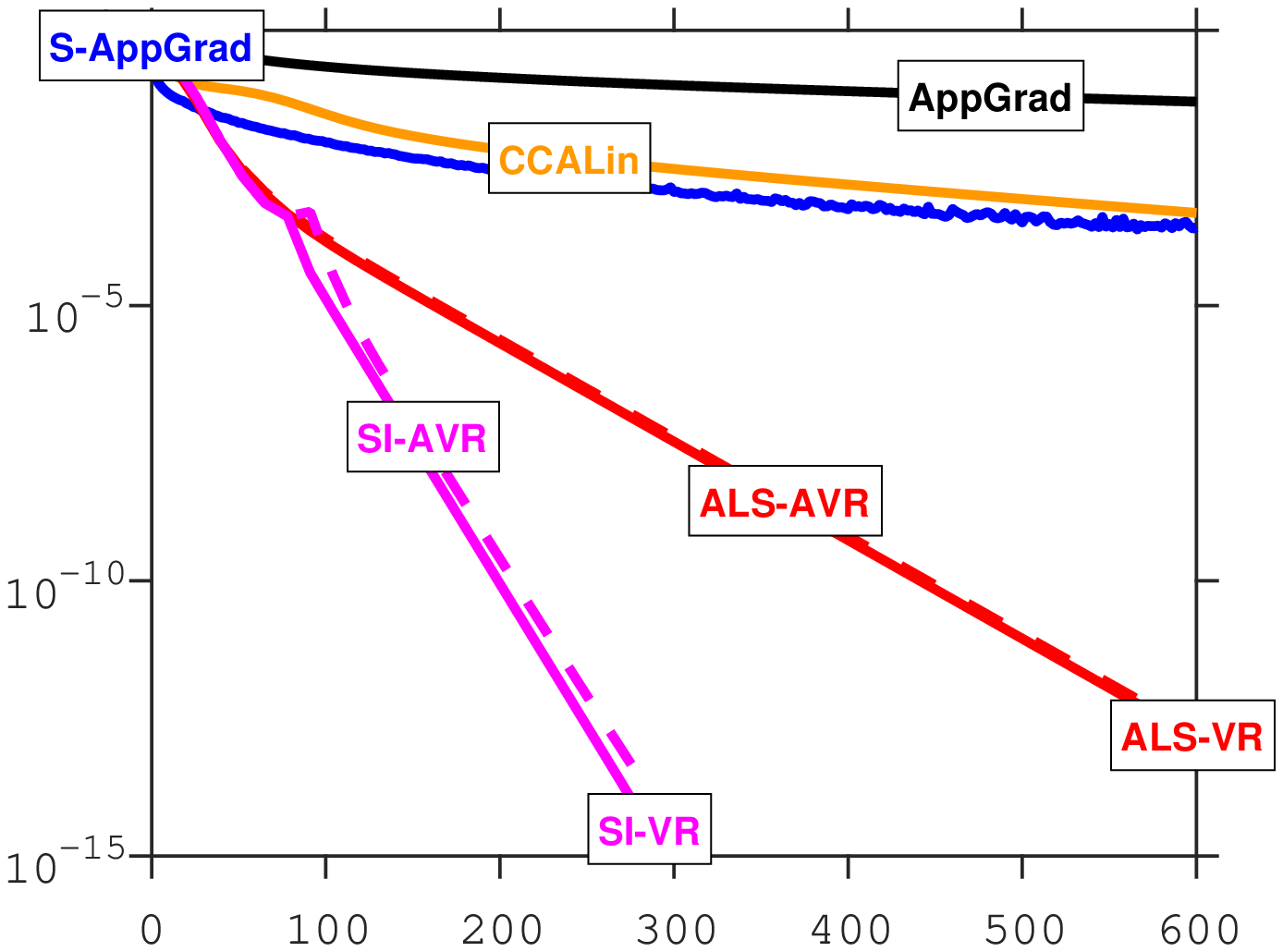} &
\includegraphics[width=0.235\linewidth]{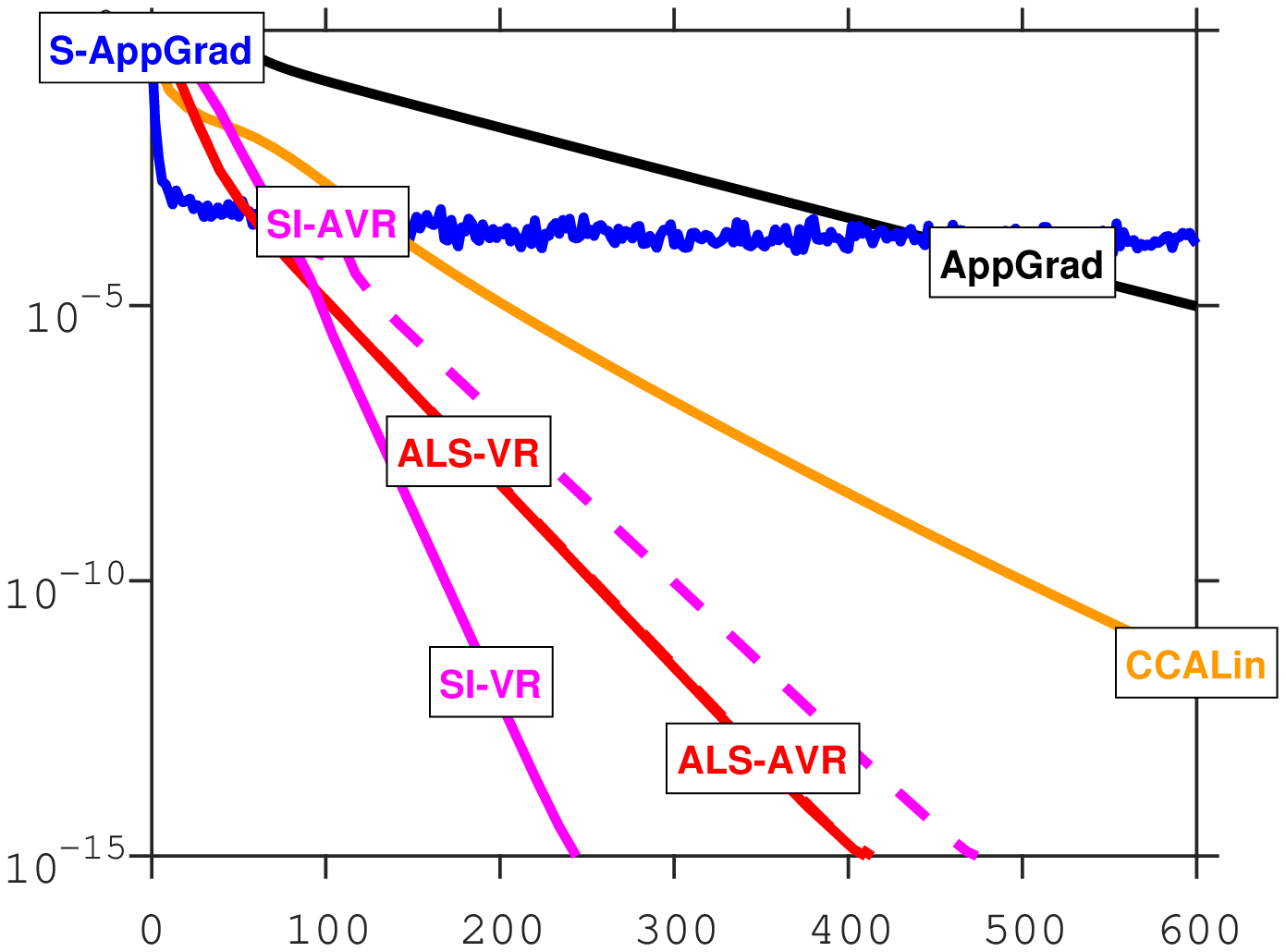} \\[-.5ex]
& 
{\scriptsize $\kappa^\prime=2699000,\ \delta=11.22$} & 
{\scriptsize $\kappa^\prime=332800,\ \delta=11.10$} & 
{\scriptsize $\kappa^\prime=34070,\ \delta=10.58$} & 
{\scriptsize $\kappa^\prime=3416,\ \delta=9.082$} \\[.0ex]
\rotatebox{90}{\hspace{2.5em}JW11} &
\rotatebox{90}{\hspace{1.5em}\scriptsize Suboptimality} 
\includegraphics[width=0.235\linewidth]{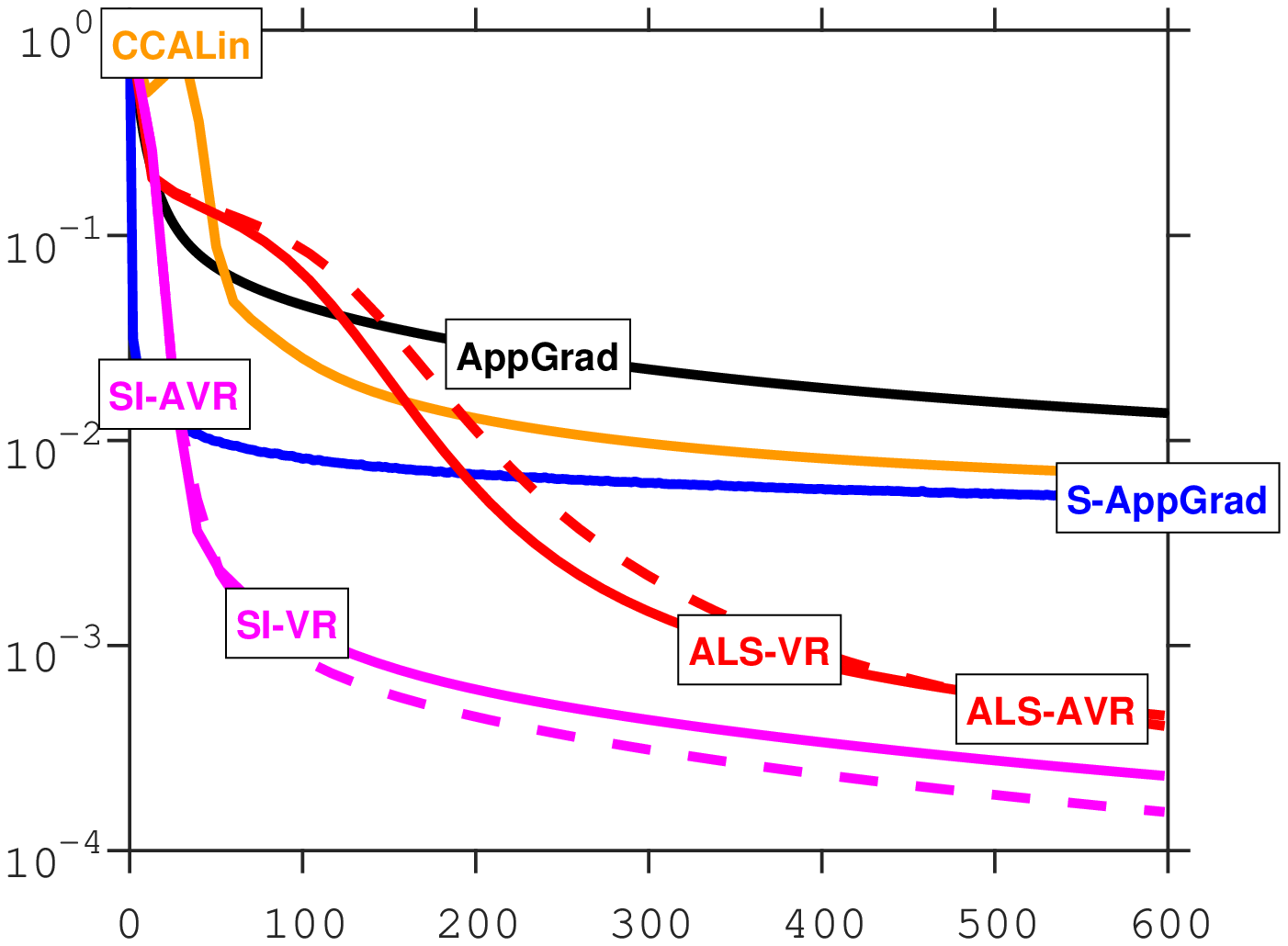} &
\includegraphics[width=0.235\linewidth]{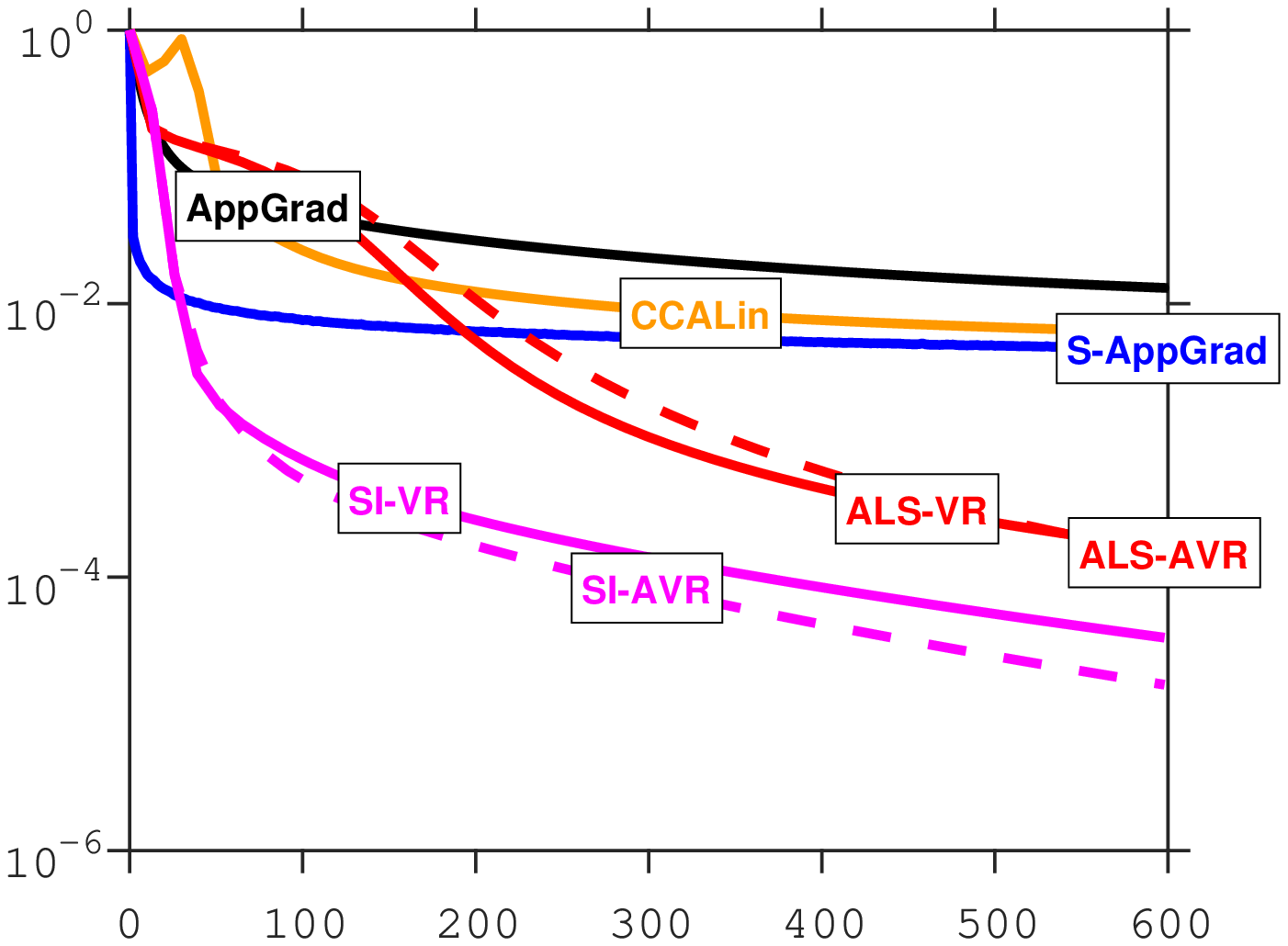} &
\includegraphics[width=0.235\linewidth]{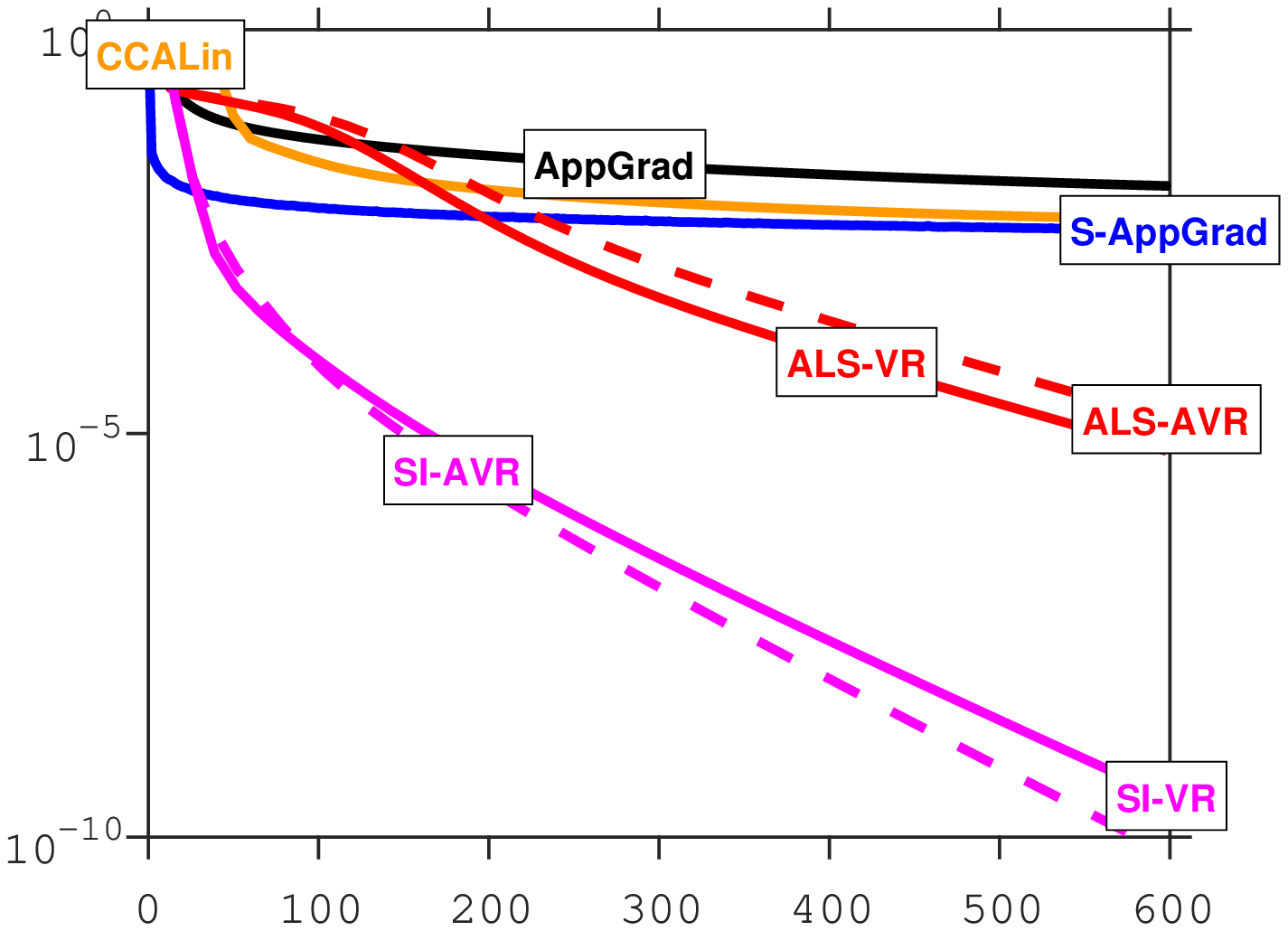} &
\includegraphics[width=0.235\linewidth]{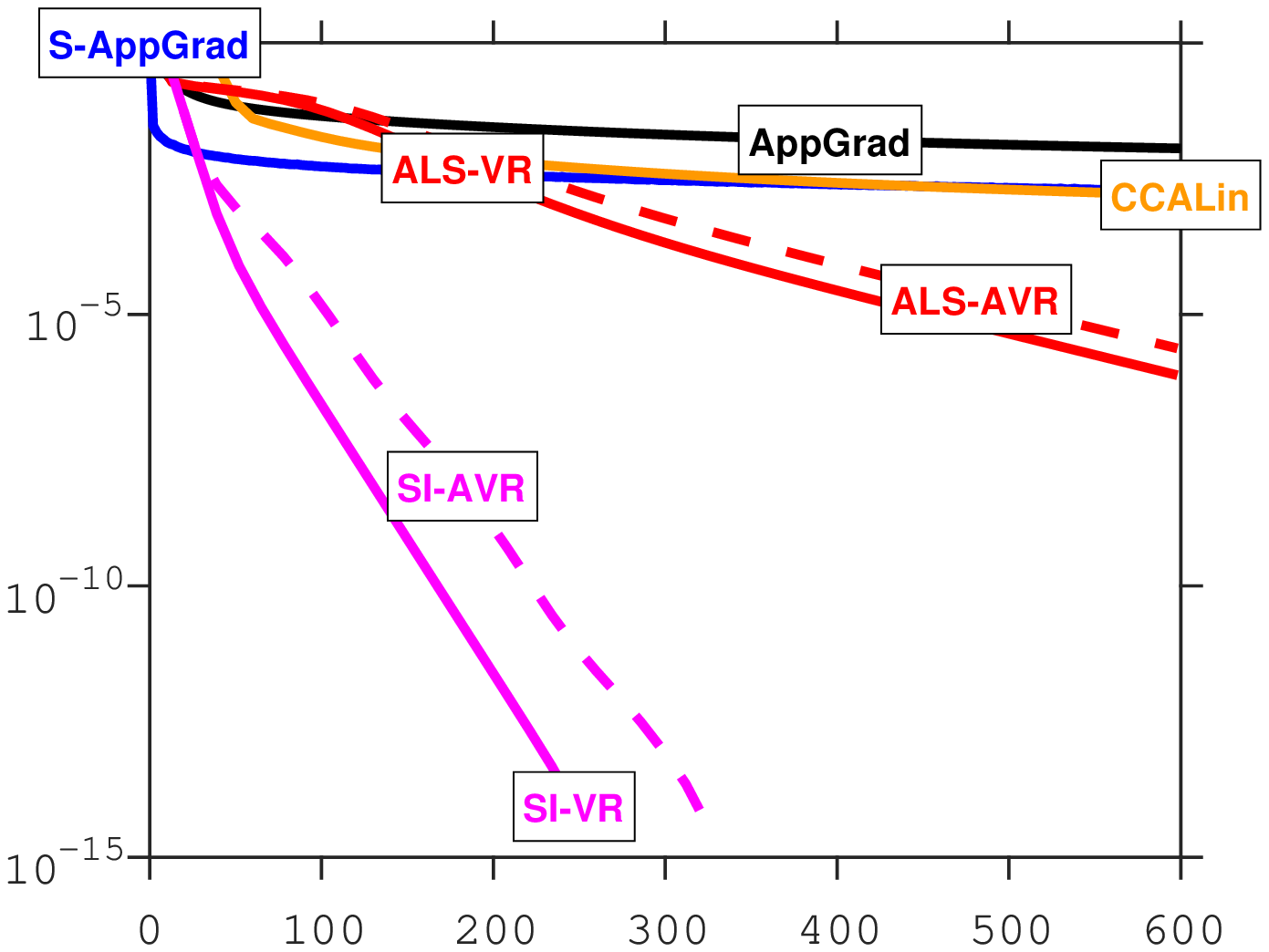} \\[-.5ex]
& 
{\scriptsize $\kappa^\prime=2235000,\ \delta=12.82$} & 
{\scriptsize $\kappa^\prime=223500,\ \delta=12.75$} & 
{\scriptsize $\kappa^\prime=22350,\ \delta=12.30$} & 
{\scriptsize $\kappa^\prime=2236,\ \delta=9.874$} \\[.5ex]
\rotatebox{90}{\hspace{1.5em}MNIST} &
\rotatebox{90}{\hspace{1.5em}\scriptsize Suboptimality} 
\includegraphics[width=0.235\linewidth]{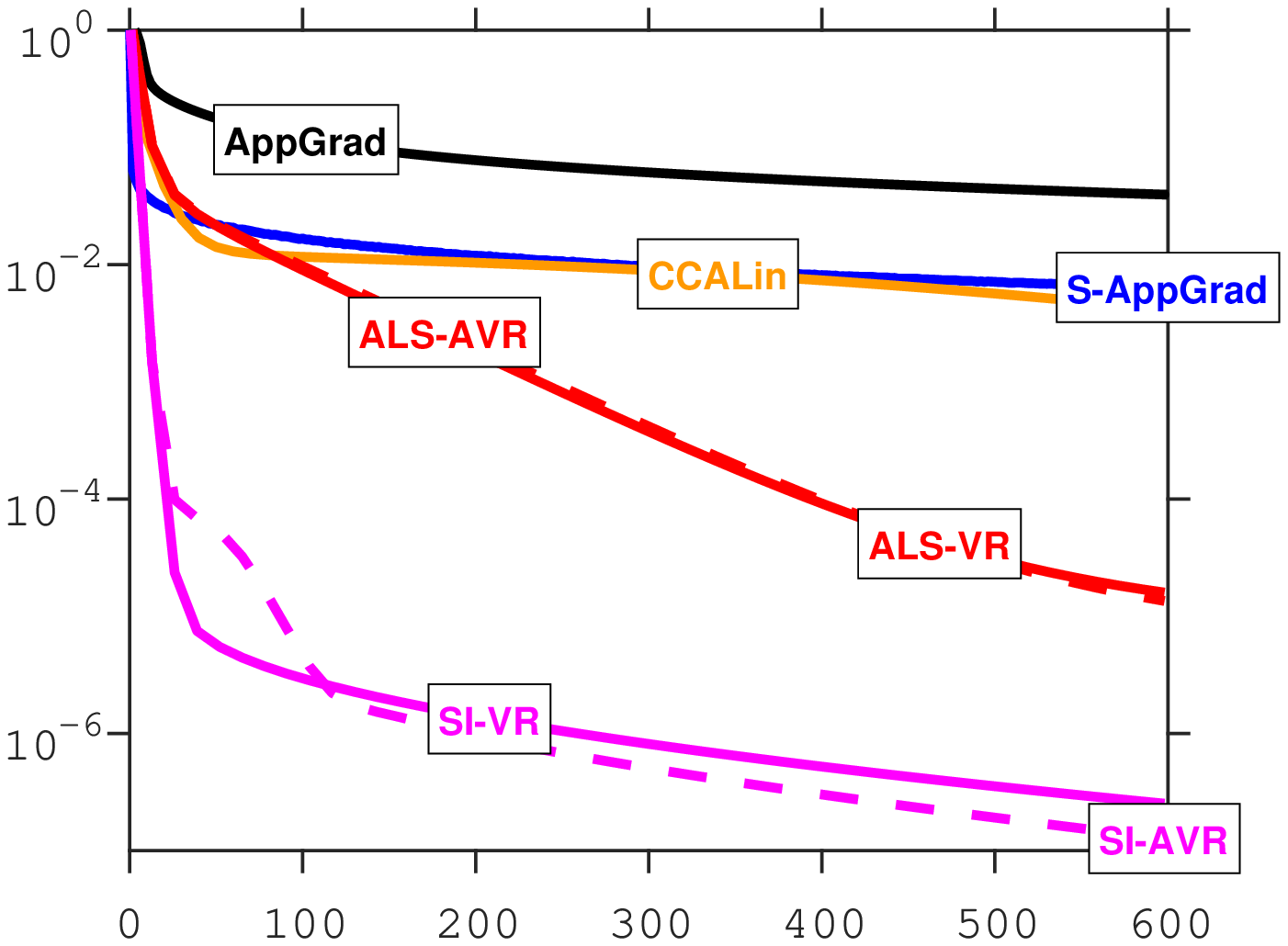} &
\includegraphics[width=0.235\linewidth]{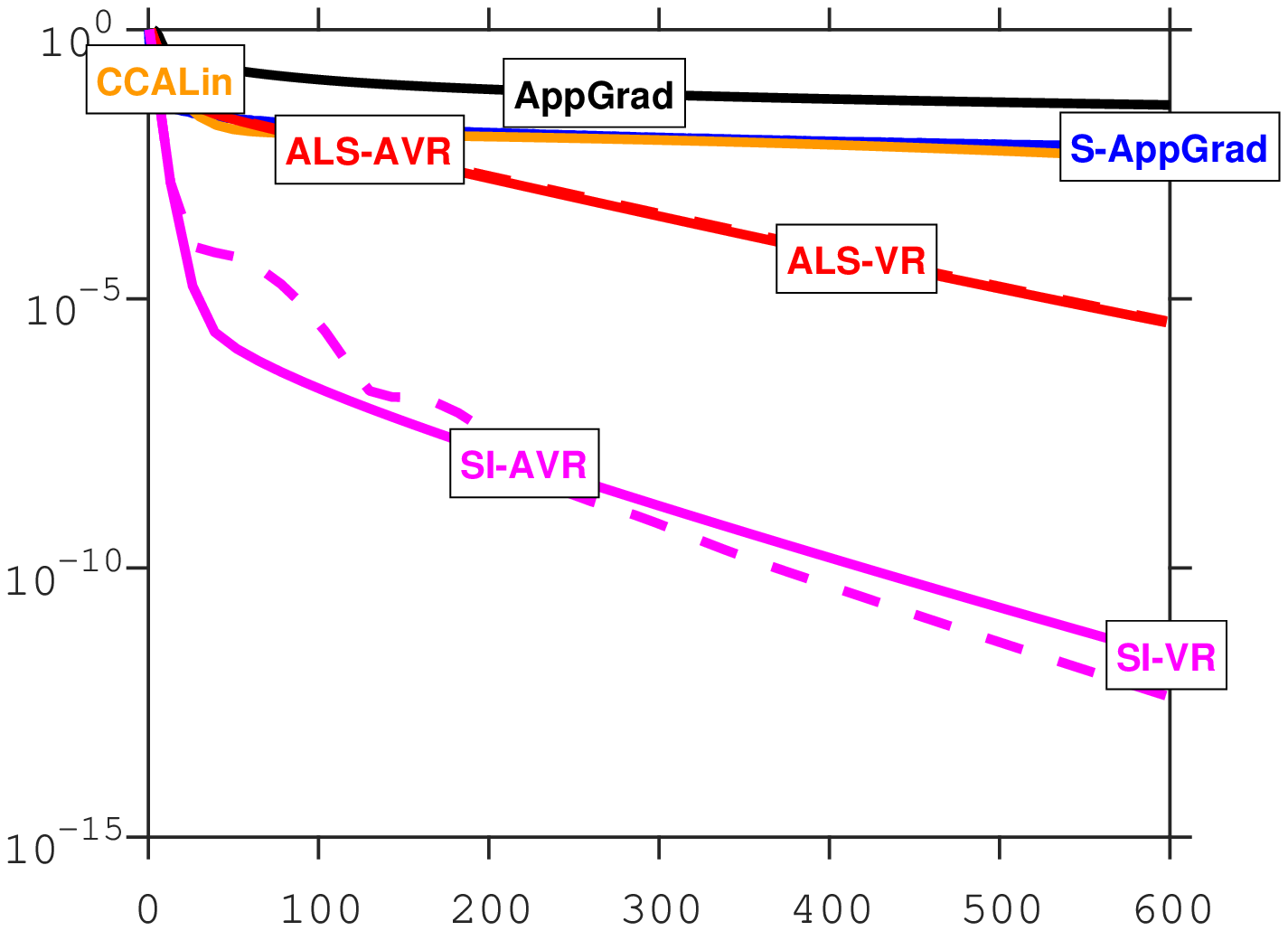} &
\includegraphics[width=0.235\linewidth]{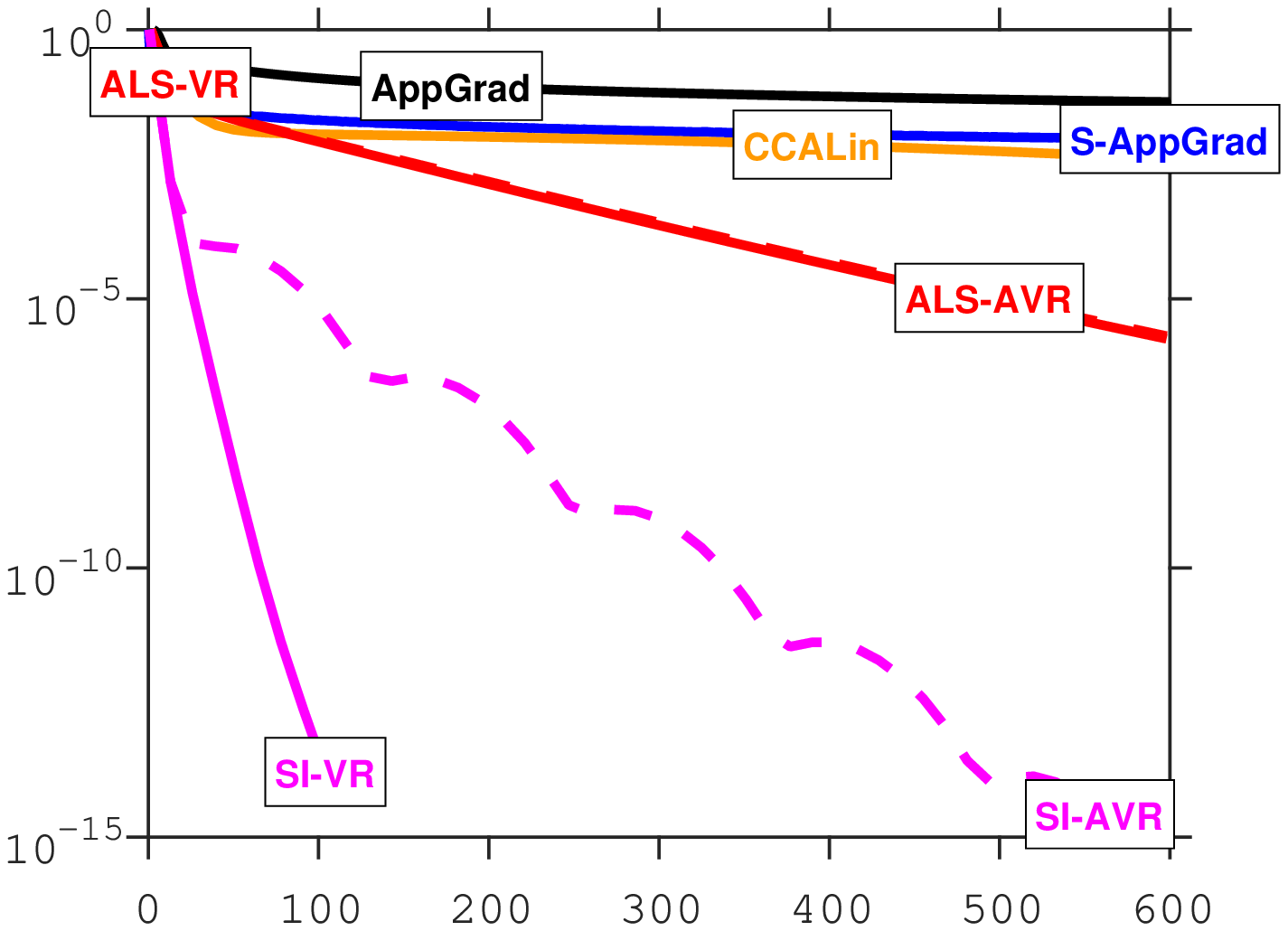} &
\includegraphics[width=0.235\linewidth]{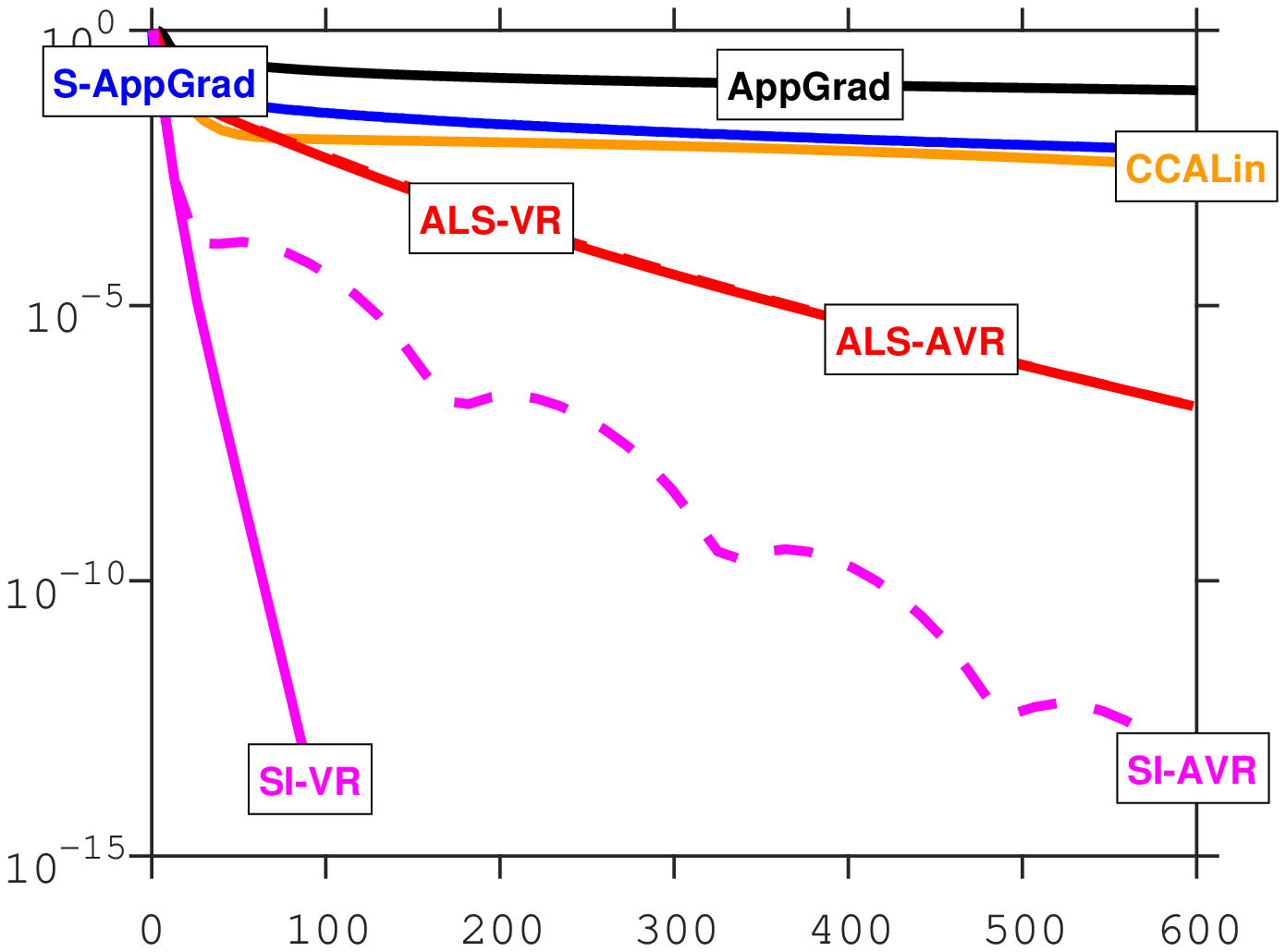} \\ [-.5ex]
& \scriptsize \# Passes & \scriptsize \# Passes & \scriptsize \# Passes & \scriptsize \# Passes
\end{tabular}\vspace*{-1ex}
\caption{Comparison of suboptimality vs. \# passes for different algorithms. For each dataset and regularization parameters $(\gamma_x,\,\gamma_y)$, we give $\kappa^\prime = \max\left(\frac{\sigma_{\max} (\Sxx)}{\sigma_{\min} (\Sxx)},\ \frac{\sigma_{\max} (\Syy)}{\sigma_{\min} (\Syy)}\right)$ and $\delta = \frac{\rho_{1}^2}{\rho_{1}^2 - \rho_{2}^2}$.}
\label{fig:real_data}
\vspace*{-2.5ex}
\end{figure}

We make the following observations from the results. First, the proposed stochastic algorithms significantly outperform batch gradient based methods \AppGrad/\CCALin. This is because  the least squares condition numbers for these datasets are large, and SVRG enable us to decouple dependences on the dataset size $N$ and the condition number $\kappa$ in the time complexity. Second, \ShiftVR\ converges faster than \ALSVR\ as it further decouples the dependence on $N$ and the singular value gap of $\T$. Third, inexact normalizations keep the \sAppGrad\ algorithm from converging to an accurate solution. Finally, ASVRG improves over SVRG when the the condition number is large.

\vspace*{-1.5ex}
\subsubsection*{Acknowledgments} \vspace*{-1.5ex}
Research partially supported by NSF BIGDATA grant 1546500. Weiran Wang would like to thank Karen Livescu for bringing him into the world of CCA. And Karen Livescu would like to thank Sham Kakade for starting her working on CCA.

\clearpage
\bibliographystyle{unsrt}
\bibliography{nips16a}

\clearpage
\appendix
\section{Proof of Theorem~\ref{thm:alternating-least-squares-exact}}
\label{append:proof-alternating-least-squares-exact}

\begin{proof}
  It is easy to see that by the end of the first iteration of Algorithm~\ref{alg:als},  $\tpsi_1$ and $\bpsi_1$ lie in the span of $\{\b_i\}_{i=1}^r$, while $\tphi_1$ and $\bphi_1$ lie in the span of $\{\aa_i\}_{i=1}^r$. And therefore they remain in these spaces for all $t\ge 1$.

  Let us first focus on $\bphi_{t}$. For $t\ge 2$, we observe that
  \begin{align*}
    \bphi_{t}=\T \bpsi_{t-1} \big/ \norm{\tphi_{t}} = \T \T^\top \bphi_{t-2} \big/ \left(\norm{\tphi_{t}}\cdot \norm{\tpsi_{t-1}} \right).
  \end{align*}
  Since $\norm{\bphi_{t-2}}=\norm{\bphi_{t}}=1$, it is equivalent to using the following updates:
  \begin{align*}
    \bphi_{t} \leftarrow \T \T^\top \bphi_{t-2}, \qquad \bphi_{t} \leftarrow \bphi_{t} / \norm{\bphi_{t}}.
  \end{align*}
  This indicates that, Algorithm~\ref{alg:als} runs the standard power iterations on $\T \T^\top$ to generate the $\{\bphi_{t}\}_{t\ge 1}$ sequence for every two steps.

  \textbf{(i)} For $t=2,4,\dots$, we have $\bphi_{t}= \frac{(\T \T^\top)^{\frac{t}{2}} \bphi_0}{\norm{(\T \T^\top)^{\frac{t}{2}} \bphi_0}}$. Let $\M=\T \T^\top$, whose nonzero eigenvalues are $\rho_1^2 \ge \rho_2^2 \ge \dots \ge \rho_r^2 >0$, with corresponding eigenvectors $\aa_{1}, \dots, \aa_r$. Then, for $i=1,\dots,r$, 
  \begin{align*}
    (\aa_{i}^\top \bphi_{t})^2 & = \frac{ \left( \aa_{i}^\top \M^{\frac{t}{2}} \bphi_0 \right)^2 }{\norm{\M^{\frac{t}{2}} \bphi_0}^2} = \frac{\left(\aa_{i}^\top \M^{\frac{t}{2}} \bphi_0 \right)^2}{\bphi_0^\top \M^{t} \bphi_0} = \frac{\left(\rho_i^{t} \aa_{i}^\top \bphi_0 \right)^2}{\sum_{j=1}^{r} \rho_j^{2t} (\aa_{j}^\top \bphi_0)^2} = \frac{\left(\aa_{i}^\top \bphi_0 \right)^2}{\sum_{j=1}^{r} \left( \frac{\rho_j^2}{\rho_i^2} \right)^{t} (\aa_{j}^\top \bphi_0)^2}  \\
    & \le \frac{\left(\aa_{i}^\top \bphi_0 \right)^2}{\left( \frac{\rho_1^2}{\rho_i^2} \right)^{t} (\aa_{1}^\top \bphi_0)^2}
    = \frac{\left(\aa_{i}^\top \bphi_0 \right)^2}{(\aa_{1}^\top \bphi_0)^2} \left( \frac{\rho_i^2}{\rho_1^2} \right)^{t}
    = \frac{\left(\aa_{i}^\top \bphi_0 \right)^2}{(\aa_{1}^\top \bphi_0)^2} \left( 1 - \frac{\rho_1^2 - \rho_i^2}{\rho_1^2} \right)^{t}  \\  
    & \le \frac{\left(\aa_{i}^\top \bphi_0 \right)^2}{(\aa_{1}^\top \bphi_0)^2} \exp\left(
      -  \frac{\rho_1^2 - \rho_i^2}{\rho_1^2} t \right).
  \end{align*}

  \textbf{(ii)} For $t=1,3,\dots$, we have $\bphi_{t} = \frac{(\T \T^\top)^{\frac{t-1}{2}} \T \bpsi_0}{\norm{(\T \T^\top)^{\frac{t-1}{2}} \T \bpsi_0}}$. Let $\N=\T^\top \T$, whose nonzero eigenvalues are $\rho_1^2 \ge \rho_2^2 \ge \dots \ge \rho_r^2 >0$, with corresponding eigenvectors $\b_{1}, \dots, \b_r$. Then, for $i=1,\dots,r$, 
  \begin{align}
    (\aa_{i}^\top \bphi_{t})^2 & = \frac{ \left( \aa_{i}^\top (\T \T^\top)^{\frac{t-1}{2}} \T \bpsi_0 \right)^2 }{\norm{(\T \T^\top)^{\frac{t-1}{2}} \T \bpsi_0}^2} 
    = \frac{\left( (\T^\top \aa_{i})^\top \N^{\frac{t-1}{2}} \bpsi_0 \right)^2}{\bpsi_0^\top \N^{t} \bpsi_0} 
    = \frac{\left(\rho_i^{t} \b_{i}^\top \bpsi_0 \right)^2}{\sum_{j=1}^{r} \rho_j^{2t} (\b_{j}^\top \bpsi_0)^2} \nonumber \\
    & \le \frac{\left(\b_{i}^\top \bpsi_0 \right)^2}{(\b_{1}^\top \bpsi_0)^2} \exp\left(
      - \frac{\rho_1^2 - \rho_i^2}{\rho_1^2} t  \right).  \nonumber
  \end{align}

  Given $\delta \in (0,1)$, define $S(\delta)=\{i: \rho_i^2 > (1-\delta) \rho_1^2\}$. For $\delta_1, \delta_2 \in (0,1)$, define
  \begin{align*}
    T(\delta_1,\delta_2) := \ceil{ \frac{1}{\delta_1} \log\left( \frac{1}{\mu \delta_2} \right) }.
  \end{align*}
  For all $i \not \in S(\delta_1)$, when $t > T(\delta_1,\delta_2)$, it holds that $(\aa_i^\top \bphi_{t})^2 \le \delta_2 (\aa_{i}^\top \bphi_0)^2$ if $t$ is even, and  $(\aa_i^\top \bphi_{t})^2 \le \delta_2 (\b_{i}^\top \bpsi_0)^2$ if $t$ is odd. In both cases, we have $\sum_{i\in S(\delta_1)} (\aa_{i}^\top \bphi_{t})^2 \ge 1-\delta_2$.

  When there exists a postive singular value gap, \ie, $\rho_1-\rho_2>0$, set $\delta_1=(\rho_1^2-\rho_2^2)/\rho_1^2$ and thus $S(\delta_1)={1}$. Futhermore, set $\delta_2=\eta$ and we obtain $(\aa_{1}^\top \bphi_{t})^2 \ge 1 - \eta$.

  The proof for $\bpsi_{t}$ is completely analogous. To obtain the bound on the objective, we have
  \begin{align*}
    \uu_{t}^\top \Sxy \vv_{t} & = \bphi_{t}^\top \T \bpsi_{t} = \rho_1 (\bphi_{t}^\top \aa_1) (\bpsi_{t}^\top \b_1) + \sum_{i=2}^{r} \rho_i (\bphi_{t}^\top \aa_i) (\bpsi_{t}^\top \b_i) \\
    & \ge \rho_1 (\bphi_{t}^\top \aa_1) (\bpsi_{t}^\top \b_1) - \rho_1 \sum_{i=2}^{r} \abs{\bphi_{t}^\top \aa_i}  \abs{\bpsi_{t}^\top \b_i} \\
    & \ge \rho_1 (1-\eta) - \rho_1 \sqrt{\sum_{i=2}^r \left( \bphi_{t}^\top \aa_i\right)^2} \sqrt{\sum_{i=2}^r \left( \bpsi_{t}^\top \b_i \right)^2} \\
    & \ge \rho_1 (1-\eta) - \rho_1 \eta = \rho_1 (1 - 2 \eta),
  \end{align*}
  where we have used the Cauchy-Schwarz inequality in the second inequality.
\end{proof}

\section{Proof of Theorem~\ref{thm:alternating-least-squares-inexact}}
\label{append:proof-alternating-least-squares-inexact}

From now on, we distinguish the iterates of our stochastic algorithm (Algorithm~\ref{alg:meta-alsvr}) from the iterates of the exact power iterations (Algorithm~\ref{alg:als}) and denote the latter with asterisks, \ie, $\tu_{t}^*$ and $\tv_{t}^*$ for the unnormalized iterates and $\uu_{t}^*$ and $\vv_{t}^*$ for the normalized iterates. We denote the exact optimum of $f_t (\uu)$ and $g_t (\vv)$ by $\bu_{t}$ and $\bv_{t}$ respectively.

The following lemma bounds the distance between the iterates of inexact and exact power iterations.
\begin{lemma} \label{lem:alternating-least-squares-distance}
  Assume that Algorithm~\ref{alg:als} and Algorithm~\ref{alg:meta-alsvr} start with the same initialization, \ie, $\tu_0=\tu^*_0$ and $\tv_0=\tv^*_0$. 
  Then, for $t\ge 1$, the unnormalized iterates of Algorithm~\ref{alg:meta-alsvr} satisfy
  \begin{align*}
    \max\left(\norm{\Sxx^{\frac{1}{2}}\tu_{t} - \Sxx^{\frac{1}{2}}\tu_{t}^*},\ \norm{\Syy^{\frac{1}{2}}\tv_{t} - \Syy^{\frac{1}{2}}\tv_{t}^*} \right) \le \tilde{S}_t,
  \end{align*}
  where
  \begin{align*}
    \tilde{S}_t := \sqrt{2 \epsilon}\,\frac{ (2\rho_1/\rho_r)^{t}-1}{(2\rho_1/\rho_r) -1 }.
  \end{align*}
  Furthermore, for $t\ge 1$, the normalized iterates  of Algorithm~\ref{alg:meta-alsvr} satisfy
  \begin{align*}
    \max\left(\norm{\Sxx^{\frac{1}{2}}\uu_{t} - \Sxx^{\frac{1}{2}}\uu_{t}^*},\ \norm{\Syy^{\frac{1}{2}}\vv_{t} - \Syy^{\frac{1}{2}}\vv_{t}^*} \right) \le S_t := \frac{2 \tilde{S}_t}{\rho_r}.
  \end{align*}
\end{lemma}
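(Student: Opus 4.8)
\ The plan is to pass to the transformed variables $\bphi_t=\Sxx^{1/2}\uu_t$, $\tphi_t=\Sxx^{1/2}\tu_t$ and $\bpsi_t=\Syy^{1/2}\vv_t$, $\tpsi_t=\Syy^{1/2}\tv_t$, in which one step of Algorithm~\ref{alg:als} becomes the power step $\tphi_t^\ast=\T\bpsi_{t-1}^\ast$, $\tpsi_t^\ast=\T^\top\bphi_{t-1}^\ast$ followed by rescaling to unit length. I would then prove the two displayed estimates \emph{simultaneously} by induction on $t$, alternating between the unnormalized bound ($\tilde S_t$) and the normalized bound ($S_t$).

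Two ingredients I would isolate first. (a) A \emph{per-step least-squares error bound}: since $f_t$ is an exact quadratic with Hessian $\Sxx$ and minimizer $\bu_t=\Sxx^{-1}\Sxy\vv_{t-1}$, we have $f_t(\uu)-\min f_t=\tfrac12\norm{\Sxx^{1/2}(\uu-\bu_t)}^2$, so the output $\tu_t$ (which satisfies $f_t(\tu_t)-\min f_t\le\epsilon$) obeys $\norm{\tphi_t-\Sxx^{1/2}\bu_t}\le\sqrt{2\epsilon}$, and $\Sxx^{1/2}\bu_t=\T\bpsi_{t-1}$ — crucially the target here is Algorithm~\ref{alg:meta-alsvr}'s own $\bpsi_{t-1}$, not $\bpsi_{t-1}^\ast$. (b) A \emph{normalization perturbation bound}: for nonzero $a,b$ one has $\norm{a/\norm{a}-b/\norm{b}}\le 2\norm{a-b}/\max(\norm{a},\norm{b})$, which I would combine with the lower bound $\norm{\tphi_t^\ast}=\norm{\T\bpsi_{t-1}^\ast}\ge\rho_r$, valid because $\bpsi_{t-1}^\ast$ is a unit vector lying in $\spn{\b_1,\dots,\b_r}$ (as noted in the proof of Theorem~\ref{thm:alternating-least-squares-exact}); symmetric statements hold on the $\bpsi,\tpsi$ side.

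Given these, the induction runs as follows. Base case $t=1$: both algorithms share the normalized initialization, so $\bpsi_0=\bpsi_0^\ast$, hence $\norm{\tphi_1-\tphi_1^\ast}=\norm{\tphi_1-\T\bpsi_0}\le\sqrt{2\epsilon}=\tilde S_1$, and then (b) gives $\norm{\bphi_1-\bphi_1^\ast}\le 2\tilde S_1/\rho_r=S_1$. Inductive step: assuming $\norm{\bpsi_{t-1}-\bpsi_{t-1}^\ast}\le S_{t-1}$,
\[
\norm{\tphi_t-\tphi_t^\ast}\le\norm{\tphi_t-\T\bpsi_{t-1}}+\norm{\T(\bpsi_{t-1}-\bpsi_{t-1}^\ast)}\le\sqrt{2\epsilon}+\rho_1 S_{t-1}=\sqrt{2\epsilon}+\tfrac{2\rho_1}{\rho_r}\tilde S_{t-1},
\]
and substituting the closed form $\tilde S_{t-1}=\sqrt{2\epsilon}\,\frac{(2\rho_1/\rho_r)^{t-1}-1}{(2\rho_1/\rho_r)-1}$ telescopes the right-hand side to exactly $\tilde S_t$; one more application of (b) yields $\norm{\bphi_t-\bphi_t^\ast}\le 2\tilde S_t/\rho_r=S_t$. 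The $\bpsi,\tpsi$ half is identical under $\X\leftrightarrow\Y$.

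I expect the normalization to be the main obstacle: the clean geometric recursion only survives because rescaling to unit length does not amplify the error, and that is exactly the step forcing the lower bound $\norm{\tphi_t^\ast},\norm{\tpsi_t^\ast}\ge\rho_r$ — i.e. one must invoke that the exact iterates lie in the singular subspaces of $\T$, with separate attention needed at $t=1$ where this is inherited from the shared initialization rather than from a completed power step. Everything else (the exact-quadratic identity, the geometric-series telescoping) is routine.
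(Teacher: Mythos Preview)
Your proposal is correct and follows the paper's proof essentially line for line: the same quadratic identity $f_t(\uu)-\min f_t=\tfrac12\norm{\Sxx^{1/2}(\uu-\bu_t)}^2$, the same triangle-inequality split into a $\sqrt{2\epsilon}$ term plus $\rho_1\norm{\bpsi_{t-1}-\bpsi_{t-1}^\ast}$, the same normalization perturbation bound (your ingredient (b) is exactly the paper's chain~\eqref{e:split-errors-3}), and the same lower bound $\norm{\tpsi_{t-1}^\ast}\ge\rho_r$ via membership in the singular subspace. The only cosmetic difference is that the paper inducts on $\tilde S_t$ alone and reads off $S_t$ at the end from~\eqref{e:split-errors-3}, whereas you carry both bounds in tandem; your caution about $t=1$ is well placed, since the paper likewise tacitly assumes the initial $\bphi_0,\bpsi_0$ already lie in the relevant singular subspaces for the $\rho_r$ lower bound to apply at the first step.
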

\begin{proof}
  We focus on the $\{\tu_{t}\}_{t\ge 0}$ and $\{\uu_{t}\}_{t\ge 0}$ sequences below; the proof for $\{\tv_{t}\}_{t\ge 0}$ and $\{\vv_{t}\}_{t\ge 0}$ is completely analogous. 

We prove the bound for unnormalized iterates by induction. First, the case for $t=1$ holds trivially. For $t \ge 2$, we can bound the error of the unnormalized iterates using the exact solution to $f_t(\uu)$: 
  \begin{align} \label{e:split-errors-1}
    \norm{\Sxx^{\frac{1}{2}}\tu_{t} - \Sxx^{\frac{1}{2}}\tu_{t}^*} \le 
    \norm{\Sxx^{\frac{1}{2}}\tu_{t} - \Sxx^{\frac{1}{2}}\bu_{t}} + 
    \norm{\Sxx^{\frac{1}{2}}\bu_{t} - \Sxx^{\frac{1}{2}}\tu_{t}^*} .
  \end{align}

  For the first term of~\eqref{e:split-errors-1}, notice $f_t(\uu)$ is a quadratic function with minimum achieved at $\bu_{t}=\Sxx^{-1} \Sxy \vv_{t-1}$. For the approximate solution $\tu_{t}$, we have 
  \begin{align*}
    f_t(\tu_{t}) - f_t(\bu_{t}) = \frac{1}{2} (\tu_{t} - \bu_{t})^\top  \Sxx (\tu_{t} - \bu_{t}) = \frac{1}{2} \norm{\Sxx^{\frac{1}{2}} \tu_{t} - \Sxx^{\frac{1}{2}} \bu_{t} }^2 \le \epsilon.
  \end{align*}
  It then follows that $\norm{\Sxx^{\frac{1}{2}}\tu_{t} - \Sxx^{\frac{1}{2}}\bu_{t}} \le \sqrt{2\epsilon}$.

  The second term of~\eqref{e:split-errors-1} is concerned with the error due to inexact target in the least squares problem $f_t(\uu)$ as $\vv_{t-1}$ is different from $\vv_{t-1}^*$. We can bound it as 
  \begin{align}
    \norm{ \Sxx^{\frac{1}{2}} \bu_{t} - \Sxx^{\frac{1}{2}} \tu_{t}^* }
    & = \norm{ \Sxx^{\frac{1}{2}} \Sxx^{-1} \Sxy \vv_{t-1} - \Sxx^{\frac{1}{2}} \Sxx^{-1} \Sxy \vv_{t-1}^* } \nonumber \\
    & = \norm{ \left( \Sxx^{-\frac{1}{2}} \Sxy \Syy^{-\frac{1}{2}} \right) \left( \Syy^{\frac{1}{2}} (\vv_{t-1} - \vv_{t-1}^*) \right) }  \nonumber \\ \label{e:split-errors-2}
    & \le  \norm{ \T } \norm{\Syy^{\frac{1}{2}} \vv_{t-1} - \Syy^{\frac{1}{2}} \vv_{t-1}^*}  = \rho_1 \norm{\Syy^{\frac{1}{2}} \vv_{t-1} - \Syy^{\frac{1}{2}} \vv_{t-1}^*}.
  \end{align}

In view of the update rule of our algorithm and the triangle inequality, we have 
  \begin{align} 
    & \norm{\Syy^{\frac{1}{2}} \vv_{t-1} - \Syy^{\frac{1}{2}} \vv_{t-1}^* } \nonumber \\
    \le & \norm{\frac{\Syy^{\frac{1}{2}}\tv_{t-1}}{\norm{\Syy^{\frac{1}{2}}\tv_{t-1}}} - \frac{\Syy^{\frac{1}{2}}\tv_{t-1}}{\norm{\Syy^{\frac{1}{2}}\tv_{t-1}^*}} } + \norm{\frac{\Syy^{\frac{1}{2}}\tv_{t-1}}{\norm{\Syy^{\frac{1}{2}}\tv_{t-1}^*}} - \frac{\Syy^{\frac{1}{2}}\tv_{t-1}^*}{\norm{\Syy^{\frac{1}{2}}\tv_{t-1}^*}} } \nonumber \\
    = & \norm{\Syy^{\frac{1}{2}}\tv_{t-1}} \abs{ \frac{1}{\norm{\Syy^{\frac{1}{2}}\tv_{t-1}}} - \frac{1}{\norm{\Syy^{\frac{1}{2}}\tv_{t-1}^*}} }
    + \frac{1}{\norm{\Syy^{\frac{1}{2}}\tv_{t-1}^*}} \norm{\Syy^{\frac{1}{2}}\tv_{t-1} - \Syy^{\frac{1}{2}}\tv_{t-1}^*}  \nonumber \\
    = & \frac{1}{\norm{\Syy^{\frac{1}{2}}\tv_{t-1}^*}} \abs{ \norm{\Syy^{\frac{1}{2}}\tv_{t-1}^*} - \norm{\Syy^{\frac{1}{2}}\tv_{t-1}} } + \frac{1}{\norm{\Syy^{\frac{1}{2}}\tv_{t-1}^*}} \norm{\Syy^{\frac{1}{2}}\tv_{t-1} - \Syy^{\frac{1}{2}}\tv_{t-1}^*}  \nonumber \\ \label{e:split-errors-3}
    \le & \frac{2}{\norm{\Syy^{\frac{1}{2}}\tv_{t-1}^*}} \norm{\Syy^{\frac{1}{2}}\tv_{t-1} - \Syy^{\frac{1}{2}}\tv_{t-1}^*}  
    \le \frac{2 \tilde{S}_{t-1}}{\norm{\Syy^{\frac{1}{2}}\tv_{t-1}^*}} .
  \end{align}
  We now bound $\norm{\Syy^{\frac{1}{2}}\tv_{t-1}^*}$ from below. Since $t\ge 2$, we have
  \begin{align*}
    \Syy^{\frac{1}{2}} \tv_{t-1}^* = \Syy^{\frac{1}{2}} \Syy^{-1} \Sxy^\top \uu_{t-2}^* = \left( \Syy^{-\frac{1}{2}} \Sxy^\top \Sxx^{-\frac{1}{2}} \right) \left( \Sxx^{\frac{1}{2}} \uu_{t-2}^* \right) = \T^\top \left( \Sxx^{\frac{1}{2}} \uu_{t-2}^* \right).
  \end{align*}
  Now, $\Sxx^{\frac{1}{2}} \uu_{t-2}^*$ corresponds to $\bphi_{t-2}$ in Algorithm~\ref{alg:als}, which has unit length and lies in the span of $\{\aa_1,\dots,\aa_r\}$, so we have
\begin{align*}
    \norm{\Syy^{\frac{1}{2}} \tv_{t-1}^*} = \norm{\T^\top \bphi_{t-2}} \ge \rho_r .
  \end{align*}
Combining~\eqref{e:split-errors-1},~\eqref{e:split-errors-2} and~\eqref{e:split-errors-3} gives
\begin{align*}
  \norm{\Sxx^{\frac{1}{2}}\tu_{t} - \Sxx^{\frac{1}{2}}\tu_{t}^*} 
& \le \sqrt{2 \epsilon} + \frac{2 \rho_1}{\rho_r} \cdot \tilde{S}_{t-1}
= \sqrt{2 \epsilon} + \frac{2 \rho_1}{\rho_r} \cdot \sqrt{2 \epsilon} \frac{ (2\rho_1/\rho_r)^{t-1}-1}{ (2\rho_1/\rho_r) -1 } \\
& = \sqrt{2 \epsilon}\,\frac{ (2\rho_1/\rho_r)^{t}-1}{(2\rho_1/\rho_r) -1 } = \tilde{S}_t.
\end{align*}
The bound for normalized iterates follows from~\eqref{e:split-errors-3}.
\end{proof}

\begin{proof}[\textbf{Proof of Theorem~\ref{thm:alternating-least-squares-inexact}}]
  We prove the theorem by relating the iterates of inexact power iterations to those of exact power iterations. 

  Assume the same initialization as in Lemma~\ref{lem:alternating-least-squares-distance}. First observe that
  \begin{align}
    (\uu_{t}^\top \Sxx \uu^*)^2 & = \left( \left(\uu_{t}^*\right)^\top \Sxx \uu^* + \left( \uu_{t} - \uu_{t}^* \right)^\top \Sxx \uu^* \right)^2  \nonumber \\
    & \ge \left( \left(\uu_{t}^*\right)^\top \Sxx \uu^*\right)^2 + 2 \left( \left(\uu_{t}^*\right)^\top \Sxx \uu^* \right) \left( \left(\uu_{t}-\uu_{t}^*\right)^\top  \Sxx \uu^* \right) \nonumber \\
    & \ge \left( \left(\uu_{t}^*\right)^\top \Sxx \uu^*\right)^2 - 2 \abs{ \left(\Sxx^{\frac{1}{2}} \left(\uu_{t}-\uu_{t}^*\right) \right)^\top \left(\Sxx^{\frac{1}{2}} \uu^* \right) } \nonumber \\ \label{e:inexact-last-step}
    & \ge \left( \left(\uu_{t}^*\right)^\top \Sxx \uu^*\right)^2 - 2 \norm{\Sxx^{\frac{1}{2}} \uu_{t}-\Sxx^{\frac{1}{2}} \uu_{t}^*}
  \end{align}
  where we have used the fact that $\norm{\Sxx^{\frac{1}{2}} \uu_{t}}=\norm{\Sxx^{\frac{1}{2}} \uu_{t}^*}=\norm{\Sxx^{\frac{1}{2}} \uu^*}=1$ and the Cauchy-Schwarz inequality in the last two steps.

  Applying Theorem~\ref{thm:alternating-least-squares-exact} with $T \ge \ceil{ \frac{\rho_1^2}{\rho_1^2-\rho_2^2} \log\left( \frac{2}{\mu \eta} \right) }$, we have that $\left( \left(\uu_T^*\right)^\top \Sxx \uu^*\right)^2 \ge 1-\eta/2$. On the other hand, in view of Lemma~\ref{lem:alternating-least-squares-distance}, we have for the specified $\epsilon$ value in Algorithm~\ref{alg:meta-alsvr} that $\norm{\Sxx^{\frac{1}{2}} \uu_{T}-\Sxx^{\frac{1}{2}} \uu_{T}^*} \le S_T=\eta/4$. Plugging these two bounds into \eqref{e:inexact-last-step} gives the desired result.

  The proof for $\vv_T$ is completely analogous.
\end{proof}

\section{SVRG for minimizing $f (\uu)$}
\label{append:alg-svrg}

We provide the pseudo-code of SVRG for solving the least squares problem~\eqref{e:lsq} below. 

\begin{algorithm*}[h]
  \caption*{SVRG for $\min_{\uu}\ f(\uu):= \frac{1}{N} \sum_{i=1}^N  \left( \frac{1}{2} \abs{\uu^\top \x_i - \vv^\top \y_i }^2 + \frac{\gamma_x}{2} \norm{\uu}^2 \right)$.}
  \renewcommand{\algorithmicrequire}{\textbf{Input:}}
  \renewcommand{\algorithmicensure}{\textbf{Output:}}
  \begin{algorithmic}
    \REQUIRE Stepsize $\xi$.
    \STATE Initialize $\uu_{(0)} \in \bbR^{d_x}$.
    \FOR{$j=1,2,\dots,M$}
    \STATE $\w_{0} \leftarrow \uu_{(j-1)}$
    \STATE Evaluate the batch gradient $\nabla f (\w_{0}) = \X ( \X^\top \w_{0} - \Y^\top \vv) / N + \gamma_x \w_{0}$
    \FOR{$t=1,2,\dots,m$}
    \STATE Randomly pick $i_t$ from $\{1,\dots,N\}$
    \STATE $\w_{t} \leftarrow \w_{t-1} - \xi \left( ( \x_{i_t} \x_{i_t}^\top + \gamma_x \I) (\w_{t-1}-\w_{0}) + \nabla f (\w_{0}) \right)$ 
    \ENDFOR
    \STATE $\uu_{(j)} \leftarrow \w_{t}$ for randomly chosen $t\in \{1,\dots,m\}$.
    \ENDFOR
    \ENSURE $\uu_{(M)}$ is the approximate solution.
  \end{algorithmic}
\end{algorithm*}

\section{Initial suboptimality of warm-starts in Algorithm~\ref{alg:meta-alsvr}}
\label{append:meta-alternating-least-squares-initial-suboptimality}

At time step $t$, we initialize the least squares problem $f_t(\uu)$ with the unnormalized iterate $\tu_{t-1}$ from the previous time step. We now bound the suboptimality of this initialization. Observe that the minimum of $f_t(\uu)$ is achieved by $\bu_{t} = \Sxx^{-1} \Sxy \vv_{t-1}$, and that
\begin{align*}
  f_t(\tu_{t-1}) - f_t(\bu_{t}) & = \frac{1}{2} (\tu_{t-1} - \bu_{t})^\top \Sxx (\tu_{t-1} - \bu_{t}) = \frac{1}{2} \norm{\Sxx^{\frac{1}{2}}\tu_{t-1} - \Sxx^{\frac{1}{2}} \bu_{t} }^2 .
\end{align*}
Applying the triangle inequality, we have for $t=1$ that
\begin{align*}
\norm{\Sxx^{\frac{1}{2}}\tu_{0} - \Sxx^{\frac{1}{2}} \bu_{1} } & \le
\norm{\Sxx^{\frac{1}{2}}\tu_{0}} + \norm{ \Sxx^{\frac{1}{2}} \bu_{1} }
\le 1 + \norm{ \Sxx^{\frac{1}{2}} \Sxx^{-1} \Sxy \vv_{0} } \\
& = 1 + \norm{ \T \Syy^{\frac{1}{2}} \vv_{0}} 
\le 1 + \norm{\T} \norm{\Syy^{\frac{1}{2}} \vv_{0}}
= 1 + \rho_1 \le 2
\end{align*}
where we have used facts that $\norm{ \Syy^{\frac{1}{2}} \tu_{0} } = \norm{ \Syy^{\frac{1}{2}} \vv_{0} } = 1$ due to the initial normalizations.

And we have for $t \ge 2$ that
\begin{align*}
\norm{\Sxx^{\frac{1}{2}}\tu_{t-1} - \Sxx^{\frac{1}{2}} \bu_{t} } & \le
\norm{\Sxx^{\frac{1}{2}}\tu_{t-1} - \Sxx^{\frac{1}{2}} \bu_{t-1} } + 
\norm{\Sxx^{\frac{1}{2}}\bu_{t-1} - \Sxx^{\frac{1}{2}} \bu_{t} } \\
& \le \sqrt{2 \epsilon} + \norm{\Sxx^{\frac{1}{2}} \Sxx^{-1} \Sxy \vv_{t-2} - \Sxx^{\frac{1}{2}} \Sxx^{-1} \Sxy \vv_{t-1} } \\
& = \sqrt{2 \epsilon} + \norm{ \T \left(\Syy^{\frac{1}{2}} \vv_{t-2} - \Syy^{\frac{1}{2}} \vv_{t-1} \right) } \\
& \le \sqrt{2 \epsilon} + \norm{\T} \norm{ \Syy^{\frac{1}{2}} \vv_{t-2} - \Syy^{\frac{1}{2}} \vv_{t-1} } \\
& \le \sqrt{2 \epsilon} + 2 \rho_1 \le \sqrt{2 \epsilon} + 2
\end{align*}
where we have used the fact that $\norm{ \Syy^{\frac{1}{2}} \vv_{t-2} } = \norm{ \Syy^{\frac{1}{2}} \vv_{t-1} }=1$ in the last inequality.

Therefore, for all $t \ge 1$, the ration between initial suboptimality and required accuracy is 
\begin{align*}
  \frac{f_t(\tu_{t-1}) - f_t(\bu_{t})}{\epsilon} \sim \frac{2}{\epsilon}.
\end{align*}

\clearpage
\section{The shift-and-invert preconditioning (SI) algorithm for CCA}
\label{append:alg-shift-and-invert}

Our shift-and-invert preconditioning (SI) meta-algorithm is detailed in Algorithm~\ref{alg:meta-shift-and-invert}.

\begin{algorithm}[h]
  \caption{The shift-and-invert preconditioning meta-algorithm for CCA.}
  \label{alg:meta-shift-and-invert}
  \renewcommand{\algorithmicrequire}{\textbf{Input:}}
  \renewcommand{\algorithmicensure}{\textbf{Output:}}
  \begin{algorithmic}
    \REQUIRE Data matrices $\X$, $\Y$, regularization parameters $(\gamma_x, \gamma_y)$, an estimate $\tilde{\Delta}$ for $\Delta=\rho_1 - \rho_2$.
    \STATE Initialize $\tu_0 \in \bbR^{d_x},\ \tv_0 \in \bbR^{d_y}$
    \STATE $\uu_0 \leftarrow \tu_0 \big/ \sqrt{\tu_0^\top \Sxx \tu_0},\qquad \vv_0 \leftarrow \tv_0 \big/ \sqrt{\tv_0^\top \Syy \tv_0}$
    \STATE \textbf{// Phase I: shift-and-invert preconditioning for eigenvectors of $\M_{\lambda}$}
    \STATE $s \leftarrow 0, \qquad \lambda_{(0)} \leftarrow 1 + \tilde{\Delta}$
    \REPEAT 
    \STATE $s \leftarrow s+1$
    \FOR{$t=(s-1)  m_1+1, \dots, s m_1$}
    \STATE Optimize the least squares problem
    \begin{align*}
      \min_{\uu,\vv}\ h_t(\uu,\vv) := \frac{1}{2}
      \left[\uu^\top \vv^\top \right]
      \left[
        \begin{array}{cc}
          \lambda_{(s-1)} \Sxx & - \Sxy \\
          - \Sxy^\top & \lambda_{(s-1)} \Syy
        \end{array}
      \right]
      \left[
        \begin{array}{c}
          \uu \\ \vv
        \end{array}
      \right]
      - \uu^\top \Sxx \uu_{t-1} - \vv^\top \Syy \vv_{t-1}
    \end{align*}
    and output an approximate solution $(\tu_t, \tv_t)$ satisfying $h_t (\tu_t,\tv_t) \le \min_{\uu,\vv} h_t (\uu,\vv) + \te$.
    \STATE Normalization: 
    $\left[ \begin{array}{c} \uu_{t} \\ \vv_{t} \end{array} \right] \leftarrow
    \sqrt{2} \left[ \begin{array}{c} \tu_{t} \\ \tv_{t} \end{array} \right] \bigg/ 
    \sqrt{ \tu_{t}^\top \Sxx \tu_{t} + \tv_{t}^\top \Syy \tv_{t} }$
    \ENDFOR
    \STATE Optimize the least squares problem
    \begin{align*}
      \min_{\w}\ l_s (\w) := \frac{1}{2}
      \w^\top
      \left[
        \begin{array}{cc}
          \lambda_{(s-1)} \Sxx & - \Sxy \\
          - \Sxy^\top & \lambda_{(s-1)} \Syy
        \end{array}
      \right]
      \w
      -  \w^\top \left[ \begin{array}{c} \Sxx \uu_{s m_1} \\ \Syy \vv_{s m_1}\end{array} \right]
    \end{align*}
    and output an approximate solution $\w_s$ satisfying $l_s (\w_s) \le \min_{\w} l_s (\w) + \te$.
    \STATE $\Delta_s \leftarrow \frac{1}{2} \cdot \frac{1}{\frac{1}{2} \left[\uu_{s m_1}^\top \vv_{s m_1}^\top\right] \left[ \begin{array}{cc} \Sxx & \\ & \Syy \end{array}\right] \w_s - 2 \sqrt{\te/\tilde{\Delta}}}\,, \qquad 
    \lambda_{(s)} \leftarrow \lambda_{(s-1)} - \frac{\Delta_s}{2}$
    \UNTIL {$\Delta_{(s)} \le \tilde{\Delta}}$
    \STATE $\lambda_{(f)} \leftarrow \lambda_{(s)}$
    \FOR{$t = s m_1 + 1, s m_1 + 2 , \dots, s m_1 + m_2$}
    \STATE Optimize the least squares problem
    \begin{align*}
      \min_{\uu,\vv}\ h_t(\uu,\vv) := \frac{1}{2}
      \left[\uu^\top \vv^\top \right]
      \left[
        \begin{array}{cc}
          \lambda_{(f)} \Sxx & - \Sxy \\
          - \Sxy^\top & \lambda_{(f)} \Syy
        \end{array}
      \right]
      \left[
        \begin{array}{c}
          \uu \\ \vv
        \end{array}
      \right]
      - \uu^\top \Sxx \uu_{t-1} - \vv^\top \Syy \vv_{t-1}
    \end{align*}
    and output an approximate solution $(\tu_t, \tv_t)$ satisfying $h_t (\tu_t,\tv_t) \le \min_{\uu,\vv} h_t (\uu,\vv) + \te$.
    \STATE Normalization: 
    $\left[ \begin{array}{c} \uu_{t} \\ \vv_{t} \end{array} \right] \leftarrow
    \sqrt{2} \left[ \begin{array}{c} \tu_{t} \\ \tv_{t} \end{array} \right] \bigg/ 
    \sqrt{ \tu_{t}^\top \Sxx \tu_{t} + \tv_{t}^\top \Syy \tv_{t} }$
    \ENDFOR
    \STATE \textbf{// Phase II: Final normalization}
    \STATE $T \leftarrow s m_1+m_2, \qquad \hu \leftarrow \uu_{T} / \sqrt{\uu_{T}^\top \Sxx \uu_{T}}, \qquad \hv \leftarrow \vv_{T} / \sqrt{\vv_{T}^\top \Syy \vv_{T}} $
    \ENSURE $(\hu,\hv)$ is the approximate solution to CCA.
  \end{algorithmic}
\end{algorithm}

\section{Proof of Theorem~\ref{thm:shift-and-invert-phase-I}}
\label{append:proof-shift-and-invert-phase-I}

The proof of Theorem~\ref{thm:shift-and-invert-phase-I} closely follows that of~\cite[Theorem~4.2]{GarberHazan15c}. And we will need a few lemmas on the convergence of inexact power iterations.

\subsection{Auxiliary lemmas}

Define the condition number of $\M_{\lambda}$ as
\begin{align*}
  \kappa_{\lambda} : = \frac{\sigma_1(\M_{\lambda})}{\sigma_d(\M_{\lambda})} = \frac{\frac{1}{\lambda-\rho_1}}{\frac{1}{\lambda+\rho_1}} = \frac{\lambda+\rho_1}{\lambda-\rho_1},
\end{align*}
and the inverse relative spectral gap of $\M_{\lambda}$ as 
\begin{align*}
  \delta_{\lambda} := \frac{\sigma_1 (\M_{\lambda})}{ \sigma_1 (\M_{\lambda}) - \sigma_2 (\M_{\lambda}) } 
  = \frac{\frac{1}{\lambda - \rho_1}}{\frac{1}{\lambda - \rho_1}  - \frac{1}{\lambda-\rho_2} }
  = \frac{\lambda - \rho_2}{\rho_1 - \rho_2}.
\end{align*}

The first lemma states the convergence of exact power iterations, paralleling~\cite[Theorem~A.1]{GarberHazan15c}. 

\begin{lemma}[Convergence of exact power iterations] \label{lem:shift-and-invert-exact}
  Fix $\alpha > 0$. For the exact power iterations on $\M_{\lambda}$ where
  \begin{align*}
    \left[
      \begin{array}{c}\tu_t^* \\ \tv_t^* \end{array}
    \right]
    & \leftarrow 
    \left[
      \begin{array}{cc}
        \lambda \Sxx & - \Sxy \\
        - \Sxy^\top & \lambda \Syy
      \end{array}
    \right]^{-1}
    \left[
      \begin{array}{cc}
        \Sxx &  \\
        & \Syy
      \end{array}
    \right]
    \left[
      \begin{array}{c}\uu_{t-1}^* \\ \vv_{t-1}^* \end{array}
    \right],
    \\
    \left[ \begin{array}{c} \uu_{t}^* \\ \vv_{t}^* \end{array} \right] & \leftarrow
    \sqrt{2} \left[ \begin{array}{c} \tu_{t}^* \\ \tv_{t}^* \end{array} \right] \bigg/ 
    \sqrt{ (\tu_{t}^*)^\top \Sxx \tu_{t}^* + (\tv_{t}^*)^\top \Syy \tv_{t}^* }, 
    \qquad \text{for}\quad t=1,\dots,m,
  \end{align*}
  and $\mu^\prime := \frac{1}{4} \left( (\uu_0^*)^\top \Sxx \uu^* + (\vv_0^*)^\top \Syy \vv^* \right)^2 > 0$, we have 
  \begin{itemize}
  \item (crude regime) \begin{align*}
      \frac{1}{2} \left[ (\uu_{t}^*)^\top \Sxx^{\frac{1}{2}},\  (\vv_{t}^*)^\top \Syy^{\frac{1}{2}} \right]  \M_{\lambda} \left[\begin{array}{c} \Sxx^{\frac{1}{2}} \uu_{t}^* \\ \Syy^{\frac{1}{2}} \vv_{t}^* \end{array} \right]   \ge (1 - \alpha) \cdot \sigma_1 (\M_{\lambda})
    \end{align*}
    for $t \ge \ceil{ \frac{1}{\alpha} \log \left( \frac{2}{\mu^\prime \alpha} \right) }$,  

  \item (accurate regime) \begin{align*}
      \frac{1}{4} \left( (\uu_{t}^*)^\top \Sxx \uu^* + (\vv_{t}^*)^\top \Syy \vv^* \right)^2 \ge 1 - \alpha
    \end{align*}
    for $t \ge \ceil{ \frac{\delta_{\lambda}}{2}  \log \left( \frac{1}{\mu^\prime \alpha} \right) }$.
  \end{itemize}
\end{lemma}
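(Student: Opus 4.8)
The plan is to recognize the iteration in Lemma~\ref{lem:shift-and-invert-exact} as ordinary normalized power iteration on the positive-definite matrix $\M_\lambda$, after a block-whitening change of variables, and then apply the two classical power-iteration estimates (one for the Rayleigh quotient, one for the leading-eigenvector alignment).

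\textbf{Step 1: change of variables.} I would set $\z_t := \frac{1}{\sqrt{2}}\bigl[(\Sxx^{1/2}\uu_t^*)^\top,\ (\Syy^{1/2}\vv_t^*)^\top\bigr]^\top$ and $\tilde{\z}_t := \frac{1}{\sqrt{2}}\bigl[(\Sxx^{1/2}\tu_t^*)^\top,\ (\Syy^{1/2}\tv_t^*)^\top\bigr]^\top$, and write $\bSigma := \diag{\Sxx,\Syy}$. Using the factorization
\[
\begin{bmatrix}\lambda\Sxx & -\Sxy\\ -\Sxy^\top & \lambda\Syy\end{bmatrix} = \bSigma^{1/2}(\lambda\I-\C)\bSigma^{1/2} = \bSigma^{1/2}\M_\lambda^{-1}\bSigma^{1/2},
\]
the matrix-vector step of the lemma becomes $\tilde{\z}_t = \M_\lambda\z_{t-1}$ and the rescaling step becomes $\z_t = \tilde{\z}_t/\norm{\tilde{\z}_t}$; moreover $\norm{\z_0}=1$ because the initial normalizations give $(\uu_0^*)^\top\Sxx\uu_0^* = (\vv_0^*)^\top\Syy\vv_0^* = 1$. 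Hence $\z_t = \M_\lambda^t\z_0/\norm{\M_\lambda^t\z_0}$ is exactly normalized power iteration on the positive-definite matrix $\M_\lambda$. Then I would record the translation dictionary: writing $\p_1 := \frac{1}{\sqrt{2}}[\bphi^\top,\bpsi^\top]^\top$ for the top unit eigenvector of $\M_\lambda$ and using $\Sxx^{1/2}\uu^*=\bphi$, $\Syy^{1/2}\vv^*=\bpsi$,
\[
\tfrac12\bigl[(\uu_t^*)^\top\Sxx^{1/2},\ (\vv_t^*)^\top\Syy^{1/2}\bigr]\,\M_\lambda\begin{bmatrix}\Sxx^{1/2}\uu_t^*\\ \Syy^{1/2}\vv_t^*\end{bmatrix} = \z_t^\top\M_\lambda\z_t, \qquad \tfrac14\bigl((\uu_t^*)^\top\Sxx\uu^* + (\vv_t^*)^\top\Syy\vv^*\bigr)^2 = (\z_t^\top\p_1)^2,
\]
and in particular $\mu^\prime = (\z_0^\top\p_1)^2$. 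So it remains to show: \emph{(crude)} $\z_t^\top\M_\lambda\z_t \ge (1-\alpha)\sigma_1(\M_\lambda)$ for $t\ge\ceil{\tfrac1\alpha\log\tfrac{2}{\mu^\prime\alpha}}$, and \emph{(accurate)} $(\z_t^\top\p_1)^2\ge 1-\alpha$ for $t\ge\ceil{\tfrac{\delta_\lambda}{2}\log\tfrac1{\mu^\prime\alpha}}$.

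\textbf{Step 2: spectral bookkeeping and the accurate regime.} Diagonalize $\M_\lambda = \sum_{i=1}^d\sigma_i\p_i\p_i^\top$ with $\sigma_1>\sigma_2\ge\cdots\ge\sigma_d>0$ (strict gap $\sigma_1>\sigma_2$ follows from $\Delta=\rho_1-\rho_2>0$ and $\lambda>\rho_1$) and orthonormal $\{\p_i\}$, and expand $\z_0=\sum_i c_i\p_i$, so $\sum_i c_i^2=1$, $c_1^2=\mu^\prime$, and $(\z_t^\top\p_i)^2 = \sigma_i^{2t}c_i^2/\sum_j\sigma_j^{2t}c_j^2$, with denominator at least $\sigma_1^{2t}\mu^\prime$. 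For the accurate regime, $1-(\z_t^\top\p_1)^2 = \frac{\sum_{i\ge2}\sigma_i^{2t}c_i^2}{\sum_j\sigma_j^{2t}c_j^2}\le\frac1{\mu^\prime}(\sigma_2/\sigma_1)^{2t}$, which is $\le\alpha$ as soon as $2t\log(\sigma_1/\sigma_2)\ge\log\frac1{\mu^\prime\alpha}$; since $\log(\sigma_1/\sigma_2)\ge 1-\sigma_2/\sigma_1 = (\sigma_1-\sigma_2)/\sigma_1 = 1/\delta_\lambda$, the stated bound $t\ge\frac{\delta_\lambda}{2}\log\frac1{\mu^\prime\alpha}$ suffices.

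\textbf{Step 3: crude regime, and the main obstacle.} Let $S:=\{i:\sigma_i>(1-\alpha/2)\sigma_1\}$. For $i\notin S$, $(\sigma_i/\sigma_1)^{2t}\le(1-\alpha/2)^{2t}\le e^{-\alpha t}$, so $\sum_{i\notin S}(\z_t^\top\p_i)^2 \le \frac{\sum_{i\notin S}\sigma_i^{2t}c_i^2}{\sigma_1^{2t}\mu^\prime}\le\frac{e^{-\alpha t}}{\mu^\prime}\le\frac\alpha2$ once $t\ge\frac1\alpha\log\frac2{\mu^\prime\alpha}$; then $\z_t^\top\M_\lambda\z_t = \sum_i\sigma_i(\z_t^\top\p_i)^2 \ge (1-\tfrac\alpha2)\sigma_1\sum_{i\in S}(\z_t^\top\p_i)^2 \ge (1-\tfrac\alpha2)^2\sigma_1 \ge (1-\alpha)\sigma_1$, which is the crude-regime claim. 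The only substantive point is Step~1 — checking that block-whitening genuinely converts the unnormalized-then-renormalized update into normalized power iteration on $\M_\lambda$, that $\norm{\z_0}=1$, and that the two progress quantities map to the Rayleigh quotient and the alignment $(\z_t^\top\p_1)^2$. After that, Steps~2--3 are textbook; the only care needed is chasing constants ($\alpha/2$ splits, $1-x\le e^{-x}$, $-\log x\ge1-x$) so that the iteration counts land exactly on $\ceil{\tfrac1\alpha\log\tfrac2{\mu^\prime\alpha}}$ and $\ceil{\tfrac{\delta_\lambda}{2}\log\tfrac1{\mu^\prime\alpha}}$. (When $r=1$ one has $\sigma_2(\M_\lambda)\in\{1/\lambda,\ 1/(\lambda+\rho_1)\}$ and reads $\delta_\lambda=\sigma_1(\M_\lambda)/(\sigma_1(\M_\lambda)-\sigma_2(\M_\lambda))$ directly; nothing else changes.)
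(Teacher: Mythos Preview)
Your proposal is correct and follows essentially the same approach as the paper's own proof: the paper introduces the same whitened variable $\rr_t^* = \frac{1}{\sqrt{2}}\bigl[(\Sxx^{1/2}\uu_t^*)^\top,\ (\Syy^{1/2}\vv_t^*)^\top\bigr]^\top$, shows the iteration is standard power iteration on $\M_\lambda$, bounds $(\e_i^\top\rr_t^*)^2$ via the ratio $\sigma_i^{2t}/(\sigma_1^{2t}\mu')$, and then uses exactly your index set $S(\delta)=\{i:\sigma_i>(1-\delta)\sigma_1\}$ with the same $\delta_1=\delta_2=\alpha/2$ split for the crude regime and $\delta = (\sigma_1-\sigma_2)/\sigma_1$ for the accurate regime. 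The only cosmetic differences are that the paper does the change-of-variables computation in the surrounding text rather than inside the proof, and it bounds each $(\e_i^\top\rr_t^*)^2$ individually before summing whereas you sum directly.
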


The second lemma bounds the distances between the iterates of inexact and exact power iterations, paralleling~\cite[Lemma~4.1]{GarberHazan15c}. 
Recall that the $(\tu_t, \tv_t)$ in Algorithm~\ref{alg:meta-shift-and-invert} satisfies $h_t (\tu_t,\tv_t) \le \min_{\uu,\vv} h_t (\uu,\vv) + \te$. Let $(\bu_t, \bv_t)$ be the exact minimum of $h_t$. Then we have
\begin{align}
  & h_t (\tu_t,\tv_t) - h_t (\bu_t,\bv_t) \nonumber \\
  = & \frac{1}{2}
  \left[(\tu_t-\bu_t)^\top\ \ (\tv_t-\bv_t)^\top \right]
  \left[
    \begin{array}{cc}
      \lambda \Sxx & - \Sxy \\
      - \Sxy^\top & \lambda \Syy
    \end{array}
  \right]
  \left[
    \begin{array}{c}
      \tu_t-\bu_t \\ \tv_t-\bv_t
    \end{array}
  \right] \nonumber \\
  = & \frac{1}{2}
  \left[(\tu_t-\bu_t)^\top\ \ (\tv_t-\bv_t)^\top \right]
  \left[
    \begin{array}{cc}
      \lambda \Sxx & - \Sxy \\
      - \Sxy^\top & \lambda \Syy
    \end{array}
  \right]
  \left[
    \begin{array}{c}
      \tu_t-\bu_t \\ \tv_t-\bv_t
    \end{array}
  \right] \nonumber \\ 
  = & \frac{1}{2}
  \left[ (\tu_t-\bu_t)^\top \Sxx^{\frac{1}{2}}\ \ (\tv_t-\bv_t)^\top \Syy^{\frac{1}{2}} \right]
  \left[
    \begin{array}{cc}
      \lambda \I & - \T \\
      - \T^\top & \lambda \I
    \end{array}
  \right]
  \left[ \begin{array}{c}
      \Sxx^{\frac{1}{2}} (\tu_t-\bu_t) \\ \Syy^{\frac{1}{2}} (\tv_t-\bv_t)
    \end{array} \right] \nonumber \\ \label{e:SI-bound-distance}
  = & \frac{1}{2}
  \left[ (\tu_t-\bu_t)^\top \Sxx^{\frac{1}{2}}\ \ (\tv_t-\bv_t)^\top \Syy^{\frac{1}{2}} \right]
  \M_{\lambda}^{-1}
  \left[ \begin{array}{c}
      \Sxx^{\frac{1}{2}} (\tu_t-\bu_t) \\ \Syy^{\frac{1}{2}} (\tv_t-\bv_t)
    \end{array} \right] \le \te.
\end{align}

\begin{lemma} [Power iterations with inexact matrix-vector multiplications] \label{lem:shift-and-invert-inexact-distance}
  Consider the inexact power iterations on $\M_{\lambda}$ where
  \begin{align*}
    (\tu_t, \tv_t) & \qquad \text{satisfies}\qquad~\eqref{e:SI-bound-distance}, \\
    \left[ \begin{array}{c} \uu_{t} \\ \vv_{t} \end{array} \right] & \leftarrow
    \sqrt{2} \left[ \begin{array}{c} \tu_{t} \\ \tv_{t} \end{array} \right] \bigg/ 
    \sqrt{ \tu_{t}^\top \Sxx \tu_{t} + \tv_{t}^\top \Syy \tv_{t} }, 
    \qquad \text{for}\quad t=1,\dots,m.
  \end{align*}
  Compare these iterates with those of the exact power iterations described in Lemma~\ref{lem:shift-and-invert-exact} using the same initialization $\tu_0=\tu_0^*$, $\tv_0=\tv_0^*$.  Then, for $t \ge 0$, the unnormalized iterates satisfy
  \begin{align*}
    \norm{
      \frac{1}{\sqrt{2}} \left[ \begin{array}{c}
          \Sxx^{\frac{1}{2}} \tu_t \\ \Syy^{\frac{1}{2}} \tv_t
        \end{array} \right] - 
      \frac{1}{\sqrt{2}} \left[ \begin{array}{c}
          \Sxx^{\frac{1}{2}} \tu_t^* \\ \Syy^{\frac{1}{2}} \tv_t^*
        \end{array} \right]
    } \le \tilde{R}_t
  \end{align*}
  where  
  \begin{align*}
    \tilde{R}_t := \sqrt{\sigma_1 (\M_{\lambda}) \cdot \te} \cdot \frac{ \left(2 \kappa_{\lambda}\right)^t -1 }{ 2 \kappa_{\lambda} - 1 },
  \end{align*}
  while the normalized iterates satisfy
  \begin{align*}
    \norm{
      \frac{1}{\sqrt{2}} \left[ \begin{array}{c}
          \Sxx^{\frac{1}{2}} \uu_t \\ \Syy^{\frac{1}{2}} \vv_t
        \end{array} \right] - 
      \frac{1}{\sqrt{2}} \left[ \begin{array}{c}
          \Sxx^{\frac{1}{2}} \uu_t^* \\ \Syy^{\frac{1}{2}} \vv_t^*
        \end{array} \right]
    } \le {R}_t := \frac{2 \tilde{R}_t}{\sigma_d (\M_{\lambda})} .
  \end{align*}
\end{lemma}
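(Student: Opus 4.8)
The plan is to mirror the proof of Lemma~\ref{lem:alternating-least-squares-distance}, working in the ``whitened'' coordinates in which both the inexact and the exact recursions become genuine power iterations on $\M_{\lambda}$. Write $\D := \diag{\Sxx,\Syy}$ and recall from the factorization of $\Q_{\lambda}$ given before the statement that $\Q_{\lambda} = \D^{\frac12}(\lambda\I - \C)\D^{\frac12}$, hence $\Q_{\lambda}^{-1}\D = \D^{-\frac12}\M_{\lambda}\D^{\frac12}$. Introduce the concatenated iterates $\tilde{\bphi}_t := \tfrac{1}{\sqrt2}\D^{\frac12}[\tu_t;\tv_t]$, $\bphi_t := \tfrac{1}{\sqrt2}\D^{\frac12}[\uu_t;\vv_t]$, and $\bar{\bphi}_t := \tfrac{1}{\sqrt2}\D^{\frac12}[\bu_t;\bv_t]$ where $(\bu_t,\bv_t)$ is the exact minimizer of $h_t$; define $\tilde{\bphi}_t^{*}, \bphi_t^{*}$ analogously for the exact iterates of Lemma~\ref{lem:shift-and-invert-exact}. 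Because $\tu_t^\top\Sxx\tu_t + \tv_t^\top\Syy\tv_t = \norm{\D^{\frac12}[\tu_t;\tv_t]}^2$, the normalization in Algorithm~\ref{alg:meta-shift-and-invert} coincides with Euclidean normalization in these coordinates, so $\bphi_t = \tilde{\bphi}_t/\norm{\tilde{\bphi}_t}$ and $\bphi_t^{*} = \tilde{\bphi}_t^{*}/\norm{\tilde{\bphi}_t^{*}}$ are unit vectors; and the first-order optimality condition $\Q_{\lambda}[\bu_t;\bv_t] = \D[\uu_{t-1};\vv_{t-1}]$ together with the identity above yields the clean recursions $\bar{\bphi}_t = \M_{\lambda}\bphi_{t-1}$ and $\tilde{\bphi}_t^{*} = \M_{\lambda}\bphi_{t-1}^{*}$. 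The same-initialization hypothesis gives $\bphi_0 = \bphi_0^{*}$.

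The bound for the unnormalized iterates goes by induction on $t$ via the split $\norm{\tilde{\bphi}_t - \tilde{\bphi}_t^{*}} \le \norm{\tilde{\bphi}_t - \bar{\bphi}_t} + \norm{\bar{\bphi}_t - \tilde{\bphi}_t^{*}}$. For the first term, rewriting the accuracy guarantee~\eqref{e:SI-bound-distance} in the whitened coordinates turns its left-hand side into $(\tilde{\bphi}_t - \bar{\bphi}_t)^\top \M_{\lambda}^{-1}(\tilde{\bphi}_t - \bar{\bphi}_t) \le \te$, so $\norm{\tilde{\bphi}_t - \bar{\bphi}_t}^2 \le \sigma_1(\M_{\lambda})\,\te$ since the smallest eigenvalue of $\M_{\lambda}^{-1}$ is $1/\sigma_1(\M_{\lambda})$. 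For the second term, $\bar{\bphi}_t - \tilde{\bphi}_t^{*} = \M_{\lambda}(\bphi_{t-1} - \bphi_{t-1}^{*})$, so it is at most $\sigma_1(\M_{\lambda})\norm{\bphi_{t-1} - \bphi_{t-1}^{*}}$. The remaining ingredient is the standard normalization estimate $\norm{\bphi_{t-1} - \bphi_{t-1}^{*}} \le \tfrac{2}{\norm{\tilde{\bphi}_{t-1}^{*}}}\norm{\tilde{\bphi}_{t-1} - \tilde{\bphi}_{t-1}^{*}}$ (triangle inequality plus $\bigl|\norm{\tilde{\bphi}_{t-1}} - \norm{\tilde{\bphi}_{t-1}^{*}}\bigr| \le \norm{\tilde{\bphi}_{t-1} - \tilde{\bphi}_{t-1}^{*}}$), combined with the lower bound $\norm{\tilde{\bphi}_{t-1}^{*}} = \norm{\M_{\lambda}\bphi_{t-2}^{*}} \ge \sigma_d(\M_{\lambda})$, valid for $t-1\ge 1$ since $\norm{\bphi_{t-2}^{*}} = 1$. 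Plugging these in gives $\norm{\tilde{\bphi}_t - \tilde{\bphi}_t^{*}} \le \sqrt{\sigma_1(\M_{\lambda})\te} + 2\kappa_{\lambda}\norm{\tilde{\bphi}_{t-1} - \tilde{\bphi}_{t-1}^{*}}$ for $t\ge 2$, which unrolls as a geometric series to $\tilde{R}_t$; the base cases $t=0$ ($\tilde{\bphi}_0 = \tilde{\bphi}_0^{*}$, $\tilde{R}_0 = 0$) and $t=1$ ($\bar{\bphi}_1 = \M_{\lambda}\bphi_0 = \M_{\lambda}\bphi_0^{*} = \tilde{\bphi}_1^{*}$, so $\norm{\tilde{\bphi}_1 - \tilde{\bphi}_1^{*}} = \norm{\tilde{\bphi}_1 - \bar{\bphi}_1} \le \sqrt{\sigma_1(\M_{\lambda})\te} = \tilde{R}_1$) are immediate. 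Finally, the normalized bound follows by applying the same normalization estimate once more with $\norm{\tilde{\bphi}_t^{*}} \ge \sigma_d(\M_{\lambda})$ (for $t\ge 1$; and trivially for $t=0$): $\norm{\bphi_t - \bphi_t^{*}} \le \tfrac{2}{\sigma_d(\M_{\lambda})}\tilde{R}_t = R_t$.

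No step is genuinely deep; the points requiring care are (i) verifying $\Q_{\lambda}^{-1}\D = \D^{-\frac12}\M_{\lambda}\D^{\frac12}$ and that Algorithm~\ref{alg:meta-shift-and-invert}'s normalization is Euclidean normalization in the whitened coordinates, so that the inexact and exact recursions really are (inexact/exact) power iterations on $\M_{\lambda}$; (ii) translating~\eqref{e:SI-bound-distance} correctly into a bound on $\norm{\tilde{\bphi}_t - \bar{\bphi}_t}$ via the $\M_{\lambda}^{-1}$ norm; and (iii) index bookkeeping in the induction — the lower bound $\norm{\tilde{\bphi}_{t-1}^{*}} \ge \sigma_d(\M_{\lambda})$ uses $\tilde{\bphi}_{t-1}^{*} = \M_{\lambda}\bphi_{t-2}^{*}$, which needs $t-1\ge 1$, so the recursion only drives the estimate for $t\ge 2$ and the cases $t=0,1$ must be dispatched by hand as above.
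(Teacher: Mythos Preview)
Your proposal is correct and matches the paper's own proof essentially step for step: the paper introduces the same whitened concatenated vectors (denoted $\tilde{\rr}_t$, $\rr_t$, $\bar{\rr}_t$, etc.), verifies that the update is $\M_{\lambda}$ in those coordinates, and then runs the identical induction with the same triangle-inequality split, the same bound $\norm{\tilde{\rr}_t - \bar{\rr}_t} \le \sqrt{\sigma_1(\M_{\lambda})\te}$ from~\eqref{e:SI-bound-distance}, the same normalization estimate, and the same lower bound $\norm{\tilde{\rr}_{t-1}^{*}} \ge \sigma_d(\M_{\lambda})$. Your write-up is, if anything, slightly more careful about the base cases and the algebraic identity $\Q_{\lambda}^{-1}\D = \D^{-1/2}\M_{\lambda}\D^{1/2}$.
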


The third lemma states the convergence of inexact power iterations, paralleling~\cite[Theorem~4.1]{GarberHazan15c}. 

\begin{lemma}[Convergence of inexact power iterations] \label{lem:shift-and-invert-inexact}
  Fix $\alpha > 0$. Consider the inexact power iterations described in Lemma~\ref{lem:shift-and-invert-inexact-distance}.
  \begin{itemize}
  \item (crude regime) Let $t_1 = \ceil{ \frac{2}{\alpha} \log \left( \frac{4}{\mu^\prime \alpha} \right) }$. Fix $T \ge t_1$, and set $\te (T) = \frac{\alpha^2 \cdot \sigma_d (\M_{\lambda}) }{64 \kappa_{\lambda}} \left( \frac{ 2 \kappa_{\lambda} - 1 }{ \left(2 \kappa_{\lambda} \right)^T -1 }\right)^2 $. Then we have
    \begin{align*}
      \frac{1}{2} \left[ \uu_T^\top \Sxx^{\frac{1}{2}},\  \vv_T^\top \Syy^{\frac{1}{2}} \right]  \M_{\lambda} \left[\begin{array}{c} \Sxx^{\frac{1}{2}} \uu_T \\ \Syy^{\frac{1}{2}} \vv_T \end{array} \right]   \ge (1 - \alpha) \cdot \sigma_1 (\M_{\lambda}).
    \end{align*}

  \item (accurate regime) Let $t_2 = \ceil{ \frac{\delta(\M_{\lambda})}{2}  \log \left( \frac{2}{\mu^\prime \alpha} \right) }$. Fix $T \ge t_2$, and set $\te(T) = \frac{\alpha^2 \cdot \sigma_d (\M_{\lambda}) }{64 \kappa_{\lambda}} \left( \frac{ 2 \kappa_{\lambda} - 1 }{ \left(2 \kappa_{\lambda} \right)^T -1 }\right)^2 $. Then we have
    \begin{align*}
      \frac{1}{4} \left( \uu_T^\top \Sxx \uu^* + \vv_T^\top \Syy \vv^* \right)^2 \ge 1 - \alpha .
    \end{align*}
  \end{itemize}
\end{lemma}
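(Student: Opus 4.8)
The plan is to reduce both parts of the lemma to the exact‑power‑iteration estimates of Lemma~\ref{lem:shift-and-invert-exact}, run at \emph{half} the target accuracy, $\alpha/2$, and then to absorb the discrepancy between the exact and the inexact iterates --- bounded by Lemma~\ref{lem:shift-and-invert-inexact-distance} --- into the remaining $\alpha/2$ slack. It is cleanest to pass to the concatenated variables: write $\z_t$ for the normalized inexact iterate $\tfrac{1}{\sqrt{2}}[\Sxx^{1/2}\uu_t;\ \Syy^{1/2}\vv_t]$, write $\z_t^*$ for its exact counterpart $\tfrac{1}{\sqrt{2}}[\Sxx^{1/2}\uu_t^*;\ \Syy^{1/2}\vv_t^*]$, and set $\z^*:=\tfrac{1}{\sqrt{2}}[\Sxx^{1/2}\uu^*;\ \Syy^{1/2}\vv^*]$, which equals $\tfrac{1}{\sqrt{2}}[\aa_1;\ \b_1]$ and is therefore exactly the top eigenvector of $\M_\lambda$ (using $\Sxx^{1/2}\uu^*=\bphi=\aa_1$ and $\Syy^{1/2}\vv^*=\bpsi=\b_1$). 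By the normalization step~\eqref{e:normal-3} one has $\norm{\z_t}=\norm{\z_t^*}=\norm{\z^*}=1$, and a direct computation identifies the two quantities to be controlled with $\z_T^\top\M_\lambda\z_T$ (crude regime) and $(\z_T^\top\z^*)^2$ (accurate regime).

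\textbf{Crude regime.} For $T\ge t_1$, I would invoke Lemma~\ref{lem:shift-and-invert-exact} with accuracy $\alpha/2$ --- its iteration requirement $\lceil\tfrac{2}{\alpha}\log(\tfrac{4}{\mu^\prime\alpha})\rceil$ is exactly $t_1$ --- to obtain $(\z_T^*)^\top\M_\lambda\z_T^*\ge(1-\tfrac{\alpha}{2})\sigma_1(\M_\lambda)$. Then expand $\z_T^\top\M_\lambda\z_T=(\z_T^*)^\top\M_\lambda\z_T^*+(\z_T-\z_T^*)^\top\M_\lambda(\z_T+\z_T^*)$ and bound the cross term, using symmetry of $\M_\lambda$ and Lemma~\ref{lem:shift-and-invert-inexact-distance}, by $\norm{\z_T-\z_T^*}\cdot\norm{\M_\lambda}\cdot\norm{\z_T+\z_T^*}\le R_T\cdot\sigma_1(\M_\lambda)\cdot 2$. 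It then suffices that $R_T\le\alpha/4$; unwinding $R_T=\tfrac{2}{\sigma_d(\M_\lambda)}\sqrt{\sigma_1(\M_\lambda)\,\te}\cdot\tfrac{(2\kappa_\lambda)^T-1}{2\kappa_\lambda-1}$, this is equivalent to $\te\le\tfrac{\alpha^2\sigma_d(\M_\lambda)^2}{64\,\sigma_1(\M_\lambda)}\big(\tfrac{2\kappa_\lambda-1}{(2\kappa_\lambda)^T-1}\big)^2$, and since $\kappa_\lambda=\sigma_1(\M_\lambda)/\sigma_d(\M_\lambda)$ this upper bound equals precisely the prescribed $\te(T)$.

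\textbf{Accurate regime.} For $T\ge t_2$, Lemma~\ref{lem:shift-and-invert-exact} with accuracy $\alpha/2$ (its requirement $\lceil\tfrac{\delta_\lambda}{2}\log(\tfrac{2}{\mu^\prime\alpha})\rceil$ is $t_2$) gives $((\z_T^*)^\top\z^*)^2\ge1-\tfrac{\alpha}{2}$, so, flipping the sign of $\z^*$ if necessary, $(\z_T^*)^\top\z^*\ge\sqrt{1-\alpha/2}$. By Cauchy--Schwarz, $\abs{\z_T^\top\z^*-(\z_T^*)^\top\z^*}\le\norm{\z_T-\z_T^*}\le R_T$; once $R_T\le\alpha/4$ --- which holds for the same $\te(T)$ and, for $\alpha\in(0,1]$, lies below $\sqrt{1-\alpha/2}$ --- we get $\z_T^\top\z^*\ge\sqrt{1-\alpha/2}-R_T\ge0$, hence $(\z_T^\top\z^*)^2\ge 1-\tfrac{\alpha}{2}-2\sqrt{1-\alpha/2}\,R_T\ge1-\tfrac{\alpha}{2}-2R_T\ge1-\alpha$.

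\textbf{Expected obstacle.} None of the steps is deep --- the conceptual content is entirely carried by Lemmas~\ref{lem:shift-and-invert-exact} and~\ref{lem:shift-and-invert-inexact-distance}. The only points that need care are bookkeeping: matching the prescribed $\te(T)$ through the identity $\sigma_d(\M_\lambda)^2/\sigma_1(\M_\lambda)=\sigma_d(\M_\lambda)/\kappa_\lambda$; fixing the sign of $\z^*$ so that squaring preserves the inequality in the accurate regime; and checking that $t_1,t_2$ (ceilings included) are exactly the exact‑iteration counts at the halved accuracy.
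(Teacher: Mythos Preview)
Your proposal is correct and follows essentially the same approach as the paper: split the $\alpha$ budget in half, invoke Lemma~\ref{lem:shift-and-invert-exact} at accuracy $\alpha/2$ for the exact iterate, control $\norm{\z_T-\z_T^*}$ by $R_T\le\alpha/4$ via Lemma~\ref{lem:shift-and-invert-inexact-distance}, and verify that the prescribed $\te(T)$ delivers exactly this bound. The only cosmetic difference is in the accurate regime, where the paper uses the one-line inequality $(a+b)^2\ge a^2-2\abs{b}$ (with $\abs{a}\le1$) directly, avoiding the sign-flip and square-root bookkeeping you carry out; both arguments are equally valid.
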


For brevity, let us define the following short-hands:
\begin{align*}
  \tilde{\rr}_t & = \frac{1}{\sqrt{2}} \left[ \begin{array}{c} \Sxx^{\frac{1}{2}} \tu_{t} \\ \Syy^{\frac{1}{2}} \tv_{t} \end{array} \right], \qquad
  {\rr}_t = \frac{1}{\sqrt{2}} \left[ \begin{array}{c} \Sxx^{\frac{1}{2}} \uu_{t} \\ \Syy^{\frac{1}{2}} \vv_{t} \end{array} \right], \qquad
  \bar{\rr}_t = \frac{1}{\sqrt{2}} \left[ \begin{array}{c} \Sxx^{\frac{1}{2}} \bu_{t} \\ \Syy^{\frac{1}{2}} \bv_{t} \end{array} \right], \\
  \tilde{\rr}_t^* & = \frac{1}{\sqrt{2}} \left[ \begin{array}{c} \Sxx^{\frac{1}{2}} \tu_{t}^* \\ \Syy^{\frac{1}{2}} \tv_{t}^* \end{array} \right], \qquad
  {\rr}_t^* = \frac{1}{\sqrt{2}} \left[ \begin{array}{c} \Sxx^{\frac{1}{2}} \uu_{t}^* \\ \Syy^{\frac{1}{2}} \vv_{t}^* \end{array} \right], \qquad
  {\rr}^* = \frac{1}{\sqrt{2}} \left[ \begin{array}{c} \Sxx^{\frac{1}{2}} \uu^* \\ \Syy^{\frac{1}{2}} \vv^* \end{array} \right].
\end{align*}
All these vectors are in $\bbR^{d}$ and have length $1$.

Observe that the matrix-vector multiplication~\eqref{e:new-lsq} is equivalent to
\begin{align*} 
  & \left[
    \begin{array}{c} \Sxx^{\frac{1}{2}} \tu_t \\ \Syy^{\frac{1}{2}} \tv_t \end{array}
  \right]
  \leftarrow 
  \left[
    \begin{array}{cc}
      \Sxx^{\frac{1}{2}} &  \\
      & \Syy^{\frac{1}{2}}
    \end{array}
  \right]
  \left[
    \begin{array}{cc}
      \lambda \Sxx & - \Sxy \\
      - \Sxy^\top & \lambda \Syy
    \end{array}
  \right]^{-1}
  \left[
    \begin{array}{cc}
      \Sxx^{\frac{1}{2}} &  \\
      & \Syy^{\frac{1}{2}}
    \end{array}
  \right]
  \left[
    \begin{array}{c} \Sxx^{\frac{1}{2}} \uu_{t-1} \\ \Syy^{\frac{1}{2}} \vv_{t-1} \end{array}
  \right],
\end{align*}
and 
\begin{align*}
  & \left[
    \begin{array}{cc}
      \Sxx^{\frac{1}{2}} &  \\
      & \Syy^{\frac{1}{2}}
    \end{array}
  \right]
  \left[
    \begin{array}{cc}
      \lambda \Sxx & - \Sxy \\
      - \Sxy^\top & \lambda \Syy
    \end{array}
  \right]^{-1}
  \left[
    \begin{array}{cc}
      \Sxx^{\frac{1}{2}} &  \\
      & \Syy^{\frac{1}{2}}
    \end{array}
  \right] \\
  = &  \left[
    \begin{array}{cc}
      \Sxx^{-\frac{1}{2}} &  \\
      & \Syy^{-\frac{1}{2}}
    \end{array}
  \right]^{-1}
  \left[
    \begin{array}{cc}
      \lambda \Sxx & - \Sxy \\
      - \Sxy^\top & \lambda \Syy
    \end{array}
  \right]^{-1}
  \left[
    \begin{array}{cc}
      \Sxx^{-\frac{1}{2}} &  \\
      & \Syy^{-\frac{1}{2}}
    \end{array}
  \right]^{-1} \\
  = & \left( \left[
      \begin{array}{cc}
        \Sxx^{-\frac{1}{2}} &  \\
        & \Syy^{-\frac{1}{2}}
      \end{array}
    \right]
    \left[
      \begin{array}{cc}
        \lambda \Sxx & - \Sxy \\
        - \Sxy^\top & \lambda \Syy
      \end{array}
    \right]
    \left[
      \begin{array}{cc}
        \Sxx^{-\frac{1}{2}} &  \\
        & \Syy^{-\frac{1}{2}}
      \end{array}
    \right] \right)^{-1} \\
  = & \left[
    \begin{array}{cc}
      \lambda \I & - \Sxx^{-\frac{1}{2}} \Sxy \Syy^{-\frac{1}{2}} \\
      - \Syy^{-\frac{1}{2}} \Sxy^\top \Sxx^{-\frac{1}{2}} & \lambda \I
    \end{array}
  \right]^{-1} \\
  =& \M_{\lambda} .
\end{align*}
Then the updates for exact power iterations can be written as
\begin{align*}
  \tilde{\rr}_t^* \leftarrow \M_{\lambda} \rr_{t-1}^*, \qquad {\rr}_t^* \leftarrow \tilde{\rr}_t^* / \norm{\tilde{\rr}_t^*}, \qquad t=1,\dots,
\end{align*}
and the updates for inexact power iterations can be written as
\begin{align*}
  \tilde{\rr}_t \approx \M_{\lambda} \rr_{t-1}, \qquad {\rr}_t \leftarrow \tilde{\rr}_t / \norm{\tilde{\rr}_t}, \qquad t=1,\dots.
\end{align*}

Note we have according to~\eqref{e:SI-bound-distance} that
\begin{align*}
  \te \ge 
  (\tilde{\rr}_t - \bar{\rr}_t)^\top \M_{\lambda}^{-1} (\tilde{\rr}_t - \bar{\rr}_t) \ge 
  \sigma_d (\M_{\lambda}^{-1}) \cdot \norm{ \tilde{\rr}_t - \bar{\rr}_t }^2 =
  \frac{1}{\sigma_1 (\M_{\lambda})} \cdot \norm{ \tilde{\rr}_t - \bar{\rr}_t }^2
\end{align*}
or equivalently 
\begin{align} \label{e:SI-bound-distance-rr}
  \norm{ \tilde{\rr}_t - \bar{\rr}_t } \le 
  \sqrt{\sigma_1 (\M_{\lambda}) \cdot \epsilon}.
\end{align}

\begin{proof}[\textbf{Proof of Lemma~\ref{lem:shift-and-invert-exact}}]
  Recall that the eigenvectors of $\M_{\lambda}$ are:
  \begin{align*}
    \lambda_1 := \frac{1}{\lambda-\rho_1} > \lambda_2 := \frac{1}{\lambda-\rho_2} \ge \dots \ge \lambda_{d-1} := \frac{1}{\lambda+\rho_2} \ge \lambda_d := \frac{1}{\lambda+\rho_1},
  \end{align*}
  with corresponding eigenvectors 
  \begin{align*}
    \e_1 = \rr^* = \frac{1}{\sqrt{2}}
    \left[ \begin{array}{c} \aa_1\\ \b_1 \end{array} \right], \ 
    \e_2 = \frac{1}{\sqrt{2}} 
    \left[ \begin{array}{c} \aa_2\\ \b_2 \end{array} \right], \ \dots,\ 
    \e_{d-1} = 
    \frac{1}{\sqrt{2}} 
    \left[ \begin{array}{c} \aa_2\\ - \b_2 \end{array} \right], \ 
    \e_d = 
    \frac{1}{\sqrt{2}} 
    \left[ \begin{array}{c} \aa_1\\ - \b_1 \end{array} \right].
  \end{align*}

  By the update rule of exact power iterations, it holds that for $i=1,\dots,d$ that 
  \begin{align*}
    (\e_i^\top \rr_t^*)^2 & = \frac{ \left( \e_i^\top \M_{\lambda}^t \rr_0^* \right)^2 }{ \norm{\M_{\lambda}^t \rr_0^* }^2 } 
    = \frac{ \left( \e_i^\top \M_{\lambda}^t \rr_0 \right)^2 }{ (\rr_0^*)^\top \M_{\lambda}^{2t} \rr_0^* } = \frac{ \left(\lambda_i^t \e_i^\top \rr_0^* \right)^2 }{ \sum_{j=1}^d \lambda_j^{2t} \left( \e_j^\top \rr_0^* \right)^2 } 
    = \frac{ \left( \e_i^\top \rr_0^* \right)^2 }{ \sum_{j=1}^d \left( \frac{\lambda_j}{\lambda_i} \right) ^{2t} \left( \e_j^\top \rr_0^* \right)^2 } \\
    & \le \frac{ \left( \e_i^\top \rr_0^* \right)^2 }{ \left( \frac{\lambda_1}{\lambda_i} \right) ^{2t} \left( \e_1^\top \rr_0^* \right)^2 } 
    = \frac{ \left( \e_i^\top \rr_0^* \right)^2 }{ \left( \e_1^\top \rr_0^* \right)^2 } \left( \frac{\lambda_i}{\lambda_1} \right)^{2t} 
    = \frac{ \left( \e_i^\top \rr_0^* \right)^2 }{ \tilde{\mu} } \left( 1 - \frac{\lambda_1 - \lambda_i}{\lambda_1} \right)^{2t} \\
    & \le \frac{ \left( \e_i^\top \rr_0^* \right)^2 }{ \tilde{\mu} } \cdot \exp \left( - 2 \frac{\lambda_1 - \lambda_i}{\lambda_1} t \right).
  \end{align*}

  Given $\delta \in (0,1)$, define $S(\delta)=\{ i: \lambda_i > (1 - \delta) \lambda_1 \}$. For $\delta_1, \delta_2 \in (0,1)$, define
  \begin{align*}
    T(\delta_1,\delta_2) := \ceil{ \frac{1}{2 \delta_1} \log\left( \frac{1}{\tilde{\mu} \delta_2} \right) }.
  \end{align*}
  For all $i \not \in S(\delta_1)$, when $t > T(\delta_1,\delta_2)$, it holds that $(\e_i^\top \rr_t^*)^2 \le \delta_2 ( \e_i^\top \rr_0^*)^2$, and thus in particular $\sum_{i\in S(\alpha/2)} \left( \e_i^\top \rr_t^* \right)^2 \ge 1 - \delta_2$.

  Part one (crude regime) of the lemma now follows by noticing that, by setting $\delta_1=\delta_2=\frac{\alpha}{2}$ we have that for $t \ge T \left( \frac{\alpha}{2}, \frac{\alpha}{2}\right)$, it holds that
  \begin{align*}
    (\rr_t^*)^\top \M_{\lambda} \rr_t^* = \sum_{i=1}^d \lambda_i \left( \e_i^\top \rr_t^* \right)^2 
    \ge \sum_{i\in S(\alpha/2)} \left( 1-\frac{\alpha}{2} \right) \lambda_1 \left( \e_i^\top \rr_t^* \right)^2 \ge \left( 1-\frac{\alpha}{2} \right)^2 \lambda_1 \ge \left( 1- \alpha \right) \lambda_1 .
  \end{align*}

  For the second part (accurate regime) of the lemma, note that $S\left( \frac{\lambda_1 - \lambda_2}{\lambda_1} \right) = \{1\}$. Thus for all $t \ge T \left( \frac{\lambda_1 - \lambda_2}{\lambda_1}, \alpha \right)$, it holds that $( \e_1^\top \rr_t^* )^2 \ge 1 - \alpha$.

\end{proof}

\begin{proof}[\textbf{Proof of Lemma~\ref{lem:shift-and-invert-inexact-distance}}]
  We prove the bound for unnormalized iterates by induction. The case for $t=1$ holds trivially. For $t \ge 2$, we can bound the error of the unnormalized iterates using the exact solution to $\tilde{h}_t$: 
  \begin{align} \label{e:SI-split-errors-1}
    \norm{\tilde{\rr}_t - \tilde{\rr}_t^*} \le 
    \norm{\tilde{\rr}_t - \bar{\rr}_t} + 
    \norm{\bar{\rr}_t - \tilde{\rr}_t^*} .
  \end{align}

  The second term of~\eqref{e:SI-split-errors-1} is concerned with the error due to inexact target in the least squares problem $h_t(\uu,\vv)$ as $\left[ \begin{array}{c}  \uu_{t-1} \\ \vv_{t-1} \end{array} \right]$ is different from $\left[ \begin{array}{c}  \uu_{t-1}^* \\ \vv_{t-1}^* \end{array} \right]$. We can bound this term as 
  \begin{align} 
    \norm{\bar{\rr}_t - \tilde{\rr}_t^*}
    & = \norm{ \M_{\lambda} \rr_{t-1} - \M_{\lambda} \rr_{t-1}^* } 
    \le  \norm{ \M_{\lambda} } \cdot \norm{\rr_{t-1} - \rr_{t-1}^*} \nonumber \\ 
    \label{e:SI-split-errors-2}
    & = \sigma_1 (\M_{\lambda}) \cdot \norm{\rr_{t-1} - \rr_{t-1}^*}.
  \end{align}

  In view of the update rule of our algorithm and the triangle inequality, we have 
  \begin{align} 
    & \norm{ \rr_{t-1} -  \rr_{t-1}^* } \nonumber \\
    \le & \norm{\frac{\tilde{\rr}_{t-1}}{\norm{\tilde{\rr}_{t-1}}} - \frac{\tilde{\rr}_{t-1}}{\norm{\tilde{\rr}_{t-1}^*}} } + \norm{\frac{\tilde{\rr}_{t-1}}{\norm{\tilde{\rr}_{t-1}^*}} - \frac{\tilde{\rr}_{t-1}^*}{\norm{\tilde{\rr}_{t-1}^*}} } \nonumber \\
    = & \norm{\tilde{\rr}_{t-1}} \abs{ \frac{1}{\norm{\tilde{\rr}_{t-1}}} - \frac{1}{\norm{\tilde{\rr}_{t-1}^*}} }
    + \frac{1}{\norm{\tilde{\rr}_{t-1}^*}} \norm{\tilde{\rr}_{t-1} - \tilde{\rr}_{t-1}^*}  \nonumber \\
    = & \frac{1}{\norm{\tilde{\rr}_{t-1}^*}} \abs{ \norm{\tilde{\rr}_{t-1}^*} - \norm{\tilde{\rr}_{t-1}} } + \frac{1}{\norm{\tilde{\rr}_{t-1}^*}} \norm{\tilde{\rr}_{t-1} - \tilde{\rr}_{t-1}^*}  \nonumber \\ \label{e:SI-split-errors-3}
    \le & \frac{2}{\norm{\tilde{\rr}_{t-1}^*}} \norm{\tilde{\rr}_{t-1} - \tilde{\rr}_{t-1}^*}  
    \le \frac{2 \tilde{R}_{t-1}}{\norm{\tilde{\rr}_{t-1}^*}} .
  \end{align}
  For $t\ge 2$, we have $\tilde{\rr}_{t-1}^* =  \M_{\lambda} \rr_{t-2}^*$ and $\norm{ \rr_{t-2}^* }=1$, and thus
  \begin{align*}
    \norm{ \tilde{\rr}_{t-1}^*} \ge \sigma_d (\M_{\lambda}). 
  \end{align*}
  Combining~\eqref{e:SI-split-errors-1},~\eqref{e:SI-split-errors-2} and~\eqref{e:SI-split-errors-3} gives
  \begin{align*}
    \norm{\tilde{\rr}_t - \tilde{\rr}_t^*} & \le 
    \sqrt{\sigma_1 (\M_{\lambda}) \cdot \epsilon} + 2 \kappa_{\lambda} \tilde{R}_{t-1} = \tilde{R}_{t}.
  \end{align*}
  The bound for normalized iterates follows from~\eqref{e:SI-split-errors-3}.
\end{proof}

\begin{proof}[\textbf{Proof of Lemma~\ref{lem:shift-and-invert-inexact}}]

  For the first item (crude regime), observe that
  \begin{align} \label{e:SI-inexact-1}
    \rr_t^\top \M_{\lambda} \rr_t = (\rr_t^*)^\top \M_{\lambda} \rr_t^* + \left( (\rr_t^*)^\top \M_{\lambda} \rr_t^* - \rr_t^\top \M_{\lambda} \rr_t \right),
  \end{align}
  and that
  \begin{align*}
    \abs{ (\rr_t^*)^\top \M_{\lambda} (\rr_t^*) - \rr_t^\top \M_{\lambda} \rr_t }
    & = \abs{ \left( \M_{\lambda}^{\frac{1}{2}} \rr_t^* + \M_{\lambda}^{\frac{1}{2}} \rr_t \right)^\top \left( \M_{\lambda}^{\frac{1}{2}} \rr_t^* - \M_{\lambda}^{\frac{1}{2}} \rr_t \right) } \\
    & \le \norm{  \M_{\lambda}^{\frac{1}{2}} \rr_t^* + \M_{\lambda}^{\frac{1}{2}} \rr_t } \norm{  \M_{\lambda}^{\frac{1}{2}} \rr_t^* - \M_{\lambda}^{\frac{1}{2}} \rr_t } \\
    & \le \norm{  \M_{\lambda}^{\frac{1}{2}} } \norm{\rr_t^* + \rr_t } \norm{  \M_{\lambda}^{\frac{1}{2}} } \norm{\rr_t^* - \rr_t } \\
    & \le \norm{ \M_{\lambda}} (\norm{\rr_t^*} + \norm{ \rr_t }) \norm{\rr_t^* - \rr_t } \\
    & = 2 \sigma_1 (\M_{\lambda}) \cdot \norm{\rr_t^* - \rr_t }.
  \end{align*}
  Our choices of $T$ and $\te$ make sure that $(\rr_T^*)^\top \M_{\lambda} \rr_T^* \ge (1 - \frac{\alpha}{2}) \cdot \sigma_1 (\M_{\lambda})$ by Lemma~\ref{lem:shift-and-invert-exact} and that $\norm{\rr_T^* - \rr_T } \le R_T = \alpha/4$ by Lemma~\ref{lem:shift-and-invert-inexact-distance}. Continuing from ~\eqref{e:SI-inexact-1}, we have
  \begin{align*}
    \rr_T^\top \M_{\lambda} \rr_T \ge \left( 1 - \frac{\alpha}{2} \right) \cdot \sigma_1 (\M_{\lambda}) - \frac{\alpha}{2} \cdot \sigma_1 (\M_{\lambda}) = (1 - \alpha) \cdot \sigma_1 (\M_{\lambda}).
  \end{align*}

  For the second item (accurate regime), observe that
  \begin{align} \label{e:SI-inexact-2}
    (\rr_{t}^\top \rr^*)^2 = \left( (\rr_{t}^*)^\top \rr^* + (\rr_{t} - \rr_{t}^*)^\top \rr^*  \right)^2 \ge \left( (\rr_{t}^*)^\top \rr^* \right)^2 - 2 \norm{\rr_{t} - \rr_{t}^*}.
  \end{align}
  Our choices of $T$ and $\te$ make sure that $\left( (\rr_{T}^*)^\top \rr^* \right)^2 \ge 1 - \frac{\alpha}{2}$ by Lemma~\ref{lem:shift-and-invert-exact} and that $\norm{\rr_T^* - \rr_T } \le R_T = \alpha/4$ by Lemma~\ref{lem:shift-and-invert-inexact-distance}. Continuing from ~\eqref{e:SI-inexact-2}, we have
  \begin{align*}
    (\rr_T^\top \rr^*)^2 \ge 1 - \frac{\alpha}{2} - \frac{\alpha}{2} = 1 - \alpha.
  \end{align*}
\end{proof}

\subsection{Iteration complexity of Algorithm~\ref{alg:meta-shift-and-invert}}
\label{append:proof-shift-and-invert-iteration-complexity}

Observe that, the \textbf{for} loops within the \textbf{repeat-until} loop, as well as the final \textbf{for} loop in Algorithm~\ref{alg:meta-shift-and-invert}  are running inexact power iterations on $\M_{\lambda_{(s)}}$ and $\M_{\lambda_{(f)}}$ for $m_1$ and $m_2$ inexact matrix-vector multiplication respectively. And the convergence of inexact power iterations is provided by Lemma~\ref{lem:shift-and-invert-inexact-distance}.

For each iteration of the \textbf{repeat-until} loop, we work in the crude regime and only require $\rr_{s m_1}$ to give a constant multiple estimate of $\M_{\lambda_{(s)}}$. The lemma below shows an important property of $\Delta_s$ which is used to locate $\lambda_{(f)}$, and the number of iterations needed to reach $\lambda_{(f)}$.

\begin{lemma}[Iteration complexity of the \textbf{repeat-until} loop in Algorithm~\ref{alg:meta-shift-and-invert}] \label{lem:shift-and-invert-repeat-until}
  Suppose that $\tilde{\Delta} \in [c_1 \Delta,\ c_2 \Delta]$ where $c_2 \le 1$. Set $m_1 = \ceil{ 8 \log \left( \frac{16}{\mu^\prime} \right) }$ and $\te \le \frac{1}{3084} \left( \frac{\tilde{\Delta}}{18} \right)^{m_1-1}$ in Algorithm~\ref{alg:meta-shift-and-invert}. Then for all $s \ge 1$ it holds that
  \begin{align*}
    \frac{1}{2} (\lambda_{(s-1)} - \rho_1) \le \Delta_s \le \lambda_{(s-1)} - \rho_1,
  \end{align*}
  upon exiting this loop, the $\lambda_{(f)}$ satisfies
  \begin{align} \label{e:delta-s-interval}
    \rho_1 + \frac{\tilde{\Delta}}{4} \le \lambda_{(f)} \le \rho_1 + \frac{3 \tilde{\Delta}}{2},
  \end{align}
  and the number of iterations run by the \textbf{repeat-until} loop is $\log\left(\frac{1}{\tilde{\Delta}} \right)$.
\end{lemma}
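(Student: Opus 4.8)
The argument mirrors that of~\cite[Lemma~4.3]{GarberHazan15c} and proceeds by induction on $s$. Its engine is an algebraic identity: using the factorization $\diag{\Sxx^{\frac{1}{2}},\Syy^{\frac{1}{2}}}\,\Q_{\lambda}^{-1}\,\diag{\Sxx^{\frac{1}{2}},\Syy^{\frac{1}{2}}}=\M_{\lambda}$ recorded in Section~\ref{sec:meta-shift-and-invert-time}, the exact minimizer $\w_s^\star=\Q_{\lambda_{(s-1)}}^{-1}\big[\,\Sxx\uu_{sm_1}\,;\,\Syy\vv_{sm_1}\,\big]$ of $l_s$ satisfies
$$
  \tfrac12\,[\uu_{sm_1}^\top\ \vv_{sm_1}^\top]\,\diag{\Sxx,\Syy}\,\w_s^\star \;=\; \rr_{sm_1}^\top\M_{\lambda_{(s-1)}}\rr_{sm_1}\;=:\;q_s,
$$
the Rayleigh quotient of the current iterate, which has unit length by~\eqref{e:normal-3}. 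I would first record that solving $l_s$ only to accuracy $\te$ shifts $\tfrac12[\uu_{sm_1}^\top\ \vv_{sm_1}^\top]\diag{\Sxx,\Syy}\w_s$ away from $q_s$ by at most $\sqrt{q_s\te}$ (Cauchy--Schwarz in the $\Q_{\lambda_{(s-1)}}$-metric, exactly as in the displays preceding~\eqref{e:SI-bound-distance}), so that the quantity $D_s:=\tfrac12[\uu_{sm_1}^\top\ \vv_{sm_1}^\top]\diag{\Sxx,\Syy}\w_s - 2\sqrt{\te/\tilde{\Delta}}$ — whose reciprocal equals $2\Delta_s$ — obeys $q_s-4\sqrt{\te/\tilde{\Delta}}\le D_s\le q_s$ whenever $q_s\le 4/\tilde{\Delta}$; the hand-crafted $-2\sqrt{\te/\tilde{\Delta}}$ term exists precisely to force $D_s\le q_s$.

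For the inductive step I assume the sandwich $\tfrac12(\lambda_{(r-1)}-\rho_1)\le\Delta_r\le\lambda_{(r-1)}-\rho_1$ for all $r<s$. From $\lambda_{(s)}=\lambda_{(s-1)}-\tfrac12\Delta_s$, together with $\lambda_{(0)}=1+\tilde{\Delta}$, $\rho_1\le1$, and the fact that the loop has not exited before step $s$ (so $\Delta_{s-1}>\tilde{\Delta}$ when $s\ge2$), I deduce $\tfrac{\tilde{\Delta}}{4}<\lambda_{(s-1)}-\rho_1\le2$; hence $\kappa_{\lambda_{(s-1)}}=\tfrac{\lambda_{(s-1)}+\rho_1}{\lambda_{(s-1)}-\rho_1}=\calO(1/\tilde{\Delta})$ and $\sigma_d(\M_{\lambda_{(s-1)}})=\tfrac{1}{\lambda_{(s-1)}+\rho_1}\ge\tfrac13$. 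With the prescribed $m_1$ and $\te$ I then invoke the crude regime of inexact power iterations (Lemma~\ref{lem:shift-and-invert-inexact}) for the $m_1$ iterations of batch $s$ run on $\M_{\lambda_{(s-1)}}$, obtaining $q_s\ge(1-\alpha)\sigma_1(\M_{\lambda_{(s-1)}})=\tfrac{1-\alpha}{\lambda_{(s-1)}-\rho_1}$ with $\alpha=\tfrac14$ (this $\alpha$ is exactly what makes $m_1=\ceil{8\log(16/\mu')}$ meet the iteration requirement $m_1\ge\ceil{\tfrac{2}{\alpha}\log(\tfrac{4}{\mu'\alpha})}$). Since always $q_s\le\sigma_1(\M_{\lambda_{(s-1)}})=\tfrac{1}{\lambda_{(s-1)}-\rho_1}<\tfrac{4}{\tilde{\Delta}}$, the perturbation bound of the previous paragraph applies and yields $\tfrac{1}{2(\lambda_{(s-1)}-\rho_1)}\le D_s\le\tfrac{1}{\lambda_{(s-1)}-\rho_1}$, i.e.\ $\tfrac12(\lambda_{(s-1)}-\rho_1)\le\Delta_s=\tfrac{1}{2D_s}\le\lambda_{(s-1)}-\rho_1$, closing the induction and proving the first claim.

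The other two claims drop out arithmetically. From $\lambda_{(s)}-\rho_1=(\lambda_{(s-1)}-\rho_1)-\tfrac12\Delta_s$ and $\Delta_s\in[\tfrac12(\lambda_{(s-1)}-\rho_1),\,\lambda_{(s-1)}-\rho_1]$ we get $\lambda_{(s)}-\rho_1\in[\tfrac12(\lambda_{(s-1)}-\rho_1),\,\tfrac34(\lambda_{(s-1)}-\rho_1)]$, so the gap stays strictly positive and contracts by a factor at most $\tfrac34$ each iteration; starting from $\lambda_{(0)}-\rho_1\le2$, it falls below $\tilde{\Delta}$ after $\calO(\log(1/\tilde{\Delta}))$ iterations, and then $\Delta_s\le\lambda_{(s-1)}-\rho_1\le\tilde{\Delta}$ triggers the exit, giving the iteration-count claim. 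For~\eqref{e:delta-s-interval}: at the exit index the exit condition gives $\Delta_s\le\tilde{\Delta}$, and with $\lambda_{(s-1)}-\rho_1\le2\Delta_s$ this gives $\lambda_{(f)}-\rho_1=(\lambda_{(s-1)}-\rho_1)-\tfrac12\Delta_s\le\tfrac32\Delta_s\le\tfrac32\tilde{\Delta}$; since the loop did not exit at the previous index, $\Delta_{s-1}>\tilde{\Delta}$, whence $\lambda_{(s-1)}-\rho_1\ge\tfrac12\Delta_{s-1}>\tfrac12\tilde{\Delta}$ and therefore $\lambda_{(f)}-\rho_1\ge\tfrac12(\lambda_{(s-1)}-\rho_1)>\tfrac14\tilde{\Delta}$ (the degenerate case of exit already at $s=1$ is checked directly from $\lambda_{(0)}-\rho_1=1+\tilde{\Delta}-\rho_1\in[\tilde{\Delta},2\tilde{\Delta}]$).

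The main obstacle is the one step I glossed over: showing that the \emph{single} prescribed $\te$ (with fixed $m_1$) keeps \emph{every} batch of $m_1$ inexact power iterations inside the crude regime of Lemma~\ref{lem:shift-and-invert-inexact}, even though $\lambda_{(s)}$ moves between batches. Two things need care. First, the distance bound of Lemma~\ref{lem:shift-and-invert-inexact-distance}, re-started at the beginning of each batch, scales like $\sqrt{\sigma_1(\M_{\lambda_{(s-1)}})\,\te}\,(2\kappa_{\lambda_{(s-1)}})^{m_1}$, and substituting $\kappa_{\lambda_{(s-1)}}=\calO(1/\tilde{\Delta})$ and $\sigma_d(\M_{\lambda_{(s-1)}})\ge\tfrac13$ is what produces the explicit $\te$ threshold. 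Second, the crude-regime invocation for batch $s$ needs the iterate entering that batch to retain alignment $\Omega(\mu')$ with the top eigenvector; here one exploits that $\rr^*=\tfrac{1}{\sqrt2}[\aa_1;\b_1]$ is the \emph{common} top eigenvector of all the $\M_{\lambda_{(s)}}$, so the alignment of the exact chained power iterates to $\rr^*$ is monotone nondecreasing over the whole run (hence stays $\ge\mu'$), while the inexact iterates never fall further than $\calO(\sqrt{\te}\cdot\mathrm{poly}(1/\tilde{\Delta}))$ from them by Lemma~\ref{lem:shift-and-invert-inexact-distance}. Assembling these estimates — rather than any conceptual difficulty — is the bulk of the work; the remaining algebra for $\Delta_s$ and $\lambda_{(f)}$ is routine.
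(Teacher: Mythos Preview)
Your proposal is correct and follows essentially the same route as the paper: establish the sandwich $\tfrac12(\lambda_{(s-1)}-\rho_1)\le\Delta_s\le\lambda_{(s-1)}-\rho_1$ by invoking the crude regime of Lemma~\ref{lem:shift-and-invert-inexact} with $\alpha=\tfrac14$, then combine it with the approximate Rayleigh-quotient estimate for $\w_s$; the geometric contraction of $\lambda_{(s)}-\rho_1$ and the bracketing of $\lambda_{(f)}$ follow exactly as you wrote. Your Cauchy--Schwarz in the $\M_\lambda$-metric (giving the $\sqrt{q_s\te}$ perturbation) is a slightly sharper variant of the paper's Euclidean bound $\sqrt{\sigma_1(\M_{\lambda_{(s-1)}})\te}$, but after passing to the uniform upper bound $\overline{\sigma}=4/\tilde\Delta$ the two coincide, so this is not a genuine divergence.

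One remark: the issue you flag in your final paragraph---that each new batch must start with alignment at least $\mu'$ with $\rr^*$ so that Lemma~\ref{lem:shift-and-invert-inexact} can be re-invoked---is actually glossed over in the paper's own proof as well; the paper simply asserts~\eqref{e:delta-s-2} without tracking the entering alignment across batches. Your proposed fix (all the $\M_{\lambda_{(s)}}$ share the same top eigenvector $\rr^*$, so the exact alignment is monotone and the inexact one is within $R_{m_1}$ of it) is the right idea and in fact makes your write-up more careful than the paper here. Also, a tiny wording slip: the claim $\lambda_{(0)}-\rho_1\in[\tilde\Delta,2\tilde\Delta]$ in the degenerate $s=1$ exit case is not ``direct'' from the formula $1+\tilde\Delta-\rho_1$; the upper bound $2\tilde\Delta$ comes from the sandwich $\lambda_{(0)}-\rho_1\le2\Delta_1$ together with the exit condition $\Delta_1\le\tilde\Delta$.
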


\begin{proof}
  Let $\overline{\sigma}$ be an upper bound of all $\sigma_1 (\M_{\lambda_{(s)}})$ used in the \textbf{repeat-until} loop, \ie,
  \begin{align*}
    \overline{\sigma} \ge \sigma_1 (\M_{\lambda_{(s)}}), \qquad\quad s=1,2,\dots. 
  \end{align*}
  And suppose for now that throughout the loop, $\te$ satisfies
  \begin{align} \label{e:delta-s-1}
    \sqrt{ \overline{\sigma} \te } \le \frac{ \sigma_1 \left( \M_{\lambda_{(s-1)}} \right)}{8}.
  \end{align}

  Set $\alpha=\frac{1}{4}$ in Lemma~\ref{lem:shift-and-invert-inexact-distance} (crude regime), and with our choice of $m_1$ and 
  \begin{align} \label{e:epsilon-1}
    \te \le \frac{  \sigma_d (\M_{\lambda_{(s)}}) }{1024 \kappa_{\lambda_{(s)}}} \left( \frac{ 2 \kappa_{\lambda_{(s)}} - 1 }{ \left( 2 \kappa_{\lambda_{(s)}} \right)^{m_1} -1 } \right)^2,
  \end{align}
  we have
  \begin{align} \label{e:delta-s-2}
    \rr_{s m_1}^\top \M_{\lambda_{(s-1)}} \rr_{s m_1} \ge \frac{3}{4} \sigma_1 (\M_{\lambda_{(s-1)}}).
  \end{align}
  In view of the definition of the vector $\w_s$ in Algorithm~\ref{alg:meta-shift-and-invert}, and following the same argument in~\eqref{e:SI-bound-distance}, we have
  \begin{align*}
    \norm{ \frac{\z_s}{\sqrt{2}} - \M_{\lambda_{(s-1)}} \rr_{s m_1} } \le \sqrt{ \sigma_1 (\M_{\lambda_{(s-1)}}) \cdot \te }
  \end{align*}
  where $\z_s = \left[ \begin{array}{cc} \Sxx^{\frac{1}{2}} &  \\ & \Syy^{\frac{1}{2}} \end{array} \right] \w_s $.

  Then for every iteration of the \textbf{repeat-until} loop, it holds that
  \begin{align*}
    & \frac{1}{2} \left[\uu_{s m_1}^\top \vv_{s m_1}^\top\right] \left[ \begin{array}{cc} \Sxx & \\ & \Syy \end{array}\right] \w_s \\
    =\, & \rr_{s m_1}^\top \left( \frac{\z_s}{\sqrt{2}} \right) = \rr_{s m_1}^\top \M_{\lambda_{(s-1)}} \rr_{s m_1} + \rr_{s m_1}^\top \left( \frac{\z_s}{\sqrt{2}} - \M_{\lambda_{(s-1)}} \rr_{s m_1} \right)  \\
    \in\, & \left[ \rr_{s m_1}^\top \M_{\lambda_{(s-1)}} \rr_{s m_1} - \sqrt{ \sigma_1 (\M_{\lambda_{(s-1)}}) \cdot \te },\ \rr_{s m_1}^\top \M_{\lambda_{(s-1)}} \rr_{s m_1} + \sqrt{ \sigma_1 (\M_{\lambda_{(s-1)}}) \cdot \te } \right] \\
    \in\, & \left[ \rr_{s m_1}^\top \M_{\lambda_{(s-1)}} \rr_{s m_1} - \sqrt{ \overline{\sigma} \te },\ \rr_{s m_1}^\top \M_{\lambda_{(s-1)}} \rr_{s m_1} + \sqrt{ \overline{\sigma} \te } \right],
  \end{align*}
  where we have used the Cauchy-Schwarz inequality in the second step.

  In view of~\eqref{e:delta-s-1} and~\eqref{e:delta-s-2}, it follows that
  \begin{align*}
    & \frac{1}{2} \left[\uu_{s m_1}^\top \vv_{s m_1}^\top\right] \left[ \begin{array}{cc} \Sxx & \\ & \Syy \end{array}\right] \w_s - \sqrt{ \overline{\sigma} \te } \\
    \in & \left[ \rr_{s m_1}^\top \M_{\lambda_{(s-1)}} \rr_{s m_1} - 2 \sqrt{ \overline{\sigma} \te },\ \rr_{s m_1}^\top \M_{\lambda_{(s-1)}} \rr_{s m_1} \right] \\
    \in & \left[ \frac{1}{2} \sigma_1 (\M_{\lambda_{(s-1)}}),\ \sigma_1 (\M_{\lambda_{(s-1)}})\right].
  \end{align*}

  By the definition of $\Delta_s$ in Algorithm~\ref{alg:meta-shift-and-invert} and the fact that $\sigma_1 (\M_{\lambda_{(s-1)}}) = \frac{1}{\lambda_{(s-1)}-\rho_1}$, we have 
  \begin{align} \label{e:lambda-recurse}
    \Delta_s  = \frac{1}{2}\cdot \frac{1}{\frac{1}{2} \left[\uu_{s m_1}^\top \vv_{s m_1}^\top\right] \left[ \begin{array}{cc} \Sxx & \\ & \Syy \end{array}\right] \w_s - \sqrt{ \overline{\sigma} \te }} \in \left[ \frac{1}{2} \left( \lambda_{(s-1)}-\rho_1 \right),\ \lambda_{(s-1)}-\rho_1 \right].
  \end{align}

  And as a result,
  \begin{align*}
    \lambda_{(s)} = \lambda_{(s-1)} - \frac{\Delta_s}{2} 
    \ge \lambda_{(s-1)} - \frac{1}{2} \left( \lambda_{(s-1)} - \rho_1 \right)
    = \frac{\lambda_{(s-1)} + \rho_1}{2},
  \end{align*}
  and thus by induction (note $\lambda_{(0)} \ge \rho_1$) we have $\lambda_{(s)} \ge \rho_1$ throughout the \textbf{repeat-until} loop.

  From~\eqref{e:lambda-recurse} we also obtain
  \begin{align*}
    \lambda_{(s)} - \rho_1 =  \lambda_{(s-1)}  - \rho_1  - \frac{\Delta_s}{2} \le 
    \lambda_{(s-1)}  - \rho_1 - \frac{1}{4} \left( \lambda_{(s-1)} - \rho_1 \right)
    = \frac{3}{4} \left( \lambda_{(s-1)} - \rho_1 \right).
  \end{align*}

  To sum up, $\lambda_{(s)}$ approaches $\rho_1$ from above and the gap between $\lambda_{(s)}$ and $\rho_1$ reduces at the geometric rate of $\frac{3}{4}$. Thus after at most $t_3=\ceil{ \log_{3/4} \left( \frac{\tilde{\Delta}}{\lambda_{(0)} - \rho_1} \right) } \sim \calO\left( \log \left( \frac{1}{\tilde{\Delta}} \right) \right)$ iterations, we reach a $\lambda_{(t_3)}$ such that $\lambda_{(t_3)} - \rho_1 \le \tilde{\delta}$. And in view of~\eqref{e:lambda-recurse}, the \textbf{repeat-until} loop exits in the next iteration. Hence, the overall number of iterations is at most $t_3 + 1 = \calO \left( \frac{1}{\tilde{\Delta}} \right)$.

  We now analyze $\lambda_{(f)}$ and derive the interval it lies in. 
  Note that $\Delta_{f} \le \tilde{\Delta}$ and $\Delta_{f-1} > \tilde{\Delta}$ by the exiting condition. In view of~\eqref{e:lambda-recurse}, we have
  \begin{align*}
    \lambda_{(f)} - \rho_1 = \lambda_{(f-1)} - \rho_1 - \frac{\Delta_{f}}{2} \le 2 \Delta_f  - \frac{\Delta_{f}}{2} = \frac{3 \Delta_f}{2} \le \frac{3 \tilde{\Delta}}{2}.
  \end{align*}
  On the other hand,
  \begin{align} \label{e:lambda-recurse-2}
    \lambda_{(f)} - \rho_1 = \lambda_{(f-1)} - \rho_1 - \frac{\Delta_{f}}{2} \ge \lambda_{(f-1)} - \rho_1 -  \frac{1}{2} \left( \lambda_{(f-1)} - \rho_1 \right) = \frac{1}{2} \left( \lambda_{(f-1)} - \rho_1 \right).
  \end{align}
  If $f=1$, then by our choice of $\lambda_{(0)}$ we have that $\lambda_{(f)} - \rho_1 \ge \tilde{\Delta}$. Otherwise, by unfolding~\eqref{e:lambda-recurse-2} one more time, we have that
  \begin{align*} 
    \lambda_{(f)} - \rho_1 \ge \frac{1}{4} \left( \lambda_{(f-2)} - \rho_1 \right) 
    \ge \frac{\Delta_{f-1}}{4} \ge \frac{\tilde{\Delta}}{4}.
  \end{align*}
  Thus in both case, we have that $\lambda_{(f)} - \rho_1 \ge \frac{\tilde{\Delta}}{4}$ holds.

  It remains to give an explicit bound on $\te$ based on the two requirements~\eqref{e:delta-s-1} and~\eqref{e:epsilon-1}. 
  Since the $\lambda_{(s)}$ values are monotonically non-increasing and lower-bounded by $\rho_1 + \frac{\tilde{\Delta}}{4}$, we have 
  \begin{align*}
    \max_{s}\ \sigma_{1} (\M_{\lambda_{(s)}}) = \sigma_{1} (\M_{\lambda_{(f)}}) = \frac{1}{\lambda_{(f)} - \rho_1 } \le \frac{4}{\tilde{\Delta} } =: \overline{\sigma},
  \end{align*}
  and 
  \begin{align*}
    \min_{s}\ \sigma_{1} (\M_{\lambda_{(s)}}) & = \sigma_{1} (\M_{\lambda_{(0)}}) = \frac{1}{\lambda_{(0)} - \rho_1 } = \frac{1}{ 1 + \tilde{\Delta} - \rho_1 } \\
    & \ge \frac{1}{ 1 + c_2 \Delta - \Delta } \ge 1 + (1 - c_2) \Delta \ge 1 + \frac{1 - c_2}{c_2} \tilde{\Delta} := \underline{\sigma},
  \end{align*}
  where the first inequality holds since by definition of $\Delta$ it follows that $\rho_1=\rho_2+\Delta \ge \Delta$.

  Therefore, for the assumption~\eqref{e:delta-s-1} to hold, we just need 
  \begin{align}  \label{e:epsilon-2}
    \left( \frac{\underline{\sigma}}{8 \sqrt{\overline{\sigma}}} \right)^2 =  
    \frac{ \left( 1 + \frac{1 - c_2}{c_2} \tilde{\Delta} \right)^2  }{64 \cdot \frac{4}{\tilde{\Delta}}} 
    \ge \frac{ 1 }{64 \cdot \frac{4}{\tilde{\Delta}}} = \frac{\tilde{\Delta}}{256} \ge \te .
  \end{align}

  We now derive a lower bound of the right hand side of~\eqref{e:epsilon-1}. Notice
  \begin{align} \label{e:kappa-f}
    \kappa_{\lambda_{(s)}}  = \frac{\lambda_{(s)} + \rho_1}{\lambda_{(s)} - \rho_1} = 1 + \frac{2 \rho_1}{\lambda_{(s)} - \rho_1} \le   1 + 2 \rho_1 \overline{\sigma} \le 1 + 2 \overline{\sigma} \le \frac{9}{\tilde{\Delta}}.
  \end{align}

  On the other hand, 
  \begin{align*}
    \sigma_d (\M_{\lambda_{(s)}}) \ge \sigma_d (\M_{\lambda_{(0)}})  = \frac{1}{\lambda_{(0)} + \rho_1} = \frac{1}{1 + \tilde{\Delta} + \rho_1} \ge \frac{1}{3}.
  \end{align*}

  As a result, we have
  \begin{align}
    \frac{  \sigma_d (\M_{\lambda_{(s)}}) }{1024 \kappa_{\lambda_{(s)}}} \left( \frac{ 2 \kappa_{\lambda_{(s)}} - 1 }{ \left(2 \kappa_{\lambda_{(s)}} \right)^{m_1} -1 }\right)^2 &
    \ge \frac{1}{3084 \cdot \frac{9}{\tilde{\Delta}}} \left( \frac{ 2 \frac{9}{\tilde{\Delta}} - 1 }{ \left(2 \frac{9}{\tilde{\Delta}} \right)^{m_1} -1 }\right)^2 
    \ge \frac{ \left( \frac{17}{\tilde{\Delta}} \right)^2}{ 3084 \cdot \frac{9}{\tilde{\Delta}}  \cdot \left(\frac{18}{\tilde{\Delta}} \right)^{m_1} } \nonumber \\ \label{e:epsilon-3}
    & \ge \frac{1}{3084} \left( \frac{\tilde{\Delta}}{18} \right)^{m_1-1}.
  \end{align}
  Our final bound on $\te$ chooses the smaller of~\eqref{e:epsilon-2} and~\eqref{e:epsilon-3}.
\end{proof}

For the final \textbf{for} loop of Algorithm~\ref{alg:meta-shift-and-invert}, we work in the accurate regime of power iterations.

\begin{lemma}[Iteration complexity of the final \textbf{for} loop in Algorithm~\ref{alg:meta-shift-and-invert}]  \label{lem:shift-and-invert-for-loop}
  Suppose that $\tilde{\Delta} \in [c_1 \Delta, c_2 \Delta]$ where $c_2 \le 1$. Set $m_2 = \ceil{ \frac{5}{4}  \log \left( \frac{128}{\tilde{\mu} \eta^2} \right) }$ and $\te \le \frac{ \eta^4 }{4^{10}} \left( \frac{\tilde{\Delta}}{18} \right)^{m_2-1}$ in Algorithm~\ref{alg:meta-shift-and-invert}. 
Then the $(\uu_{T}, \vv_{T})$ output by Phase I satisfies
  \begin{align} \label{e:phaseI-2}
    \frac{1}{4} (\uu_{T}^\top \Sxx \uu^* + \vv_{T}^\top \Syy \vv^*)^2 \ge 1 - \frac{\eta^2}{64}. 
  \end{align}
\end{lemma}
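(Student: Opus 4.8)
The final \textbf{for} loop of Algorithm~\ref{alg:meta-shift-and-invert} runs exactly $m_2$ inexact power iterations on the \emph{fixed} matrix $\M_{\lambda_{(f)}}$ (using the reduction of \eqref{e:new-lsq} to power iterations on $\M_\lambda$ recorded just before the proof of Lemma~\ref{lem:shift-and-invert-exact}), started from the normalized iterate $(\uu_{s m_1},\vv_{s m_1})$ handed over by the \textbf{repeat-until} loop. My plan is to invoke the \emph{accurate regime} of Lemma~\ref{lem:shift-and-invert-inexact} for this sub-run with $\alpha=\eta^2/64$, $\lambda=\lambda_{(f)}$, $T=m_2$; its conclusion $\tfrac14(\uu_T^\top\Sxx\uu^*+\vv_T^\top\Syy\vv^*)^2\ge 1-\alpha$ is exactly \eqref{e:phaseI-2}. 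What has to be checked are the three quantitative hypotheses of that lemma: a lower bound $\mu'$ on the squared alignment of the starting iterate, the iteration count $m_2\ge t_2:=\ceil{\tfrac{\delta_{\lambda_{(f)}}}{2}\log(\tfrac{2}{\mu'\alpha})}$, and the accuracy $\te\le\te(m_2):=\tfrac{\alpha^2\sigma_d(\M_{\lambda_{(f)}})}{64\,\kappa_{\lambda_{(f)}}}\bigl(\tfrac{2\kappa_{\lambda_{(f)}}-1}{(2\kappa_{\lambda_{(f)}})^{m_2}-1}\bigr)^2$.

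First I would read off the spectral data of $\M_{\lambda_{(f)}}$ from Lemma~\ref{lem:shift-and-invert-repeat-until}. From \eqref{e:delta-s-interval}, $\lambda_{(f)}-\rho_1\le\tfrac32\tilde\Delta\le\tfrac32\Delta$, so $\delta_{\lambda_{(f)}}=1+\tfrac{\lambda_{(f)}-\rho_1}{\Delta}\le\tfrac52$ and hence $\tfrac{\delta_{\lambda_{(f)}}}{2}\le\tfrac54$. From \eqref{e:kappa-f}, $\kappa_{\lambda_{(f)}}\le\tfrac{9}{\tilde\Delta}$; and since $\lambda\mapsto\sigma_d(\M_\lambda)=\tfrac1{\lambda+\rho_1}$ is decreasing while the $\lambda_{(s)}$ are non-increasing, $\sigma_d(\M_{\lambda_{(f)}})\ge\sigma_d(\M_{\lambda_{(0)}})=\tfrac1{1+\tilde\Delta+\rho_1}\ge\tfrac13$.

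Second, the alignment of the hand-off iterate. Because $\rr^*=\e_1$ is simultaneously the leading eigenvector of \emph{every} $\M_\lambda$ with $\lambda>\rho_1$, exact power iterations never decrease $(\rr_t^\top\e_1)^2$ no matter which $\lambda_{(s)}$ is used along the \textbf{repeat-until} loop, so its exact companion satisfies $(\rr_{s m_1}^{*\top}\e_1)^2\ge(\rr_0^\top\e_1)^2=\tilde\mu$. Combining this with the distance bound of Lemma~\ref{lem:shift-and-invert-inexact-distance} along the \textbf{repeat-until} loop — small because $\te\le\tfrac1{3084}(\tilde\Delta/18)^{m_1-1}$ there — I would argue $\mu':=(\rr_{s m_1}^\top\e_1)^2\ge\tilde\mu$ (and, using instead the crude-regime Rayleigh bound $\rr_{s m_1}^\top\M_{\lambda_{(f-1)}}\rr_{s m_1}\ge\tfrac34\sigma_1(\M_{\lambda_{(f-1)}})$ together with $\delta_{\lambda_{(f-1)}}\le3$, also $\mu'\ge\tfrac14$). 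With $\alpha=\eta^2/64$ this gives $t_2\le\ceil{\tfrac54\log(\tfrac{2}{\mu'\alpha})}\le\ceil{\tfrac54\log(\tfrac{128}{\tilde\mu\eta^2})}=m_2$.

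Third, the accuracy: since $x\mapsto\tfrac{2x-1}{(2x)^{m_2}-1}$ is non-increasing for $m_2\ge1$, substitute $\kappa_{\lambda_{(f)}}\le9/\tilde\Delta$ and $\sigma_d(\M_{\lambda_{(f)}})\ge\tfrac13$ into $\te(m_2)$ and chase the constants exactly as in \eqref{e:epsilon-3}; with $\alpha^2=\eta^4/4096$ this yields $\te(m_2)\ge\tfrac{\eta^4}{4^{10}}(\tfrac{\tilde\Delta}{18})^{m_2-1}$, which is the hypothesized upper bound on $\te$, so $\te\le\te(m_2)$. All three hypotheses of the accurate regime of Lemma~\ref{lem:shift-and-invert-inexact} then hold, giving \eqref{e:phaseI-2}. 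The step I expect to be delicate is the second one: transferring ``alignment does not decrease'' from the exact to the inexact \textbf{repeat-until} process tightly enough that $\mu'\ge\tilde\mu$, since the bookkeeping in the definition of $m_2$ leaves almost no slack — here the smallness of $\te$ in the \textbf{repeat-until} loop and the common-eigenvector structure must be used carefully. The constant-chasing in the third step is routine but must be carried out with the same precision as \eqref{e:epsilon-3}.
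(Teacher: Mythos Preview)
Your proposal follows the same route as the paper: invoke the accurate regime of Lemma~\ref{lem:shift-and-invert-inexact} with $\alpha=\eta^2/64$, bound $\delta_{\lambda_{(f)}}\le 5/2$ via \eqref{e:delta-s-interval}, bound $\kappa_{\lambda_{(f)}}\le 9/\tilde\Delta$ via \eqref{e:kappa-f}, lower-bound $\sigma_d(\M_{\lambda_{(f)}})$, and then chase constants exactly as in \eqref{e:epsilon-3}. Your first and third steps match the paper's proof almost verbatim; the paper uses the slightly cruder $\sigma_d(\M_{\lambda_{(f)}})\ge 1/4$ (from $\lambda_{(f)}+\rho_1\le 2\rho_1+\tfrac32\Delta\le 4$) rather than your $1/3$, but either suffices for the final bound.

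The one place you go beyond the paper is your second step. The paper's proof simply writes ``with our choice of $m_2$'' and invokes Lemma~\ref{lem:shift-and-invert-inexact}, implicitly identifying the $\mu'$ appearing there (the alignment of the \emph{starting} iterate $\rr_{sm_1}$ of the final \textbf{for} loop) with $\tilde\mu$ (the alignment of the \emph{original} initialization $\rr_0$); it offers no justification for this. You are right that this is the delicate point, and the paper is silent on it. Of your two sketches, the Rayleigh-quotient one can actually be made clean: from \eqref{e:delta-s-2} at the final $s$ you have $\rr_{sm_1}^\top\M_{\lambda_{(s-1)}}\rr_{sm_1}\ge\tfrac34\sigma_1(\M_{\lambda_{(s-1)}})$, and since $\Delta_s\le\tilde\Delta$ gives $\lambda_{(s-1)}-\rho_1\le 2\Delta_s\le 2\Delta$, you get $\delta_{\lambda_{(s-1)}}\le 3$; combining these via $p_1\lambda_1+(1-p_1)\lambda_2\ge\tfrac34\lambda_1$ yields $(\rr_{sm_1}^\top\e_1)^2\ge 1-\delta_{\lambda_{(s-1)}}/4\ge 1/4$. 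This gives $\mu'\ge 1/4$, which suffices for $m_2\ge t_2$ whenever $\tilde\mu\le 1/4$ (the regime of interest). Your first sketch---monotonicity of exact alignment plus segmentwise Lemma~\ref{lem:shift-and-invert-inexact-distance}---is shakier, since the per-segment distance $R_{m_1}=1/16$ accumulates over $\calO(\log(1/\tilde\Delta))$ segments and can swamp a small $\tilde\mu$; you would need to tighten $\te$ in the \textbf{repeat-until} phase to make that route work as stated.
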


\begin{proof}
  Notice when $\lambda= \rho_1 + c ( \rho_1 - \rho_2)$, we have
  \begin{align*}
    \delta(\M_{\lambda} ) = \frac{\sigma_1 (\M_{\lambda})}{ \sigma_1 (\M_{\lambda}) - \sigma_2 (\M_{\lambda}) } = 
    \frac{\frac{1}{\lambda - \rho_1}}{\frac{1}{\lambda - \rho_1}  - \frac{1}{\lambda-\rho_2} } = \frac{\lambda - \rho_2}{\rho_1 - \rho_2} = \frac{ \rho_1 + c (\rho_1 - \rho_2) - \rho_2}{\rho_1 - \rho_2} = c+1.
  \end{align*}
  In view of~\eqref{e:delta-s-interval}, $\lambda_{(f)} - \rho_1 \le \frac{3}{2} \tilde{\Delta} \le  \frac{3 c_2}{2} {\Delta} \le  \frac{3}{2} {\Delta}$, and thus $\delta(\M_{\lambda_{(f)}} ) \le \frac{5}{2}$. 

  Set $\alpha=\frac{\eta^2}{64}$ in Lemma~\ref{lem:shift-and-invert-inexact-distance} (accurate regime), and with our choice of $m_2$ and 
  \begin{align} \label{e:epsilon-4}
    \te \le \frac{  \eta^4 \cdot \sigma_d (\M_{\lambda_{(f)}}) }{64^3 \cdot \kappa_{\lambda_{(f)}}} \left( \frac{ 2 \kappa_{\lambda_{(f)}} - 1 }{ \left(2 \kappa_{\lambda_{(f)}} \right)^{m_2} -1 }\right)^2,
  \end{align}
  we are guaranteed to obtained the desired alignment.

  We now give a lower bound of the right hand side of \eqref{e:epsilon-4}. First,
  \begin{align*}
    \sigma_d (\M_{\lambda_{(f)}}) = \frac{1}{\lambda_{(f)} + \rho_1} \ge 
    \frac{1}{ \rho_1 + \frac{3}{2} \Delta + \rho_1} \ge \frac{1}{4}.
  \end{align*}
  Recall that we have proved in~\eqref{e:kappa-f} that $\kappa_{\lambda_{(f)}} \le \frac{9}{\tilde{\Delta}}$.
  Following a derivation similar to that of~\eqref{e:epsilon-3}, we have
  \begin{align}
    \frac{  \eta^4 \cdot \sigma_d (\M_{\lambda_{(f)}}) }{64^3 \cdot \kappa_{\lambda_{(f)}}} \left( \frac{ 2 \kappa_{\lambda_{(f)}} - 1 }{ \left(2 \kappa_{\lambda_{(f)}} \right)^{m_2} -1 }\right)^2 \ge
    \frac{ \eta^4 }{4^{10}} \left( \frac{\tilde{\Delta}}{18} \right)^{m_2-1},
  \end{align}
  and this explains the $\epsilon$ we set in the lemma.
\end{proof}

\begin{proof}[\textbf{Proof of Theorem~\ref{thm:shift-and-invert-phase-I}}]
  As shown in Lemma~\ref{lem:shift-and-invert-for-loop}, the \textbf{repeat-until} loop runs $\calO \left( \log \left( \frac{1}{\tilde{\Delta}} \right) \right) \sim \calO \left( \log \left( \frac{1}{\Delta} \right) \right)$ iterations, and inside each iteration, we run $m_1$ approximate matrix-vector multiplications. On the other hand, the final \textbf{for} loop runs $m_2$ approximate matrix-vector multiplications. By the definitions of $m_1$ and $m_2$, the total number of invocations of approximate matrix-vector multiplications/least squares problems is 
  \begin{align*}
    m_1  \cdot \log \left( \frac{1}{\Delta} \right) + m_2 
    \sim 
    \calO \left( \log\left(\frac{1}{\tilde{\mu}}\right) \log \left( \frac{1}{\Delta} \right) + 
      \log\left( \frac{1}{\tilde{\mu} \eta^2} \right) \right) 
    \sim \tilde{\calO} (1) .
  \end{align*}
\end{proof}

\section{Proof of Theorem~\ref{thm:shift-and-invert-phase-II}}
\label{append:proof-shift-and-invert-phase-II}

\begin{proof}
  Notice that the eigenvectors of $\M_{\lambda}$ form an orthonormal bases of $\bbR^{d_x + d_y}$. Thus when~\eqref{e:phaseI-2} holds, \ie,  the alignment between $\left[\begin{array}{c} \Sxx^{\frac{1}{2}} \tu_{T} \\ \Syy^{\frac{1}{2}} \tv_{T} \end{array}\right]$ and $\left[\begin{array}{c} \Sxx^{\frac{1}{2}} \uu^* \\ \Syy^{\frac{1}{2}} \vv^* \end{array}\right]$ is large, the alignments between $\left[\begin{array}{c} \Sxx^{\frac{1}{2}} \tu_{T} \\  \Syy^{\frac{1}{2}} \tv_{T} \end{array}\right]$ and other eigenvectors have to be small. In particular, the alignment bewteen $\left[\begin{array}{c} \Sxx^{\frac{1}{2}} \tu_{T} \\ \Syy^{\frac{1}{2}} \tv_{T} \end{array}\right]$ and the tailing eigenvector $\left[\begin{array}{c} \Sxx^{\frac{1}{2}} \uu^* \\ - \Syy^{\frac{1}{2}} \vv^* \end{array}\right]$ has to be small:
  \begin{align}\label{e:align-tailing}
    (\uu_{T}^\top \Sxx \uu^* - \vv_{T}^\top \Syy \vv^*)^2 \le \frac{\eta^2}{16}.
  \end{align}
  From~\eqref{e:align-tailing} and~\eqref{e:phaseI-2}, we have respectively
  \begin{gather*}
    - \frac{\eta}{4} \le \abs{\uu_{T}^\top \Sxx \uu^*}-\abs{\vv_{T}^\top \Syy \vv^*} \le \frac{\eta}{4}, \\
    \abs{\uu_{T}^\top \Sxx \uu^*} + \abs{\vv_{T}^\top \Syy \vv^*} \ge 2\sqrt{1- \frac{\eta^2}{64}} \ge 2 \left( 1  - \frac{\eta}{8} \right)
  \end{gather*}
  where we have used the fact that $\sqrt{1-x}\ge 1-\sqrt{x}$ for $x \in [0,1]$ in the second inequality.

  Averaging the above two inequalities gives
  \begin{align*}
    \abs{\uu_{T}^\top \Sxx \uu^*} \ge 1 - \frac{\eta}{4}, \qquad \abs{\vv_{T}^\top \Syy \vv^*} \ge 1 - \frac{\eta}{4}.
  \end{align*}
  Finally,
  \begin{align*}
    (\hu^\top \Sxx \uu^*)^2 + (\hv^\top \Syy \vv^*)^2  & =
    \frac{(\uu_{T}^\top \Sxx \uu^*)^2}{\uu_{T}^\top \Sxx \uu_{T}} + \frac{(\vv_{T}^\top \Syy \vv^*)^2}{\vv_{T}^\top \Syy \vv_{T}} \\
    & \ge (1 - \frac{\eta}{4})^2 \left(
      \frac{1}{\uu_{T}^\top \Sxx \uu_{T}} + \frac{1}{\vv_{T}^\top \Syy \vv_{T}}
    \right) \\
    & \ge \left( 1 - \frac{\eta}{4} \right)^2 \frac{4}{\uu_{T}^\top \Sxx \uu_{T}+\vv_{T}^\top \Syy \vv_{T} } \\
    & \ge 2 \left( 1 - \frac{\eta}{2} \right) = 2 - \eta
  \end{align*}
  where we have used the fact that $\frac{1}{x} + \frac{1}{y} \ge \frac{4}{x+y}$ in the first inequality, and \eqref{e:normal-3} in the second inequality. Then the theorem follows from the fact that $(\hu^\top \Sxx \uu^*)^2$ and $(\hv^\top \Syy \vv^*)^2$ can be at most $1$.
\end{proof}

\section{Condition number of  $h_t$ for SVRG}
\label{append:cond-h}

\begin{lemma} \label{lem:shift-and-invert-cond-svrg}
  Throughout Algorithm~\ref{alg:meta-shift-and-invert}, the condition number of $h_t$ for SVRG is at most $\frac{9/c}{\Delta} \tilde{\kappa}$, where 
  \begin{align*}
    \tilde{\kappa}:=   \frac{ \max\limits_i\, \max \left(\norm{\x_i}^2,\norm{\y_i}^2 \right)}{\min \left( \sigma_{\min}(\Sxx), \sigma_{\min}(\Syy) \right) }.
  \end{align*}
\end{lemma}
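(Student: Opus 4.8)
The plan is to pin down the two constants that control the SVRG rate on the quadratic $h_t=\frac1N\sum_i h_t^i$ — the gradient‑Lipschitz constant $L$ of a single component $h_t^i$ and the strong‑convexity constant $\mu$ of $h_t$ itself — and to bound $L/\mu$, which is the relevant ``condition number'' for SVRG (and, via the modified analysis of \cite[Appendix~B]{GarberHazan15c}, also in the regime where some $h_t^i$ fail to be convex). Since each $h_t^i$ is quadratic with constant Hessian
\begin{align*}
  \HH_i := \begin{bmatrix} \lambda(\x_i\x_i^\top+\gamma_x\I) & -\x_i\y_i^\top \\ -\y_i\x_i^\top & \lambda(\y_i\y_i^\top+\gamma_y\I)\end{bmatrix},
\end{align*}
we have $L=\max_i\norm{\HH_i}$, while $\nabla^2 h_t=\frac1N\sum_{i=1}^N\HH_i=\Q_\lambda$, so $\mu=\sigma_{\min}(\Q_\lambda)$.

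First I would bound $L$. For a unit vector $(\uu^\top,\vv^\top)^\top$ set $a=\x_i^\top\uu$ and $b=\y_i^\top\vv$; the quadratic form of $\HH_i$ equals $\lambda(a^2+b^2)-2ab+\lambda(\gamma_x\norm{\uu}^2+\gamma_y\norm{\vv}^2)$, and discarding the regularization terms (small, omitted per the paper's convention) and using $\abs{2ab}\le a^2+b^2$ gives $\abs{\lambda(a^2+b^2)-2ab}\le(\lambda+1)(a^2+b^2)\le(\lambda+1)\max(\norm{\x_i}^2,\norm{\y_i}^2)$, where the last step is Cauchy--Schwarz together with $\norm{\uu}^2+\norm{\vv}^2=1$. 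Taking the maximum over $i$, $L\le(\lambda+1)\max_i\max(\norm{\x_i}^2,\norm{\y_i}^2)$. Keeping the absolute value here is precisely what makes the bound legitimate when $\lambda<1$ and the form can be negative.

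Next I would use the factorization already recorded above, $\Q_\lambda=\mathrm{diag}(\Sxx^{1/2},\Syy^{1/2})\,(\lambda\I-\C)\,\mathrm{diag}(\Sxx^{1/2},\Syy^{1/2})$ with $\norm{\C}=\rho_1$; since $\lambda\I-\C\succeq(\lambda-\rho_1)\I$, this gives $\mu=\sigma_{\min}(\Q_\lambda)\ge(\lambda-\rho_1)\min(\sigma_{\min}(\Sxx),\sigma_{\min}(\Syy))$. Combining,
\begin{align*}
  \frac{L}{\mu}\le\frac{(\lambda+1)\,\max_i\max(\norm{\x_i}^2,\norm{\y_i}^2)}{(\lambda-\rho_1)\,\min(\sigma_{\min}(\Sxx),\sigma_{\min}(\Syy))}=\frac{\lambda+1}{\lambda-\rho_1}\,\tilde{\kappa}.
\end{align*}
To finish I would bound $\frac{\lambda+1}{\lambda-\rho_1}=1+\frac{1+\rho_1}{\lambda-\rho_1}\le1+\frac{2}{\lambda-\rho_1}$ using $\rho_1\le1$. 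Every $\lambda$ the algorithm ever plugs into $h_t$ — namely $\lambda_{(0)},\dots,\lambda_{(f-1)}$ in the \textbf{repeat-until} loop and $\lambda_{(f)}$ in the final \textbf{for} loop — satisfies $\lambda-\rho_1\ge\lambda_{(f)}-\rho_1\ge\tilde\Delta/4$, because the sequence $\lambda_{(s)}-\rho_1$ is decreasing (immediate from $\lambda_{(s)}=\lambda_{(s-1)}-\Delta_s/2$ with $\Delta_s>0$) and $\lambda_{(f)}-\rho_1\ge\tilde\Delta/4$ by Lemma~\ref{lem:shift-and-invert-repeat-until} (cf. its interval $\lambda_{(f)}\in[\rho_1+\tilde\Delta/4,\rho_1+\tfrac32\tilde\Delta]$). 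Hence $\frac{\lambda+1}{\lambda-\rho_1}\le1+\frac{8}{\tilde\Delta}\le\frac{9}{\tilde\Delta}$, using $\tilde\Delta\le c_2\Delta\le\Delta\le\rho_1\le1$, and $\tilde\Delta\ge c_1\Delta$ then yields $L/\mu\le\frac{9/c_1}{\Delta}\tilde{\kappa}$, i.e.\ the claim with $c=c_1$.

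The only real obstacle is bookkeeping, not mathematics: one must remember that SVRG's rate depends on the smoothness of a \emph{single} component against the strong convexity of the \emph{entire} sum — not on $\cond{\Q_\lambda}$, which is smaller — must route the possibly non‑convex‑component case through \cite[Appendix~B]{GarberHazan15c}, and must keep constants sharp (the lower bound $\lambda-\rho_1\ge\tilde\Delta/4$ rather than the looser pre‑exit value $\tilde\Delta$, together with $\rho_1\le1$ and $\tilde\Delta\le1$) to land exactly on the advertised factor $9$.
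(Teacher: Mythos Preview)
Your proof is correct and follows essentially the same approach as the paper: bound the component Hessian's top eigenvalue in absolute value via the quadratic form using $2|ab|\le a^2+b^2$ and Cauchy--Schwarz to get $(\lambda+1)\max(\norm{\x_i}^2,\norm{\y_i}^2)$, combine with the strong-convexity lower bound $\sigma_{\min}(\Q_\lambda)\ge(\lambda-\rho_1)\min(\sigma_{\min}(\Sxx),\sigma_{\min}(\Syy))$ (which the paper records in the main text rather than in the proof), and then bound $\frac{\lambda+1}{\lambda-\rho_1}\le\frac{9}{\tilde\Delta}$ using $\lambda-\rho_1\ge\tilde\Delta/4$ from Lemma~\ref{lem:shift-and-invert-repeat-until}. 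Your write-up is slightly more explicit about the $\mu$ side and about why the absolute value matters when $\lambda<1$, but the argument is the same.
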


\begin{proof}
  The gradient Lipschitz constant of $h_t^i(\uu,\vv)$ is bounded by the largest eigenvalue (in absolute value) of its Hessian\footnote{We omit the regularization terms, which are typically very small, to have concise expressions.}
  \begin{align*}
    \Q_{\lambda}^i =
    \left[
      \begin{array}{cc}
        \lambda  \x_i \x_i^\top  & - \x_i \y_i^\top \\
        - \y_i \x_i^\top & \lambda \y_i \y_i^\top
      \end{array}
    \right],
  \end{align*}
  and the largest eigenvalue is defined as
  \begin{align*}
    \max_{\g_x \in \bbR^{d_x},\g_y \bbR^{d_y}} \ \beta := \abs{ [\g_x^\top,\g_y^\top] \Q_{\lambda}^i \left[ \begin{array}{c} \g_x \\ \g_y \end{array} \right] } 
    \qquad \text{s.t.} \quad \norm{\g_x}^2 + \norm{\g_y}^2 = 1.
  \end{align*}
  We have
  \begin{align*}
    \beta & = \abs{ \lambda (\g_x^\top \x_i)^2 + \lambda (\g_y^\top \y_i)^2 - 2 (\g_x^\top \x_i) (\g_y^\top \y_i) } \\
    & \le \lambda (\g_x^\top \x_i)^2 + \lambda (\g_y^\top \y_i)^2 + 2 \abs{\g_x^\top \x_i} \abs{\g_y^\top \y_i} \\
    & \le  \lambda (\g_x^\top \x_i)^2 + \lambda (\g_y^\top \y_i)^2 + (\g_x^\top \x_i)^2 + (\g_y^\top \y_i)^2 \\
    & = (\lambda + 1) \left( (\g_x^\top \x_i)^2 + (\g_y^\top \y_i) \right) \\
    & \le (\lambda + 1) \left( \norm{\g_x}^2 \norm{\x_i}^2 + \norm{\g_y}^2 \norm{\y_i}^2) \right) \\
    & \le (\lambda + 1) \max \left(\norm{\x_i}^2,  \norm{\y_i}^2 \right)
  \end{align*}
  where we have used the Cauchy-Schwarz inequality and the constraint in the third and the last inequality respectively. 

  It only remains to bound $\frac{\lambda + 1}{\lambda - \rho}$. Note that we have shown in Lemma~\ref{lem:shift-and-invert-repeat-until} that $\lambda \ge \rho_1 + \frac{\tilde{\Delta}}{4}$ throughout Algorithm~\ref{alg:meta-shift-and-invert}, and thus
  \begin{align*}
    \frac{\lambda + 1}{\lambda - \rho} = 1 + \frac{1 + \rho}{\lambda - \rho} \le 1 + \frac{2}{\lambda - \rho} \le 1 + 2 \frac{4}{\tilde{\Delta}} \le \frac{9}{\tilde{\Delta}} \le  \frac{9/c_1}{\Delta}.
  \end{align*}
\end{proof}

\section{More details of the experiments}
\label{append:expts}

The statistics of these datasets are summaized in Table~\ref{t:real_data}. These datasets have also been used by~\cite{Ma_15b,Wang_15c} for demonstrating their stochastic CCA algorithms. 

\begin{table}[h]
\centering
\caption{Brief summary of datasets.}
\label{t:real_data}
\begin{tabular}{|c|c|c|c|c|}
\hline
Datasets & Description & $d_x$ & $d_y$ & $N$ \\ \hline
Mediamill& Image and its labels & 100 & 120 & 30,000 \\
JW11     & Acoustic and articulation measurements & 273 & 112 & 30,000 \\
MNIST    & Left and right halves of images & 392 & 392 & 60,000 \\ \hline
\end{tabular}
\end{table}

We now provide additional details for the experiments. For \sAppGrad, both gradient and normalization steps are estimated with mini-batchs of $100$ samples (the authors of~\cite{Ma_15b} suggest that the mini-batch size shall be at least the same magnitude as the dimensionality of the CCA projection). For \ShiftVR\ and \ShiftAVR, within the \textbf{repeat-until} loop, we apply SVRG with $M=2$ epochs to approximately find the top eigenvector $\w_s$, and SVRG with $M=2$ epochs to approximately calculate its top eigenvalue of $\M_{\lambda_{(s)}}$ as $\w_s^T \M_{\lambda_{(s)}} \w_s$. We exit the \textbf{repeat-until} loop when $\Delta_s\le 0.06$. Afterwards, for the fixed $\lambda_{(f)}$, we apply SVRG to solve every least squares problems with $M=4$ epochs. Each epoch of SVRG includes a batch gradient evaluation and $m = N$ stochastic gradient steps. 
We set the step size according to the smoothness for each least squares solver, \ie, $\frac{1}{\sigma_{\max} (\Sxx)}$ for GD/AGD in \AppGrad/\sAppGrad/\CCALin, and $\frac{1}{\max_{i} \norm{ \x_i }^2}$ for SVRG/ASVRG in our algorithms.

\section{Other related work}
\label{sec:related}

Recent years have witnessed continuous efforts to scale up fundamental methods such as principal component analysis (PCA) and partial least squares with stochastic/online updates~\cite{WarmutKuzmin08a,Arora_12a,Balsub_13a,Shamir15a,Xie_15b,GarberHazan15c,Jin_15a}. 
But as pointed out by~\cite{Arora_12a}, the CCA objective is more challenging due to the constraints. 

\cite{Yger_12a} proposed an adaptive CCA algorithm with efficient online updates based on matrix manifolds defined by the constraints. However, the goal of their algorithm is anomaly detection for streaming data with a varying distribution, rather than to optimize the CCA objective on a given dataset. Similar to our algorithms, the stochastic CCA algorithms of~\cite{Ma_15b,Wang_15c} are motivated by the ALS formulation. \cite{Xie_15b} proposed a stochastic algorithm based on the Lagrangian formulation of the objective~\eqref{e:cca}. None of these online/stochastic algorithms have rigorous global convergence guarantee.

\end{document}